\newcommand{\iclr}[1]{\iftoggle{iclr}{#1}{}}
\newcommand{\arxiv}[1]{\iftoggle{iclr}{}{#1}}
\algrenewcommand\textproc{}%
\def\1{\bm{1}}
\def\eps{{\epsilon}}
\DeclareMathAlphabet{\mathsfit}{\encodingdefault}{\sfdefault}{m}{sl}
\SetMathAlphabet{\mathsfit}{bold}{\encodingdefault}{\sfdefault}{bx}{n}
\newcommand{\E}{\mathbb{E}}
\newcommand{\R}{\mathbb{R}}
\newcommand{\softmax}{\mathrm{softmax}}
\newcommand{\KL}{D_{\mathrm{KL}}}
\newcommand{\TV}{D_{\mathrm{TV}}}
\newcommand{\poly}{\mathrm{poly}}
\newcommand{\BR}{\mathbb{R}}
\newcommand{\calD}{\mathcal{D}}
\newcommand{\calF}{\mathcal{F}}
\newcommand{\calG}{\mathcal{G}}
\newcommand{\calH}{\mathcal{H}}
\newcommand{\calL}{\mathcal{L}}
\newcommand{\calN}{\mathcal{N}}
\newcommand{\calP}{\mathcal{P}}
\newcommand{\calS}{\mathcal{S}}
\newcommand{\wh}{\widehat}
\newcommand{\defeq}{\mathrel{\mathop:}=}
\newcommand{\norm}[1]{\left\lVert#1\right\rVert}
\newcommand{\Rank}{\text{Rank}}
\crefname{assumption}{assumption}{assumptions}
\Crefname{assumption}{Assumption}{Assumptions}
\crefname{remark}{remark}{remarks}
\Crefname{remark}{Remark}{Remarks}
\crefname{fact}{fact}{facts}
\Crefname{fact}{Fact}{Facts}
\Crefname{definition}{Definition}{Definitions}
\crefname{definition}{definition}{definitions}
\newcommand\xlabel[2][]{\phantomsection\def\@currentlabelname{#1}\label{#2}}
\theoremstyle{plain}
\newtheorem{theorem}{Theorem}
\newtheorem{lemma}[theorem]{Lemma}
\newtheorem{corollary}[theorem]{Corollary}
\newtheorem{fact}[theorem]{Fact}
\newtheorem{assumption}[theorem]{Assumption}
\newtheorem{definition}[theorem]{Definition}
\newtheorem{remark}{Remark}
\numberwithin{theorem}{section}
\newcommand{\ML}{\mathcal{L}}
\DeclareMathOperator{\PPr}{Pr}
\renewcommand{\Pr}{\PPr}
\newcommand{\MH}{\mathcal{H}}
\newcommand{\MF}{\mathcal{F}}
\newcommand{\vep}{\varepsilon}
\newcommand{\kld}[2]{D_{\mathsf{KL}}({#1} \| {#2})}
\newcommand{\kldavg}{D_{\mathsf{KL}}^{\mathsf{avg}}}
\newcommand{\MS}{\mathcal{S}}
\newcommand{\Fnon}{\MF^{\mathsf{nonsense}}}
\newcommand{\fnon}{f^{\mathsf{nonsense}}}
\newcommand{\htarget}{h_{\mathsf{targ}}}
\newcommand{\Lingen}{\textsc{Lingen}\xspace}
\newcommand{\Anon}{A^{\mathsf{nonsense}}}
\newcommand{\Hnon}{\MH^{\mathsf{nonsense}}}
\newcommand{\model}{M}
\newcommand{\logit}{L}
\newcommand{\sups}[1]{^{({#1})}}
\newcommand{\old}[1]{\ifnum\Comments=1 {\color{brown}  [OLD: #1]}\fi}
\newcommand{\noah}[1]{\ifnum\Comments=1 {\color{purple} [ng: #1]}\fi}
\newcommand{\allen}[1]{\ifnum\Comments=1 {\color{red} [al: #1]}\fi}
\newcommand{\shetty}[1]{\ifnum\Comments=1 {\color{blue} [avs: #1]}\fi}
\newcommand{\equalcontrib}{\textsuperscript{$\star$}}
\author{
Noah Golowich\equalcontrib \\
Microsoft Research \\
\texttt{noah.golowich@austin.utexas.edu}
\and
Allen Liu\equalcontrib \\
UC Berkeley \\
\texttt{aliu42@berkeley.edu}
\and
Abhishek Shetty\equalcontrib \\
MIT \\
\texttt{shetty@mit.edu}\footnote{Equal contribution.}}
\begin{document}

\maketitle

\begin{abstract}
  A major problem in the study of large language models %
  is to understand their inherent low-dimensional structure.  
We introduce an approach to study the low-dimensional structure of language models at a model-agnostic level: as sequential probabilistic models. 
We first empirically demonstrate that a wide range of modern language models exhibit \emph{low-rank} structure: in particular, matrices built from the model's logits for varying sets of prompts and responses have low approximate rank.
We then show that this low-rank structure can be leveraged for generation --- in particular, we can generate a response to a target prompt using a linear combination of the model's outputs on \emph{unrelated, or even nonsensical} prompts.

On the theoretical front, we observe that studying the approximate rank of language models in the sense discussed above yields a simple \emph{universal abstraction} whose theoretical predictions parallel our experiments. 
We then analyze the representation power of the abstraction and give provable learning guarantees.  %
\end{abstract}

\section{Introduction}
\label{sec:introduction}

Understanding the structure of language has been a long-standing goal in computer science, %
motivating various fundamental models \citep{shannon1951prediction,chomsky2002nature} such as Hidden Markov Models (HMMs), finite state automata, and formal grammars (see \citep{jm3}).

Such questions have taken on a new light in recent years with the advent of Large Language Models (LLMs); despite extensive study, though, their success at modeling language has largely eluded a mathematically rigorous understanding. Such an understanding, while of interest in its own right, is also crucial when it comes to evaluating the possible risks associated with LLMs. Indeed, a sizeable literature  has uncovered surprising ways to bypass the safety mechanisms of LLMs (e.g., \citep{wei2023jailbroken, zou2023universal}) as well as ways they can become misaligned during training \citep{razin2024unintentional, betley2025emergentmisalignmentnarrowfinetuning}.  Understanding and defending these vulnerabilities often requires mathematical insight into the model's workings \citep{elhage2021mathematical, olsson2022context, turner2023activation, arditi2024refusal}. %

One of the principal ingredients we still lack when it comes to obtaining a more solid theoretical foundation for studying LLMs are \emph{simple universal {abstractions}} that are realistic enough to make testable predictions about deployed LMs and are mathematically tractable enough to support provable guarantees.  Many such models have been proposed, ranging from simple types of transformers\arxiv{, such as a single layer of attention heads} \citep{sanford2024onelayer}, to low-depth circuits \citep{merrill2023parallelism}. While such frameworks have led to important insights, e.g., regarding expressivity, most of them are limited in the types of models they can represent (e.g., transformers of a certain depth \citep{merrill2025littledepthgoeslong}) or to certain types of tasks (e.g., RL fine-tuning \citep{foster2025isagood}). Further, they are often not very capable of making precise predictions about the structure and behavior of modern LLMs. 
\arxiv{ \begin{figure}[t]
    \centering

    \begin{subfigure}{0.4\textwidth}
        \includegraphics[width=\linewidth]{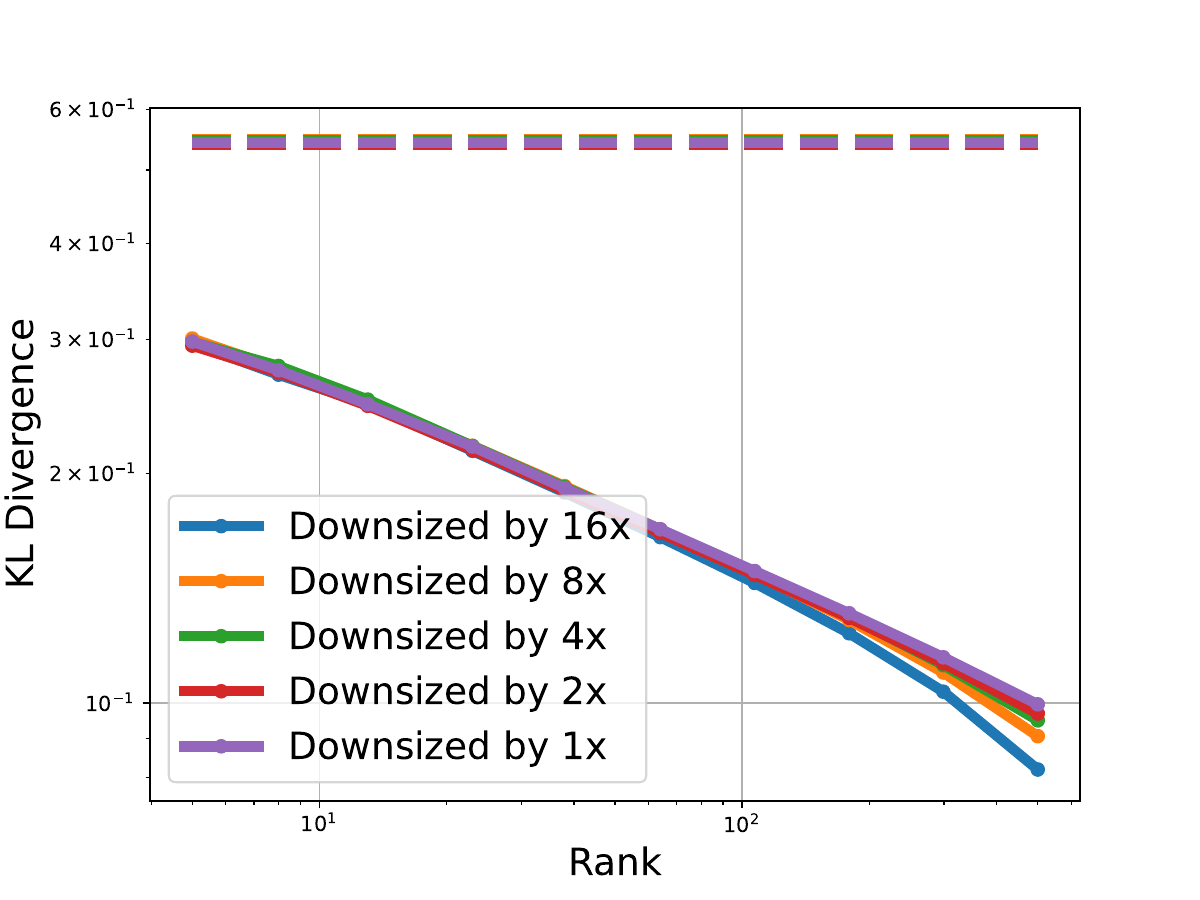}
        \caption{\centering OLMo-7b low-rank approximation}
        \label{fig:wiki-olmo7b-kl}
      \end{subfigure}
    \begin{subfigure}{0.4\textwidth}
        \includegraphics[width=\linewidth]{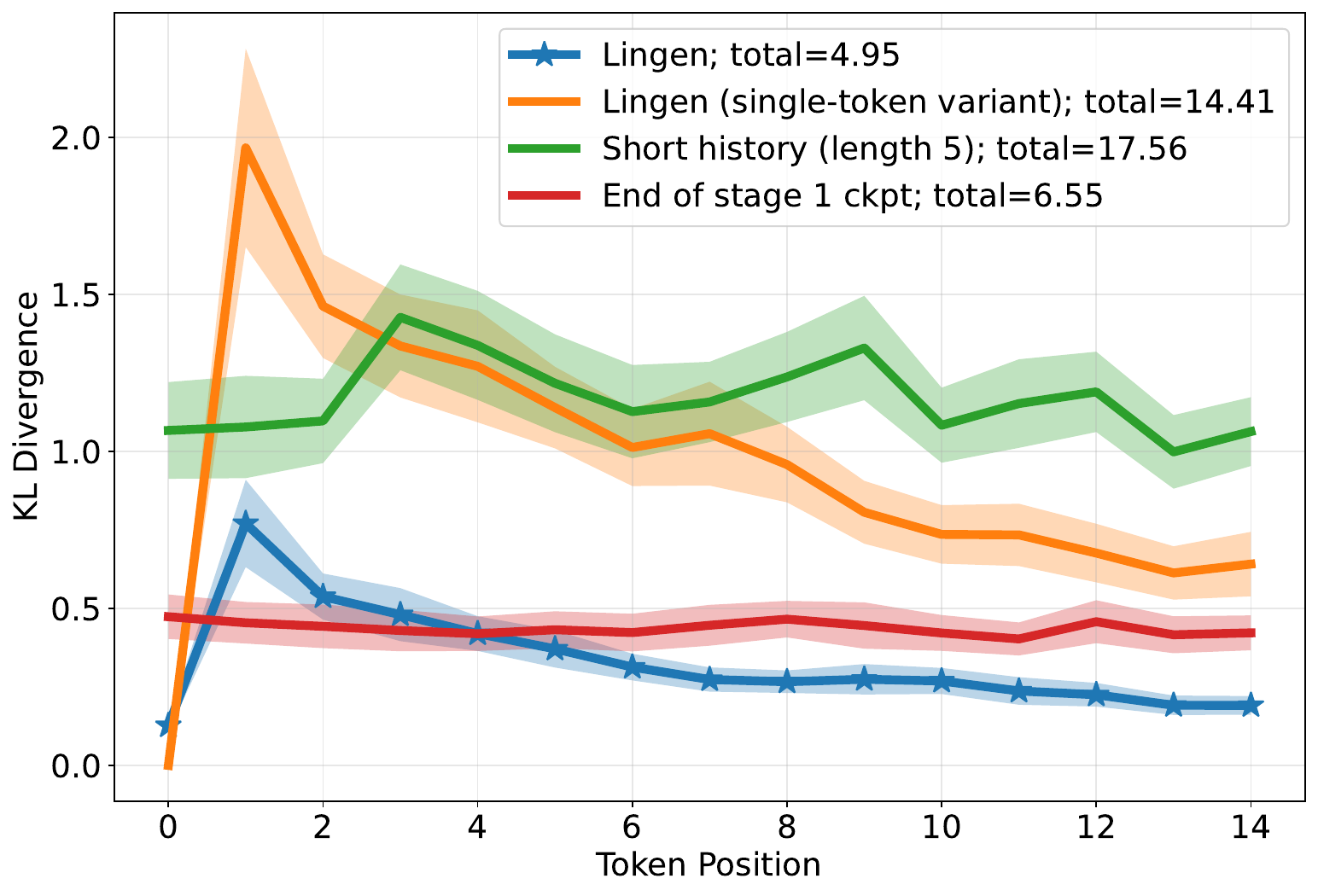}
        \caption{\centering Performance of \Lingen with OLMo-1b}
        \label{fig:lingen-gibberish}
      \end{subfigure}
      \caption{ \textbf{(a)}: Low-rank approximation error (measured by \emph{average KL divergence}; see \Cref{def:avg-kl-div}) of the extended logit matrix for {OLMo-7b}, and ranks 5-500. For fixed sets $\MH, \MF$, the approximation errors for the logit matrix $\ML_M(\MH, \MF)$ behave according to a similar power law as to those of various sub-matrices with $\{2, 4, 8, 16\}$-times fewer entries. Dashed line at top shows performance of a (suboptimal) rank-1 baseline. \textbf{(b)}: Performance of our generation procedure (\Lingen; star markers) which exploits the low-rank structure of the extended logit matrix to generate from a given ``target'' prompt by only querying the language model on nonsensical prompts unrelated to the target. We plot the KL divergence between \Lingen and the true language model (OLMo-1b) at each token position, as well as various baselines (solid lines; see \Cref{sec:exp-generation}). }
    \label{fig:intro}
\end{figure}
}

In this paper, inspired by the folklore belief that language possesses intrinsic low-dimensional structure, we propose an \emph{architecture-agnostic framework} which allows us to study how information in language is represented in low-dimensional subspaces. In particular, we study \emph{logit matrices} associated to any language model (\Cref{sec:logit-matrix}). %
This approach treats the language model as a sequential probabilistic map from sequences to sequences, sidestepping architecture-specific details.  
A starting point for our work is the observation that the logit matrices associated to modern autoregressive LLMs are well-approximated by low-rank matrices (\Cref{sec:low-rank-experiments}). Additionally, this low-rank structure can be exploited to exhibit several surprising phenomena, such as the ability to generate from a given model by only querying it on unrelated or even nonsensical prompts (\Cref{sec:shared-experiments,sec:exp-generation}). Moreover, such a low-rank assumption yields a rich theoretical landscape, from both a learning-theoretic and generative modeling standpoint (\Cref{sec:theory}).

\iclr{}
\subsection{Structure Arises from The Extended Logit Matrix}
\label{sec:logit-matrix}

Our goal is to study low-dimensional representations of language models: \emph{what aspects of the model should such a representation capture?} 
One natural requirement is that it allows us to compress the past in order to generate the future. Perhaps the most fundamental and extensively-studied type of such compressions are those which are \emph{linear}, in the following sense. For a sequence of tokens $h$ (e.g., a prompt), the compression consists of a vector $\phi(h) \in \BR^d$ so that the probability of observing any possible completion $f$ (e.g., a response) is proportional to $\exp(\langle \phi(h), \psi(f) \rangle)$, for some embedding vector $\psi(f) \in \BR^d$ depending only on $f$.

We remark that a very similar setup has been considered in the context of LLMs, \emph{when the sequence $f$ consists of a single token}: for essentially all modern LLMs, due to the structure of the unembedding matrix, the probability of observing a \emph{single token $y$} following a sequence $h$ is proportional to $ \exp(\langle \phi(h), \psi(y) \rangle)$, for some embedding maps $\phi, \psi$. This property has been termed 
the \emph{softmax bottleneck} \citep{yang2017breaking}, and has had a broad array of implications, ranging from expressivity and architecture \citep{press2016using, kanai2018sigsoftmax, ganea2019breaking, chang2022softmax, godey2024small} to security \citep{carlini2024stealing,finlayson2024logits}. 
{However, this observation only allows us to reason about one token in the future. } 
  The key insight of our work is that this low-dimensionality persists, even when we consider logits over longer sequences of tokens.

To study this low-dimensionality over longer sequences, we introduce the \emph{extended logit matrix} as our main object of study. Its rows are  indexed by all possible sequences of tokens, which we refer to as \emph{histories}. Roughly speaking, its columns are also indexed by all possible sequences, to be interpreted as possible \emph{completions} of a history; we refer to them as \emph{futures}. In essence, the entry of the extended logit matrix corresponding to a given $(h,f)$ pair and a language model $M$ is given by $\log \Pr_M[f \mid h]$, i.e., the logarithm of the probability that $M$ assigns to $f$ given context $h$. This choice is natural given the above discussion--- indeed, the fact that $\log \Pr_M[f \mid h] \approx \langle \phi(h), \psi(f) \rangle$ for some feature mappings $\phi, \psi$ (as discussed above), is equivalent to the following statement:
\begin{tcolorbox}[colback=blue!10]
\begin{center}
    The (extended) logit matrix is approximately low-rank for modern language models.
\end{center}
\end{tcolorbox}
The main premise of this work is that the above statement is true for modern autoregressive LLMs. While we cannot construct the entire extended logit matrix (which is exponentially large), we can construct submatrices indexed by sets of histories $\MH$ and futures $\MF$, and measure how  the low-rank structure of these submatrices (denoted $\ML_M(\MH, \MF)$) evolves with their size (\Cref{fig:intro}).  As is shown, this approximate low-rank structure is remarkably consistent as we scale up the number of histories and futures. 
We emphasize both that this low-rank structure is atypical for a (random) matrix of the same dimensions and that it is \emph{not} a consequence of the low-rank structure of the \emph{single-token} logit matrix (see \Cref{sec:random_comparison}). 
For the formal definition of the extended logit matrix, which differs slightly from the simplified description here, we refer the reader to \Cref{sec:framework}.

\subsection{Our Contributions}
\iclr{\vspace{-0.2cm}}
In this paper, we conduct a thorough empirical study of the extended logit matrix across both a wide range of LLMs and choices for the histories and futures, and complement our empirical findings with theoretical results on expressivity and learnability.
Our framework suggests a number of intriguing open questions relating to interpretability, safety, and the theoretical underpinnings of LLMs.

\iclr{\vspace{-0.2cm}}
  \paragraph{Empirical Findings: low-rank structure.} In \Cref{sec:low-rank-experiments}, we study the extended logit matrices defined by a wide range of language models, and sources  of histories and futures. We observe that the extended logit matrices are all well-approximated by low-rank matrices, and that the quality of the best low-rank approximation follows a power law which is strikingly consistent as the matrix is scaled up. We also observe that the low-rank structure is \emph{not} present at the beginning of training, but rather emerges in the early stages of pre-training and further evolves throughout training.

  \iclr{\vspace{-0.2cm}}
  \paragraph{Empirical Findings: consequences of low-rank structure.} 
  Given that the extended logit matrix is typically well-approximated by a low-rank matrix, a natural follow-up question is to understand the implied linear dependencies between the rows (i.e., histories) of the matrix. Such linear dependences may be seen as  generalizations of classical examples such as $\mathsf{boy} - \mathsf{girl} \approx \mathsf{king} - \mathsf{queen}$ \citep{mikolov2013linguistic}. In \Cref{sec:shared-experiments}, we verify that these linear dependencies are consistent across the extended logit matrices of different LLMs. More surprisingly, they remain preserved even if we significantly change the futures (i.e., columns) of the matrix, \emph{replacing the original futures with nonsensical futures consisting of random sequences of tokens}.

  The fact that an LLM's low-rank structure is preserved even when prompted with nonsencial (random) sequences
  inspires us to investigate (in \Cref{sec:exp-generation}) the following setup for language generation. Given any \emph{target prompt (i.e., history)}, using an appropriate extended logit matrix we can approximately write it as a linear combination of unrelated, even nonsensical, histories. We can then use this linear combination to generate a continuation to the target, by only querying the model on the unrelated histories.  We find that this procedure generates coherent continuations, and its KL-divergence to the true model beats strong baselines, e.g., intermediate training checkpoints (\Cref{fig:lingen-gibberish}). Such a procedure has the potential to circumvent defenses, such as prompt filters \citep{dong2024attacksdefensesevaluationsllm}, established to ensure safety; we discuss further implications for AI safety in \Cref{sec:exp-generation}.

\iclr{\vspace{-0.2cm}}
\paragraph{Theoretical Framework.}

In addition, a key aspect of our framework is that it supports a simple theoretical generative model that captures the low-rank structure that we find.
Our empirical observations inspire us to revisit the Input Switched Affine Network (ISAN) of \cite{foerster2017input}. ISANs were initially proposed as a simple, interpretable recurrent architecture, where a low dimension hidden state is updated through a linear transformation that is allowed to depend on the current sampled token, and were shown to achieve reasonable performance on language modeling tasks.  We show that ISANs  capture precisely those distributions that have (exact) low logit rank (\Cref{thm:low-rank-equiv}).

To support the tractability and usefulness of ISAN as a theoretical model, we first study its representation power, showing that it can express state space layers, the key component of a variety of practically successful architectures broadly termed state space models (SSMs \citep{gu2021efficiently, gu2023mamba, gu2025tradeoffs}).  We also show that it can express algorithmic behaviors like copying and noisy parity \---- the latter having implications for learnability.  Since noisy parity is hard to learn \citep{blum2003noise}, efficiently learning an ISAN from samples is impossible in the worst case.  In light of this, we give a provably efficient learning algorithm with \emph{logit queries}, a setting that closely resembles practical model stealing setups for common APIs \citep{carlini2024stealing}.

\iclr{\vspace{-0.2cm}}
\iclr{\paragraph{Related work.} 
  We give a more detailed discussion of related work in \Cref{app:related-work}.}

\arxiv{\subsection{Related Work}\label{app:related-work}
\paragraph{Low Rank Structure in Language Models} 

There has been significant work understanding the implications of the low rank structure in the next-token logit matrix.  In particular, the output embedding matrix constrains the vector of next-token logits to be in a space of dimension equal to the hidden dimension (rather than the vocabulary size) and thus limits the family of conditional distributions that can be realized.  This phenomenon, often called the \emph{softmax bottleneck}, was formalized in \cite{yang2017breaking}.  The implications of this for language modeling are also studied in \cite{press2016using, kanai2018sigsoftmax, finlayson2023closing}.  Furthermore, recent works study this softmax bottleneck from a different perspective, showing that it leaks information about models even behind black-box APIs and results in vulnerabilities to model stealing attacks \citep{finlayson2024logits, carlini2024stealing}.  However, these works all focus on just the next-token logit matrix, whereas our framework allows us to reason about generating much longer sequences.

There have also been many works that observe low-rank structure in the weight matrices of machine learning models, dating back to the early days of deep learning \citep{denil2013predicting}, and this has been used to understand \citep{aghajanyan2020intrinsic} and develop efficient methods for fine-tuning large models e.g. LoRA and its variants \citep{zhang2023adalora, hu2022lora}.  While similar in philosophy, our approach is very different because it is model-agnostic, does not involve looking at the weights, but instead finds low-rank structure directly in the logits over sequences of tokens.  

Linear structure in word embeddings has been observed in well-known works  \citep{mikolov2013linguistic, goldberg2014word2vec, levy2014linguistic} and also to some extent for sentence embeddings \citep{zhu2020sentence, bowman2016generating, li2020sentence}. The linear representation hypothesis is, informally, the idea that concepts are represented linearly as directions in some representation space.  \cite{park2023linear} propose a formalization of this notion for language models, showing that certain concepts for the next token, such as gender, can be represented linearly in either the logit vector or the last hidden state.  There have also been many works, that broadly fall under mechanistic-interpretability, on probing and steering using the linear representations in internal layers of the model e.g. \cite{elhage2021mathematical, meng2022locating, hernandez2023linearity, nanda2023emergent, turner2023activation, todd2023function, hendel2023context, li2023emergent, geva2022transformer}.  
Compared to these aforementioned works, our framework only uses the output logits, but also allows us to reason about generating longer sequences of tokens (beyond just the next token).  It is an exciting direction to try to extract features and perform interventions through the representations provided by our framework.

\paragraph{Related Theoretical Models}
Linear dynamical systems form the basis of control theory and are also the key building block in modern state-space models.  Given the vast literature in control theory, we refer the reader to \citep{kailath1980linear, chen1999linear} for a classical treatment and \citep{hazan2025introductiononlinecontrol} for a modern treatment.  
A sequence of works over the past few years \citep{gu2021efficiently, gu2023mamba, katharopoulos2020transformers, dao2024transformers, gupta2022diagonal, gu2021combining} has led to the development of modern state-space models, introducing additional twists such as selective gating (which is related to the token-dependent transitions we use) and connections to linearized transformers. 
Further connections to linear dynamical systems have led to new theoretically motivated architectures \citep{agarwal2024spectralstatespacemodels}. 
We believe our work presents a new perspective that is both theoretically and empirically justified, while maintaining the conceptual simplicity of linear dynamical systems.

In terms of simpler, mathematically tractable generative models, the ISAN model is related to weighted finite automata  \citep{droste2009handbook} and hidden Markov models (HMMs) \citep{rabiner2003introduction}, except with the addition of a softmax nonlinearity to sample the next token.  This nonlinearity significantly improves its empirical performance on language modeling compared to these earlier models \citep{foerster2017input}.  There have been many earlier works studying the learnability of these classical models such as \cite{mossel2005learning, hsu2012spectral, balle2014spectral}.  Query learning settings have been classically studied in the literature on learning automata and formal languages \citep{angluin1987learning}, and more recently received renewed interest in the context of model stealing for large language models \citep{mahajan2023learning, liu2025model}.  Compared to these works which study classical models (e.g. HMMs), we believe the theoretical work here takes another step closer to modern language models.

}

\iclr{\vspace{-0.2cm}}
\section{Setup and Notation}
\label{sec:framework}

Our main object of study is autoregressive language models. 
We denote such models with the letter $\model$ and the associated set of tokens with $\Sigma$. 
For a token $z \in \Sigma$ and a context sequence $y_{1:t} = (y_1, \ldots, y_t) \in \Sigma^t$, we will denote the corresponding probability distribution over the next token by $\Pr_{M}[\cdot\mid y_{1:t}]$, i.e., $\Pr_{\model}[z\mid y_{1:t}]$ is the probability that the next token is $z \in \Sigma$.
We let $\Sigma^t$ and $\Sigma^{\leq t}$ represent sequences of length $t$ and at most $t$, respectively, while $\Sigma^\star$ denotes the set of all sequences of tokens.
Note that these include the empty string, which we denote by $\text{Null}$.
We use $\circ$ to denote the concatenation of two sequences, i.e., $y \circ y'$ denotes the concatenation of $y$ and $y'$.

To formally introduce the key object that we study, the extended logit matrix, we first define the \emph{mean-centered} logits; while empirically mean-centering doesn't make a large difference, it will be important for the theoretical generative model that we introduce later on (see \Cref{rem:mean-centering}).  

\begin{definition}[Mean-Centered Logits]\label{def:mean-center-logits}
Given a sequence $y_{1:t} \in \Sigma^t$, we define the \emph{mean-centered logits} as $\logit_{\model}[z| y_{1:t}] = \log \Pr_{\model}[z| y_{1:t}] - \frac{1}{|\Sigma|} \sum_{z' \in \Sigma} \log \Pr_{\model}[z'| y_{1:t}] $ for $z \in \Sigma$.
\end{definition}

Given a model $\model$ and sets $\MH, \MF \subset \Sigma^\star$ consisting of sequences of tokens, the \emph{extended logit matrix} $\ML_{\model}(\MH, \MF)$ has rows indexed by $\MH$ and columns indexed by $\MF \times \Sigma$; accordingly, we will write $\ML_M(\MH, \MF) \in \BR^{\MH \times (\MF \times \Sigma)}$. 
 We refer to elements of $\MH$ as \emph{histories} and of $\MF$ as \emph{futures}. Formally: %
  
\begin{definition}[Extended Logit Matrix]\label{def:his-fut-logit-matrix}
  Fix a model $\model$.  Given subsets of histories $\calH \subset \Sigma^*$ and futures $\calF \subset \Sigma^*$, the associated \emph{extended logit matrix} $\ML_{\model}(\MH, \MF) \in \BR^{\MH \times (\MF \times \Sigma)}$ is defined for $h \in \MH$ and $(f,z) \in \MF \times \Sigma$ by
  \iclr{\vspace{-0.2cm}}
  \begin{align}
\ML_{\model}(\MH, \MF)_{(h, (f,z))} := \logit_{\model}[z \mid h \circ f]\nonumber.
    \end{align}
\end{definition}

In the remainder of this paper, we will refer to the extended logit matrix as simply the logit matrix.

\paragraph{Relation to Simplified Definition in \Cref{sec:logit-matrix}.}
 Observe that if we set $\calF = \Sigma^{\leq T}$, then a row $\calL_{\model}(\{ h \}, \calF)$ contains all of the information necessary to sample a continuation of $h$ of up to length $T$, token by token.  From the information in this row, we can compute $\log \Pr[f|h]$ for any $f \in \Sigma^{\leq T}$ \---- and if we ignore the normalizing constant and replace the mean-centered logits with normalized logits, then we can compute $\log \Pr[f|h]$ by simply summing up the corresponding token-by-token conditional log probabilities in $\calL_{\model}(\{ h \}, \calF)$ (which is a linear transformation of the matrix).  %

\section{Experiments}

\subsection{Low Rank Structure}
\label{sec:low-rank-experiments}
In this section, we evaluate the extent to which the logit matrices associated to modern LLMs (\Cref{def:his-fut-logit-matrix}) are actually low-rank. To do so, we computed logit matrices $\ML_M(\MH, \MF)$ corresponding to various models $M$ and sets $\MH, \MF$ of sequences. The sets $\MH, \MF$ were generated as follows starting from a dataset $D$, according to some parameter $n \in \mathbb{N}$. %
 We sampled $2n$ sequences from $D$ and for each of the $2n$ sequences included a random subsequence of it in either $\MF$ or $\MH$ (so that each of $\MF, \MH$ ends up with $n$ subsequences). %
 We studied a broad range of datasets $D$ and open-source models $M$; precise details regarding the choices of $D,M$ as well as $\MH, \MF$ may be found in \Cref{sec:low-rank-description}. For the figures shown in the main body of the paper, the dataset $D$ was always taken to be the \texttt{wiki} split of \texttt{olmo-mix-1124} \citep{team2024olmo}.

We considered various values of $n$, ranging up to $10^4$.
Note that the full logits matrix $\ML_M(\MH, \MF)$ has $n^2 \cdot |\Sigma|$ entries, which is not feasible to store given that for many models (e.g., OLMo) $|\Sigma| \approx 10^5$. Thus, in our experiments we used instead a sub-matrix of $\ML_M(\MH,\MF)$, which we denote by $\ML_{M,k}(\MH, \MF)$, for some parameter $k$. The submatrix $\ML_{M,k}(\MH,\MF)$ is obtained by selecting a subset of the columns as follows: for each future $f \in \MF$, we select the columns indexed by $(f,z)$ where $z$ is one of the $k$ most-likely tokens following $f$, i.e., for which $\Pr_M[z \mid f]$ is largest. In our experiments we took $k = 50$; in \Cref{sec:k-ablations} we also repeated some experiments for (a) $k = 200$ and (b) where a \emph{random} subset of $k=50$ tokens was selected for each $f$, and observed similar results.  %
At a high level, $\ML_{M,k}(\MH, \MF)$ can be interpreted as containing information about the model's predictions for the most-likely next tokens for each possible context $h \circ f$ (with $h \in \MH, f \in \MF$).  
 We measured the degree to which the resulting logit matrix $\ML_{M,k}(\MH, \MF)$ is well-approximated by a low-rank matrix in two ways, discussed below. For ease of notation, we omit the parameter $k$ (and ignore the distinction between $\ML_M(\MH, \MF)$ and $\ML_{M,k}(\MH, \MF)$) in our description of the experiments throughout this section. 

 \paragraph{Method 1: measuring the singular value decay.}
\iclr{    \begin{wrapfigure}{r}{0.38\textwidth}
\vspace{-20pt}
     \includegraphics[width=\linewidth]{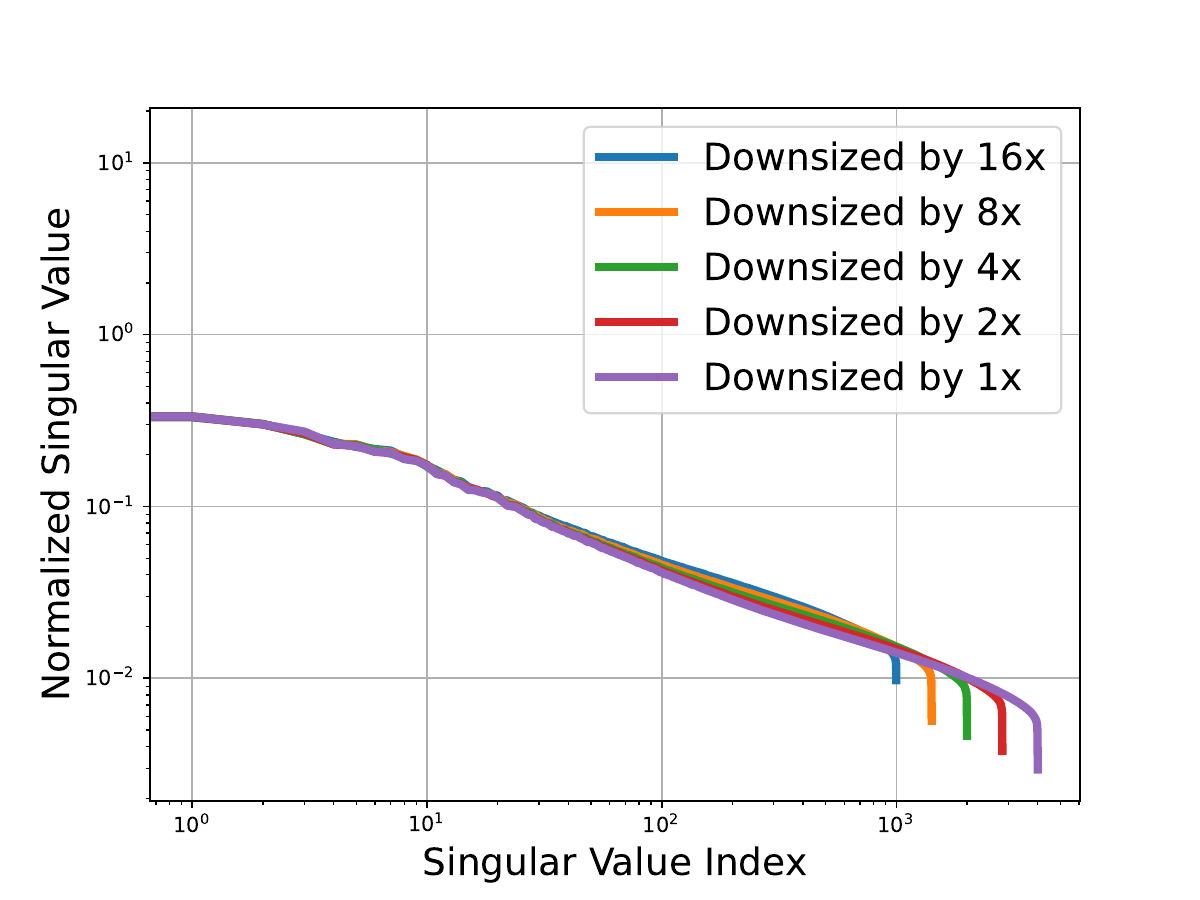}
     \caption{\centering OLMo-7b singular values; Power law exponent $\alpha  \approx 0.536$}
     \vspace{-15pt}
        \label{fig:wiki-olmo7b-sv}
      \end{wrapfigure}}
    \arxiv{
      \begin{figure}[t] %
        \begin{minipage}{0.5\textwidth}
          \centering
          \includegraphics[width=\textwidth]{plots/olmo7b_logitsfinal1_logits_singvals.pdf}
     \caption{\centering OLMo-7b singular values; Power law exponent $\alpha  \approx 0.536$}
     \label{fig:wiki-olmo7b-sv}
   \end{minipage}
   \begin{minipage}{0.5\textwidth}
     \centering
\includegraphics[width=\textwidth]{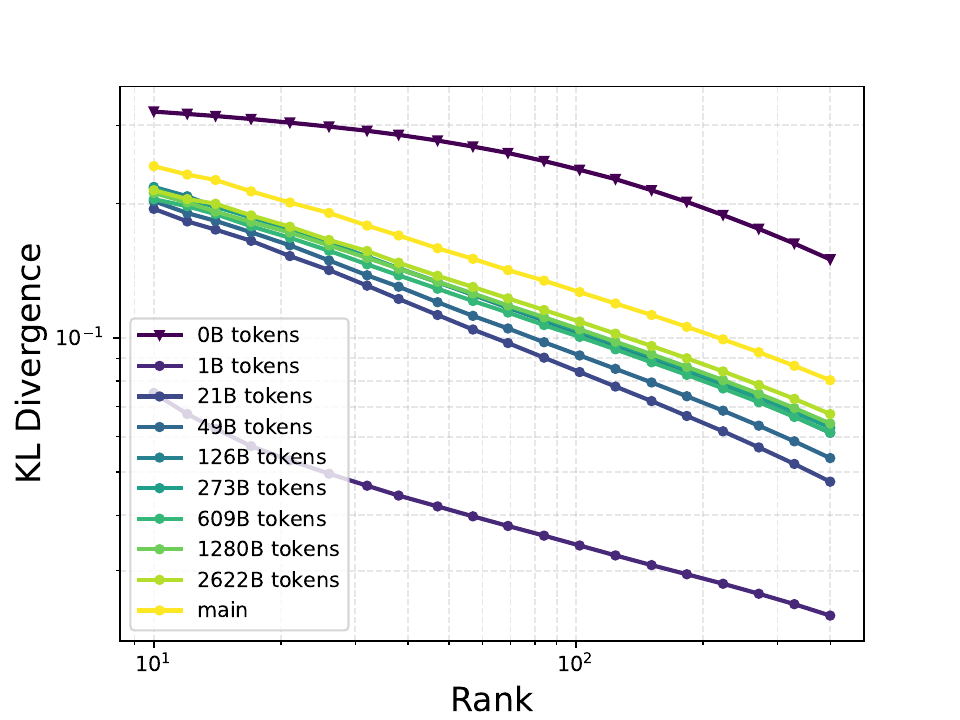}
\caption{Low-rank approximations (wrt.~avg.~KL divergence) over Stage-1 pretraining of OLMo-1b.}
\label{fig:checkpoints-kl}
     \end{minipage}
      \end{figure}}
 In \Cref{fig:wiki-olmo7b-sv} (and \Cref{fig:singvals-wiki-appendix}, which shows additional LLMs), we display the singular values of the logits matrix $\ML_{M}(\MH, \MF)$ for various choices of models $M$ and various sizes $n$ of the sets $\MH, \MF$, when the dataset $D$ was \texttt{wiki}. The singular values decay approximately according to a power law, in that the $i$th singular value $\sigma_i$ of the matrix $\ML_{M}(\MH, \MF)$ is approximately $C \cdot i^{-\alpha}$ for some $C, \alpha > 0$ (depending on $M$). %

 As shown in \Cref{fig:wiki-olmo7b-sv,fig:singvals-wiki-appendix}, the exponent $\alpha$ for most logit matrices is slightly greater than $1/2$; this holds as well for additional choices of the dataset $D$ (see \cref{fig:singvals-datasets} in the appendix). Intriguingly, $\alpha = 1/2$ represents a \emph{phase transition} for low-rank approximation, in the following sense (see \Cref{prop:phase-tran} for a formal statement): if $\alpha > 1/2$, then for any constant $\vep$, there is a \emph{constant $r_\vep$} depending only on $\vep$ so that a rank-$r_\vep$ matrix $\vep$-approximates the logit matrix. On the other hand, if $\alpha < 1/2$, then for sufficiently small constants $\vep$, a rank \emph{linear} in the dimension is needed.
 
 When it comes to approximating the logit matrices of language models, we are most interested in the setting where $\MH, \MF$ contain \emph{all} possible histories and futures (or at least those that are plausible in natural language), as then the logits matrix $\ML_M(\MH, \MF)$ contains \emph{all the next-token prediction information} in the language model. In this setting, the logit matrix's dimensions are \emph{exponential} in the length of the sequences, meaning that the case of power law decay with $\alpha < 1/2$ %
 requires that the rank be exponentially large to achieve approximation error less than some absolute constant. This is in contrast to \emph{constant} rank for the case $\alpha > 1/2$ which we observe in \Cref{fig:wiki-olmo7b-sv,fig:singvals-wiki-appendix,fig:singvals-datasets}. %

 While it is infeasible to compute the logits matrix when $\MH, \MF$ are taken to be the sets of \emph{all} histories and futures, it is apparent in the figures that the exponent of the power law remains essentially unchanged when we consider sub-matrices of $\ML_M(\MH, \MF)$ (modulo some degeneration of the power law at ranks that approach the size of $\MH, \MF$, which is expected). Thus, as { \textcolor{blue!70!black!100}{\textbf{{ $\MH, \MF$ are scaled up, we expect that the same power law holds.}}}} %

 \paragraph{Method 2: measuring the approximation via KL divergence.} 
 While measuring the decay of singular values of a matrix is mathematically convenient (in the sense of, e.g., \Cref{prop:phase-tran}), it does not directly yield a \emph{probabilistic} interpretation of the fact that the logit matrix $\ML_M(\MH, \MF)$ is close to another (low-rank) matrix. To bridge this gap, we observe that any matrix $L \in \BR^{\MH \times (\MF \times \Sigma)}$ (e.g., a logit matrix $\ML_M(\MH, \MF)$) induces a collection of $|\MH| |\MF|$ ``next token distributions'' on $\Sigma$ by taking the softmax of the vector $L_{h,f} := (L_{h,(f,z)})_{z \in \Sigma}$, for each $h,f$. (Indeed, recall from \Cref{def:his-fut-logit-matrix} that when $L = \ML_M(\MH, \MF)$, we have $L_{h,f} = L_M[\cdot \mid h \circ f]$.)
In place of Frobenius norm, we propose to measure the average KL divergence between these distributions: %
\begin{definition}[Average KL divergence]
  \label{def:avg-kl-div}
   Fix sets $\MH, \MF \subset \Sigma^\star$, and consider matrices $L, A \in \BR^{\MH \times (\MF \times \Sigma)}$. We define $\kldavg(L, A)$ to be the average KL divergence between the next-token distributions induced by the entries of $L_{h,f}$ and $ A_{h,f}$ for each $(h,f)$ pair:
   \begin{align}
\kldavg(L, A) = \frac{1}{|\MH| |\MF|} \sum_{h \in \MH, f \in \MF} \kld{\softmax(L_{h,f})}{\softmax(A_{h,f})}\nonumber.
   \end{align}
 \end{definition}

 Thus, for a low-rank matrix $A$, the quantity $\kldavg(\ML_M(\MH, \MF), A)$ may be interpreted as the ability of the low-rank matrix $A$ to approximate the model $M$ in KL divergence, when restricted to contexts $h \circ f$ for $h \in \MH, f \in \MF$.  
 It is straightforward to show (see \Cref{fact:softmax-frobenius}) that we may bound %

 $\kldavg(\ML_M(\MH, \MF), A) \leq \frac{1}{|\MH| \cdot |\MF|} \cdot \| \ML_M(\MH, \MF) - A \|_F^2$. 
 Combining this fact, \Cref{prop:phase-tran}, and the observed decay in singular values in \Cref{fig:wiki-olmo7b-sv}, we should expect the average KL divergence between $\ML_M(\MH, \MF)$ and a rank-$r$ approximation to decay at least as fast as a power law (in $r$). This prediction is confirmed in \Cref{fig:wiki-olmo7b-kl} (see also \Cref{fig:klerrors-wiki-appendix,fig:klerrors-datasets} in the appendix), where we also observe consistency in the power law as the matrix size is increased, suggesting that such a {\textcolor{blue!70!black!100}{\textbf{{ low-rank approximation holds when $\MH, \MF$ are even exponentially large.}}}} %

 \paragraph{Evolution of approximation during the course of training.}
\iclr{\begin{wrapfigure}{r}{0.38\textwidth} %
  \centering
  \vspace{-22pt}
\includegraphics[width=0.37\textwidth]{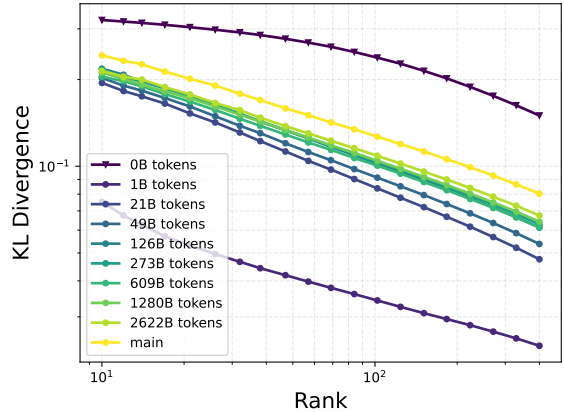}
\caption{Low-rank approximations (wrt.~avg.~KL divergence) over Stage-1 pretraining of OLMo-1b.}
\label{fig:checkpoints-kl}
\vspace{-15pt}
  \end{wrapfigure}}
  One intriguing exception to the findings discussed above is that for a ``baseline'' model  $M$ corresponding to the ``Step 0'' checkpoint of OLMo-1b (i.e., before any training steps), the logit matrix does \emph{not} appear to exhibit low-rank structure: see \Cref{fig:olmo1b-notrain-sv} in the appendix, where we have $\alpha \approx 0.374$. Further, while the KL divergence to a low-rank approximation does decrease with the rank (\Cref{fig:olmo1b-notrain-kl}), this decrease does not seem to follow a consistent power law as the matrix's size is increased, suggesting that for larger sets $\MH, \MF$ such a decrease may be diminished. These observations lead us to ask: \emph{At what point in training does the logit matrix become (approximately) low rank?}

  To answer this question, we show, in \Cref{fig:checkpoints-kl}, the average KL divergence to a low-rank approximation for the logit matrices at various pre-training checkpoints of OLMo-1b (given a fixed set of $\MH, \MF$ each of size approximately 4000). Early in training, the KL divergence to a low-rank approximation drops significantly, then rises slowly towards its final value. %
  This observation suggests many fascinating questions for follow-up work, foremost amongst them: {\textcolor{blue!70!black!100}{\textbf{{How and why does low-rank structure in the logit matrix emerge early in pre-training?}}}}

 \subsection{Shared Structure Across Models \& Futures}
 \label{sec:shared-experiments}
 Next, we proceed to discuss some consequences of the (approximate) low-rank structure of the logit matrix $\ML_M(\MH, \MF)$. One of the most basic facts about low-rank matrices is that they have nontrivial (row) kernels, i.e., if a matrix $A \in \BR^{\MH \times (\MF \times \Sigma)}$ (say, an approximation of $\ML_M(\MH, \MF)$) has rank much less than $|\MH|$, then there is a large space of nonzero vectors $v \in \BR^{\MH}$ for which $v^\top \cdot A = 0$. %

 \emph{What does the existence of such vectors mean?} Roughly speaking, such vectors $v$ represent ``linear relationships'' between different histories. Studying such linear relationships has been a mainstay of NLP over the last decade, with a simple example being relationships between \emph{word embeddings} such as: $\mathsf{boy} - \mathsf{girl} \approx \mathsf{king} - \mathsf{queen}$ \citep{mikolov2013linguistic}.
 \emph{When it comes to logit matrices, do such linear relationships manifest in ways that correspond to semantic relations between histories?} Towards answering this question, we first describe the following ``sanity check'':  for the sets of histories $\MH$ and futures $\MF$ considered in \cref{sec:low-rank-experiments}, for each $h \in \MH$ we computed the $h' \neq h$ in $\MH$ minimizing the norm of the difference between the corresponding rows of the logit matrix, namely $\ML_M(\{ h \} , \MF) - \ML_M(\{ h' \}, \MF)$. %
 Some samples of resulting pairs are shown in \Cref{tab:close-histories}, where it can be seen that most of the pairs of histories share semantic similarities.

 \arxiv{  \begin{figure}[t] %
     \begin{minipage}{0.5\textwidth}
       \includegraphics[width=\textwidth]{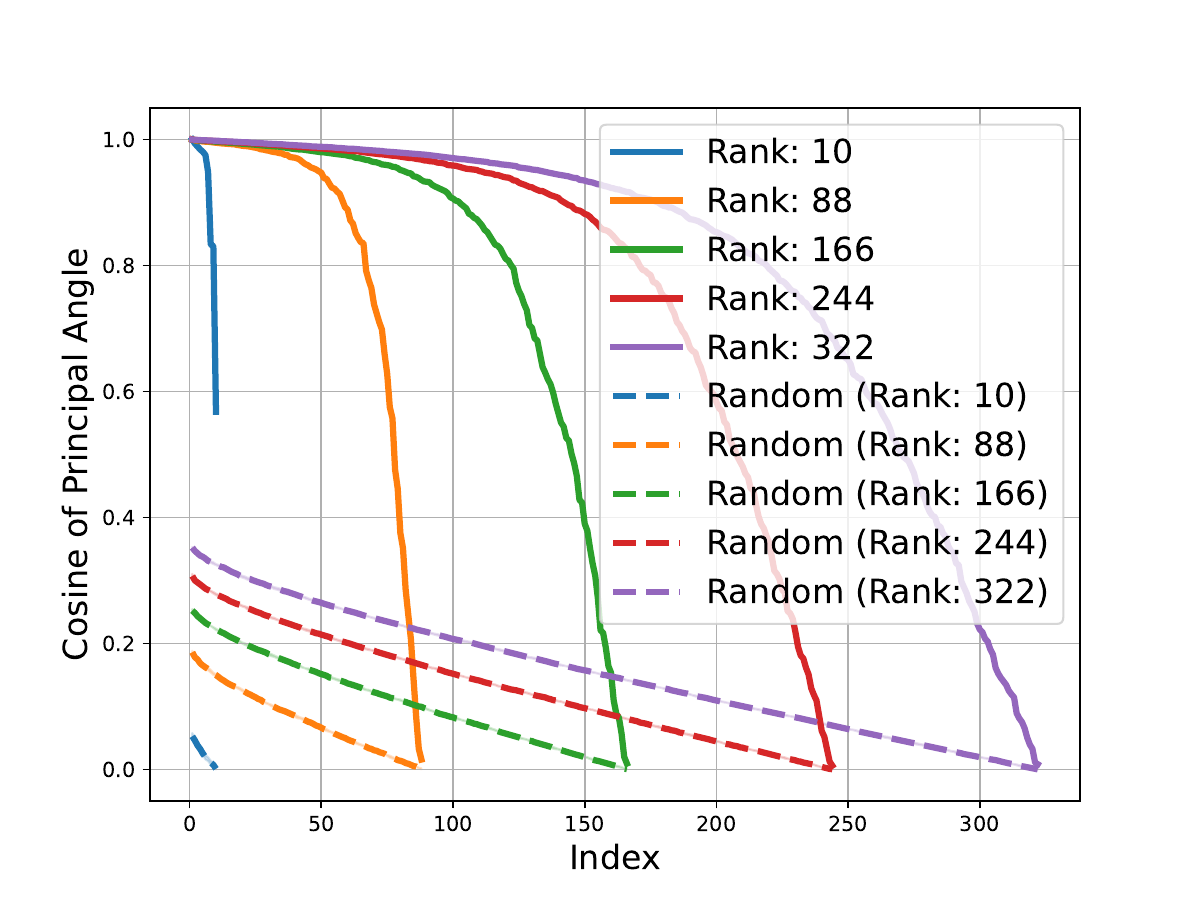}
    \caption{{Cos of principal angles btwn.~column spaces of low-rank apx.~of $\ML_M(\MH,\MF)$ \& $\ML_M(\MH, \Fnon)$.}}
    \label{fig:f-fnon-overlap}
  \end{minipage}
  \begin{minipage}{0.5\textwidth}
    \includegraphics[width=\textwidth]{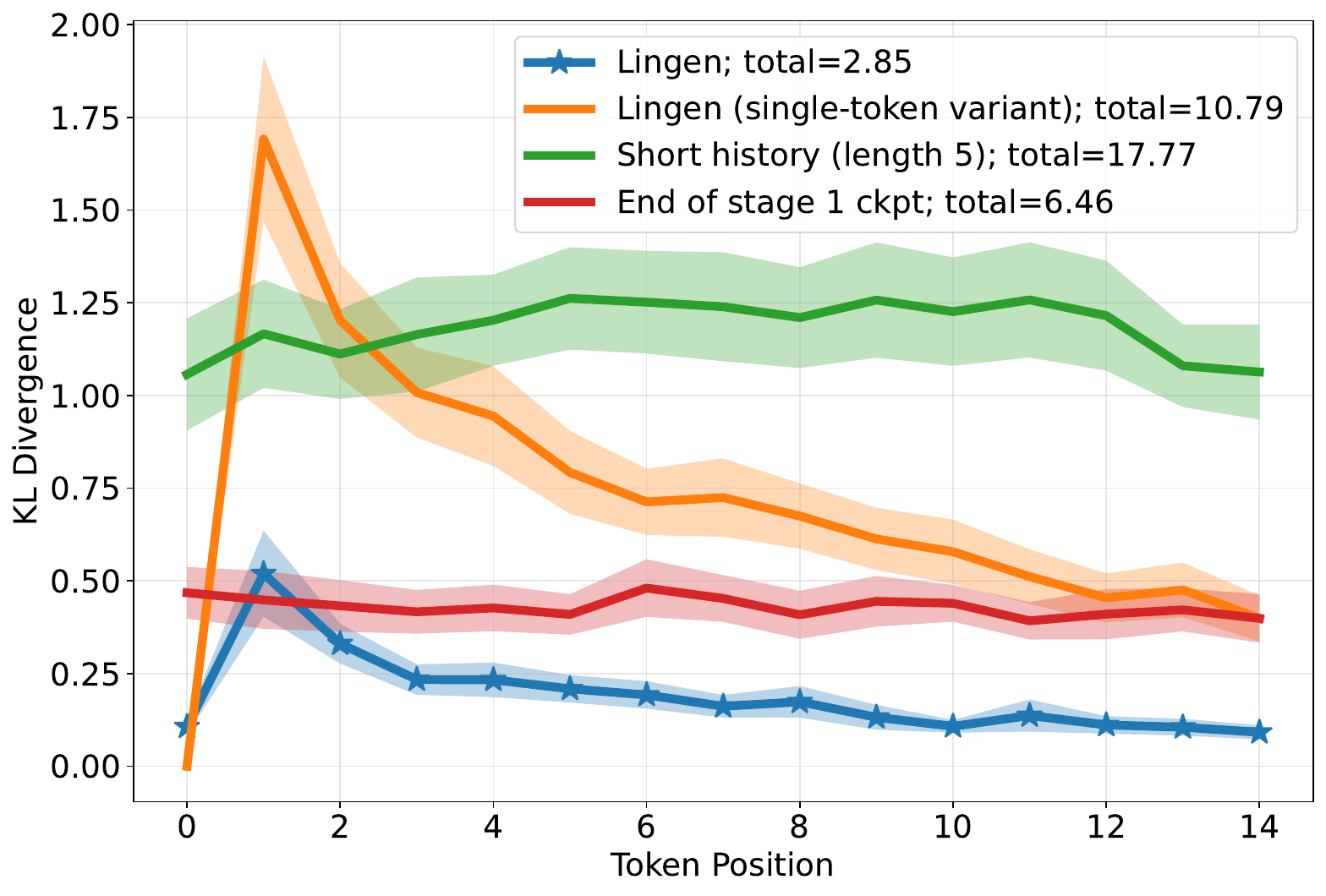}
    \caption{\Lingen with OLMo-1b.} %
    \label{fig:lingen}
    \end{minipage}
  \end{figure}}

 \paragraph{Systematic evaluation: description.} 

\iclr{  \begin{wrapfigure}{r}{0.38\textwidth} %
    \centering
    \vspace{-22pt}
    \includegraphics[width=0.37\textwidth]{plots/olmo1b_gibberish_logitsfinal1_logits_first_half_vs_olmo1b_gibberish_logitsfinal1_logits_second_half_principal_angles.pdf}
    \vspace{-8pt}
    \caption{{Cos of principal angles between column spaces of low-rank apx of $\ML_M(\MH,\MF)$ \& $\ML_M(\MH, \Fnon)$.}}
    \label{fig:f-fnon-overlap}
    \vspace{-12pt}
  \end{wrapfigure}}
 
 We aim to more systematically evaluate whether  vectors $v \in \BR^{\MH}$ describing linear relationships between rows (i.e., histories) of $A \approx \ML_M(\MH, \MF)$ are inherent to the histories themselves, as opposed to being spurious artifacts of either the model architecture or the set of futures $\MF$. If indeed such vectors $v$ were \emph{not} spurious, then we would expect that: \emph{the induced linear relationship given by such vectors transfers  (a) across the choice of futures $\MF$ and (b) across models $M$.}
 In other words (regarding (a)): given distinct sets of futures $\MF, \MF'$ (drawn from different distributions), after choosing low-rank matrices $A,A'$ satisfying $A \approx \ML_M(\MH, \MF)$ and $A' \approx \ML_{M}(\MH, \MF')$, the row kernels of $A,A'$ should be roughly equal. Equivalently, since the column span of a matrix is the orthogonal complement of the matrix's row kernel, we expect that the column spans of $A,A'$ should be roughly equal. Moreover (regarding (b)), a similar conclusion should hold if $A,A'$ were low-rank approximations of $\ML_M(\MH, \MF)$ and $\ML_{M'}(\MH, \MF)$ for distinct \emph{models} $M,M'$.   %

\paragraph{Invariance to the choice of $\MF$: ``nonsense'' futures.}
  To evaluate the above hypothesis regarding differing sets $\MF, \MF'$ of futures: for a fixed model $M$ (taken to be OLMo-1b), and sets $\MH, \MF$ as constructed in \cref{sec:low-rank-experiments}, we create a ``nonsense'' version $\Fnon$ of $\MF$ by randomly permuting all tokens amongst futures of $\MF$. Given a parameter $r \in \mathbb{N}$, we let $A \in \BR^{\MH \times (\MF \times \Sigma)}$ (resp., $\Anon$) denote the best rank-$r$ approximation to $\ML_M(\MH, \MF)$ (resp., $\ML_{M}(\MH, \Fnon)$) in Frobenius norm. %
 \emph{Are the column spaces of $A, \Anon$ approximately equal?} 
 In \Cref{fig:f-fnon-overlap}, we answer this \arxiv{question }by plotting, for various values of\arxiv{the rank} $r$, the (cosines of the) \emph{principal angles} between the column spans of $A,\Anon$.\footnote{Given two $d$-dimensional subspaces $\MS, \MS' \subset \BR^n$, the \emph{principal angles} between $\MS, \MS'$ are a collection of $d$ real numbers $\theta_1, \ldots, \theta_d \in [0, \pi/2]$ with $\theta_1 \leq \cdots \leq \theta_d$ which represent how much $\MS, \MS'$ overlap: if $\MS = \MS'$ then we have $\theta_i = 0$ for all $i$, and if $\MS, \MS'$ are orthogonal then we have $\theta_i = \pi/2$ for all $i$. See \cite[Section 12.4.3]{golub2013matrix} for further discussion.}
 A large fraction of the principal angles are close to $0$, i.e., have cosines close to $1$ (and this fraction is much larger than for a pair of independent uniformly random $r$-dimensional subspaces, which is shown in the dashed lines). %
 Thus, {\textcolor{blue!70!black!100}{\textbf{ linear relationships between histories transfer to significantly different sets of futures}}}.

 The above experiment demonstrates that the model's logits $L_M[\cdot \mid h \circ \fnon]$, for a history $h \in \MH$ and a ``nonsense'' future $\fnon \in \Fnon$, still yield significant information about $h \circ f$, for real futures $f \in \MF$. In particular, given a vector $v$ representing some linear relationship between histories in the sense that $v^\top \cdot \ML_M(\MH, \Fnon) \approx 0$, we typically have also $v^\top \cdot \ML_M(\MH, \MF) \approx 0$. \arxiv{Developing a more thorough understanding of why this property emerges and how it depends on, e.g., lengths of the futures in $\MF$, are intriguing directions for future work. }
 
 \paragraph{Invariance to the choice of $M$.} Paralleling our conclusions from considering different sets of futures, in \Cref{fig:model-overlap} in the apppendix, we observe a similar overlap between the column spaces of low-rank approximations to $\ML_M(\MH, \MF)$ for various pairs $(M, M')$ of distinct models. %

    \subsection{Exploiting Low Rank for Generation}
    \label{sec:exp-generation}
    Finally, we investigate one consequence of the fact %
    that there are ``linear relationships'' between histories as captured by vectors $v \in \BR^\MH$ satisfying $v^\top \cdot \ML_M(\MH, \MF) \approx 0$, and moreover that these relationships transfer to (potentially unrelated) sets of futures. We will show how to exploit these relationships to approximately generate samples from a language model $M$ conditioned on some ``target history'', by only querying $M$ on unrelated sequences. 
    In particular, consider some sequence $\htarget \in \Sigma^\star$; typically, to generate continuations $f = (z_1, \ldots, z_m)$ of $\htarget$ under $M$, having generated $z_{1:t-1}$, we query $\Pr_M[\cdot \mid \htarget \circ z_{1:t-1}]$ and sample $z_t$ from it. %

    We ask: \emph{Can we generate a continuation of $\htarget$ as above without making any queries of the form $\Pr_M[\cdot \mid \htarget \circ z_{1:t-1}]$?} %
    To answer this question, suppose we are given a set of histories $\MH \subset \Sigma^\star$, as well as a vector $v =  (v_h)_{h \in \MH}\in \BR^\MH$. One should interpret $v$ as encoding a way to write $\htarget$ as a linear combination of $h \in \MH$ in the sense that for a generic set $\MF$ of futures we have $\ML_M(\htarget, \MF) \approx v^\top \cdot \ML_M(\MH, \MF)$. If this holds, then for ``typical'' sequences $z_{1:t-1}$ we have generated, we can hope that $M$'s next-token distribution, \arxiv{$L_M[\cdot \mid \htarget \circ z_{1:t-1}]=$}$ \ML_M(\{\htarget\}, \{z_{1:t-1}\})$, is close to $v^\top \cdot \ML_M(\MH, \{ z_{1:t-1} \})$.

    \iclr{
    \begin{wrapfigure}{r}{0.38\textwidth}
      \vspace{-10pt}
      \includegraphics[width=\linewidth]{plots/lingen_summary_kl_values.pdf}
      \caption{\Lingen with OLMo-1b.} %
      \vspace{-10pt}
      \label{fig:lingen}
    \end{wrapfigure}
    }
    
    The above intuition suggests the following procedure to generate a sequence of tokens $z_1, z_2, \ldots$. For each $t \geq 1$, having generated $z_{1:t-1}$, for each $h \in \MH$, we compute the next-token logit vector $L_{h,t} := L_M[\cdot \mid h \circ z_{1:t-1}] \in \BR^\Sigma$ corresponding to $h$. Then we sample the next token $z_t$ from the linear combination of the vectors $L_{h,t}$ as induced by $v$, i.e., 
$
z_t \sim \softmax \left( \sum_{h \in \MH} v_h \cdot L_{h,t} \right)\nonumber,
$
and repeat for some number $m$ of steps. We call this procedure \Lingen; see \Cref{alg:lin-gen} (in the appendix) for a complete description, and \Cref{sec:lingen-generalization} for a proof that \Lingen generates approximately from $M$'s distribution, under appropriate assumptions paralleling the intuitions discussed above. %
We instantiate \Lingen in two different manners:

\paragraph{Option 1: In-distribution target.} In our first experiment, the target $\htarget$ is ``in-distribution'' for the set of histories $\MH$ we use in \Lingen: in particular, the set $\MH$ is taken to be subsequences of \texttt{wiki} sequences and $\htarget$ is taken to be a subsequence of a \texttt{wiki} sequence distinct from all of those producing $\MH$. We considered $50$ different choices of $\htarget$, and for each generated $5$ sequences of length $15$ according to $\Lingen$. The results are shown in \Cref{fig:lingen}: we display, at each token index $t$, the KL divergence between the distribution of the next token according to \Lingen (namely, $\softmax\left( \sum_{h \in \MH} v_h \cdot L_{h,t} \right)$) and the model $M$ (namely, $\Pr_M[\cdot \mid h \circ z_{1:t-1}]$), averaged over all $250$ generations. The ``total'' numbers reported in the figure represent the average KL divergence, summed over all 15 positions. See \Cref{tab:lingen-samples} for some samples generated from \Lingen.

To compute the coefficients $v$ for use by \Lingen, we regressed the rows of $\ML_M(\MH, \MF)$ onto $\ML_M(\{ \htarget \}, \MF)$. %
Doing so requires knowledge of $\ML_M(\htarget, \MF)$ and thus requires us to query $\Pr_M[\cdot \mid \htarget \circ f]$ for each $f \in \MF$. %
We expect that, however, by using our observations around model transferability from \Cref{sec:shared-experiments},  such a vector $v$ can be computed instead using some other (perhaps less powerful) model $M'$. We leave further investigation on this point to future work.

In \Cref{fig:lingen}, we also display several %
baselines in which \Lingen is replaced either by (a) in orange, a version of \Lingen where $v$ is only computed by only using, for each history, the  \emph{next-token logits} corresponding to the \emph{empty future} (i.e., $\ML_M(\MH, \{ \emptyset \})$), 
(b) in green, a version of $M$ with restricted context window; or (c) in red, an earlier training checkpoint of $M$. We emphasize that (a) performs well at generating the first token $z_1$ (as expected), but performs significantly worse at later tokens $z_t,\ t > 1$: this point emphasizes
{\textcolor{blue!70!black!100}{\textbf{{the importance of considering the extended logit matrix $\ML_M(\MH, \MF)$ as opposed to the single-token logit matrix}}}} which has been considered in prior works \citep{yang2017breaking}. 
Altogether, \Lingen performs significantly better than all baselines. 
\footnote{In general,  \Lingen improves at later tokens; we believe this holds since for such tokens, the distribution of the next token depends more on the ``more recent'' previously generated tokens $z_{1:t-1}$ (as opposed to $\htarget$), and these tokens \emph{are} fed into the model to compute $L_{h,t}$. One exception is that \Lingen performs very well at the first token, which may be since the single-token logit matrix of LLMs \emph{is} low-rank (see \Cref{sec:introduction}).}

\paragraph{Option 2: Out-of-distribution target (\& futures).} Next, we consider a more challenging setting where the target $\htarget$ is unrelated to the set of histories $\MH$ used in \Lingen. We took the same 50 targets $\htarget$ (origining from \texttt{wiki}) as described above, but now let the set of histories $\MH$ be the set $\Hnon$ obtained by randomly permuting all tokens amongst all elements of $\MH$. Moreover, when computing $v$, we replace the set of futures $\MF$ with $\Fnon$ (i.e., obtained by randomly permuting all tokens). 
The results are shown in \Cref{fig:lingen-gibberish}; while the KL divergences corresponding to \Lingen are generally larger than in \Cref{fig:lingen}, they are still smaller than all baselines, and the generated sequences (\Cref{tab:lingengib-samples}) generally show clear evidence of incorporating information from $\htarget$. 
Thus, by using the low-rank structure of the logit matrix,{\textcolor{blue!70!black!100}{\textbf{{ we can generate starting from a prompt by only querying the model at (nonsense) sequences unrelated to the prompt.}}}} 
While we leave a more thorough investigation for future work, we believe that this approach suggests novel ways to create \emph{jailbreaks} for LLMs, an area that has seen extensive work recently \citep{wei2023jailbroken, yi2024jailbreakattacksdefenseslarge}: for instance, querying the model on sequences in $\Hnon$ as above may allow us to circumvent input filters which aim to filter out harmful prompts; see \Cref{app:conclusion} for further discussion. %
In addition to implications for AI safety, we stress that the structure we observe is fundamental and could lead to many other applications ranging from computational efficiency  at inference time to improved training procedures. 
We discuss this and other future directions in \Cref{app:conclusion}.

\iclr{\vspace{-0.2cm}}
\section{Theoretical Results}
\iclr{\vspace{-0.2cm}}
\label{sec:theory}

In the following sections, we will develop theoretical foundations for understanding low logit rank.

\begin{definition}[Logit Rank]\label{def:low-logit-rank}
A language model $\model$ has \emph{logit rank} $ d$ if for any set of histories $\MH \subset \Sigma^*$ and futures $\MF \subset \Sigma^*$, the matrix $\ML_{\model}(\MH, \MF)$ has rank at most $d$.
\end{definition}

We will show that low logit rank is equivalent to a simple generative model and then explore the representation power of this model.  Finally, we will show that given query access to a language model with low logit rank, we can efficiently learn a description of a language model that approximates it well in TV distance. These results taken together establish low logit rank as a natural and tractable theoretical model for understanding language models.

\subsection{A Low Rank Generative Model}

Our first main result is demonstrating that the low logit rank condition can be captured by a simple generative model. 
The idea is to consider a version of a linear dynamical system model but with two important differences: (1) the dynamics are allowed to depend on the current sampled token and the current timestep  and (2) the observations are generated through a softmax nonlinearity.  This model is an extension of the Input Switched Affine Network (ISAN) model studied by \cite{foerster2017input} as an interpretable recurrent architecture (see \Cref{def:isan}), with the difference being that our model allows the dynamics to change every timestep.  We call our model a time-varying ISAN.

\begin{definition}[Time-varying ISAN]\label{def:time-varying-isan}
A time-varying ISAN of sequence length $T$ is specified by matrices $A_{z,t} \in \R^{d \times d}, B_t \in \R^{\Sigma \times d}$ for $z \in \Sigma, t \in [T]$ and an initial state $x_0 \in \R^d$.  It defines a distribution over sequences of tokens $z_1,z_2, \dots , z_T$ by: $z_{t}$ is sampled from $\softmax(B_{t} x_{t-1})$.  Then, the hidden state is updated as $x_{t} = A_{z_{t},t} x_{t-1}$ \footnote{Note that incorporating a bias term, i.e. $x_{t} = A_{z_t,t} x_{t-1} + b_{z_t,t}$,  is equivalent to adding an extra dimension that is always $1$. Thus, we will use the above form without the bias term.}. We refer to $d$ as the hidden dimension\footnote{For simplicity, we will not have an end-of-sequence token, but will treat the time-varying ISAN language model as defining a distribution over sequences of a fixed length.} .
\end{definition}

The (time-varying) ISAN captures the linear compression perspective since the hidden state $x_t$ can be seen as compresion of the history $\{z_i\}_{i=1}^t$ that is sufficient to generate the future $\{z_{j}\}_{j=t+1}^T$.

As mentioned, a similar time-invariant model has been studied by \cite{foerster2017input} for its interpretability, but our perspective differs because rather than viewing it as a separate architecture, we treat it as a generative model that can approximate modern language models.
In fact, the work by \cite{foerster2017input} supports the usefulness of this model as a theoretically tractable surrogate, as although far from state of the art, it can achieve relatively coherent language modeling at small scales.  
We present a new motivation for studying this model by proving that the extended logit matrix being low rank is equivalent to a language model being expressible as a time-varying ISAN.

\begin{theorem}[Equivalence between Low Logit Rank and Time-Varying ISAN]\label{thm:low-rank-equiv}
Let $M$ be a language model over sequences of length $T$.  Then $M$ is expressible as a time-varying ISAN with hidden dimension $d$ if and only if the logit matrix $\calL_{M}(\Sigma^{t}, \Sigma^{\leq T - t})$ has rank at most $d$ for all $t \leq T$.
\end{theorem}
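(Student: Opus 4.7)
The plan is to establish both directions separately. For the easy direction ($\Leftarrow$), I will simply unfold the ISAN recurrence: given an ISAN, for $h = (z_1,\dots,z_t)$ I let $\phi_t(h) := A_{z_t,t}\cdots A_{z_1,1}x_0 \in \R^d$ be the resulting hidden state, and for a future $f = (z_{t+1},\dots,z_{t+s})$ the state after $h\circ f$ equals $M_{f,t}\phi_t(h)$ where $M_{f,t} := A_{z_{t+s},t+s}\cdots A_{z_{t+1},t+1}$ depends only on $f$ and $t$. Since softmax is invariant under additive constants, mean-centering gives $\logit_M[z \mid h\circ f] = (P B_{t+s+1} M_{f,t}\phi_t(h))_z$ with $P = I - \frac{1}{|\Sigma|}\mathbf{1}\mathbf{1}^\top$. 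Setting $\psi_t(f,z) := M_{f,t}^\top B_{t+s+1}^\top P e_z \in \R^d$, the logit matrix factors as $\Phi_t \Psi_t^\top$ and therefore has rank at most $d$.

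For the converse ($\Rightarrow$), I will fix any $t$ and pick a $d$-column factorization $\ML_M(\Sigma^t,\Sigma^{\leq T-t}) = \Phi_t\Psi_t^\top$, choosing it (by padding with additional linearly independent columns in $\Psi_t$ if the rank is below $d$) so that $\Psi_t$ has full column rank $d$. Writing $\phi_t(h),\psi_t(f,z)$ for the corresponding rows, the same scalar $\logit_M[z \mid h\circ z'\circ f]$ can be extracted from either $\Phi_t\Psi_t^\top$ or $\Phi_{t-1}\Psi_{t-1}^\top$, yielding the cross-identity
\[
\phi_t(h\circ z')^\top \psi_t(f,z) \;=\; \phi_{t-1}(h)^\top \psi_{t-1}(z'\circ f, z)
\]
for all $h\in\Sigma^{t-1}$, $z'\in\Sigma$, $f\in\Sigma^{\leq T-t}$, $z\in\Sigma$. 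Since $\{\psi_t(f,z)\}$ spans $\R^d$, the left side determines $\phi_t(h\circ z')$ uniquely, while the right side is visibly linear in $\phi_{t-1}(h)$; hence there exists a linear transition $A_{z',t}\in\R^{d\times d}$ (specified on the row span of $\Phi_{t-1}$ and extended arbitrarily off of it) with $\phi_t(h\circ z') = A_{z',t}\phi_{t-1}(h)$.

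To assemble the ISAN I will define the readout $B_t$ by $(B_t)_z := \psi_{t-1}(\emptyset,z)^\top$, so that $B_t\phi_{t-1}(h)$ is the mean-centered vector $\logit_M[\cdot \mid h]$ and $\softmax(B_t\phi_{t-1}(h)) = \Pr_M[\cdot\mid h]$ by softmax shift-invariance. Initializing $x_0 := \phi_0(\emptyset)$, an induction on $t$ combining $x_t = A_{z_t,t}x_{t-1}$ with the transition property above will show $x_t = \phi_t(z_{1:t})$ for every sampled prefix, so the ISAN and $M$ induce the same distribution over $\Sigma^T$.

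The hard part will be the converse direction, and in particular the well-definedness and linearity of the transitions $A_{z',t}$; this hinges on picking a factorization in which $\Psi_t$ has full column rank $d$, since that spanning property is exactly what forces the induced dependence on $\phi_{t-1}(h)$ to be linear. A minor technical wrinkle arises near the boundary $t=T$, where $\ML_M(\Sigma^t, \Sigma^{\leq T-t})$ may have fewer than $d$ columns; since the transitions $A_{z,T}$ are never actually applied during generation, they can be set to arbitrary matrices, and similar padding tricks handle any other edge cases that come up while aligning the hidden state dimension across different $t$.
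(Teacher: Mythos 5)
Your proof is correct and follows essentially the same route as the paper's: the forward direction is the same explicit factorization of the extended logit matrix through the hidden state (the paper's \Cref{fact:isan-implies-low-rank}), and your converse---an abstract rank-$d$ factorization $\Phi_t\Psi_t^\top$ with $\Psi_t$ of full column rank, transitions obtained from the cross-identity by a well-definedness/linearity argument, readout from the Null-future columns, and the induction $x_t=\phi_t(z_{1:t})$---is the paper's \Cref{lem:low-rank-implies-isan} in different coordinates (the paper instead fixes a spanning multiset $\calS^{(t)}$ of $d$ histories and solves $\calL_{M}(\calS^{(t-1)}\circ z,\cdot)=A_{z,t}\calL_{M}(\calS^{(t)},\cdot)$, which is your construction with $\Psi_t^\top=\calL_{M}(\calS^{(t)},\Sigma^{\leq T-t})$). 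The only cosmetic differences are that you make the mean-centering projector $P$ explicit in the forward direction, which the paper defers to \Cref{rem:mean-centering}, and that your $\Rightarrow$/$\Leftarrow$ labels are swapped relative to the statement of the equivalence.
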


\subsection{Representation Power}

We show that ISANs can represent a variety of simple architectures and languages.  Specifically, we show that they can represent linear SSM layers, and can express copying and noisy parity.  The formal statements and proofs are deferred to \Cref{app:rep-power}.

\subsection{Provable Learning Guarantees}

Given that ISANs can express noisy parity, and the standard cryptographic assumption on the hardness of learning a noisy parity, \emph{it is computationally hard to learn an ISAN from samples} (see \Cref{thm:hard_to_learn_isan}). To circumvent this,  we consider a \emph{logit query} model, where the learner can query the logits for the next token given any history.  This model mirrors settings for practical model stealing attacks e.g. \citep{carlini2024stealing}, for stealing the embedding dimension and output embedding matrix for production language models and also connects to classical and modern works on computational learning theory \citep{angluin1987learning, mahajan2023learning, liu2025model}.

Our main result is the following theorem, which shows that given logit query access to a time-varying ISAN, we can efficiently learn a time-varying ISAN that approximates it well in TV distance.

\begin{theorem}\label{thm:stealing}
Given logit query access to an unknown time-varying ISAN $M$ over $\Sigma^T$ with hidden dimension at most $d$, there is an algorithm that uses $\poly(d, |\Sigma|, T, 1/\eps)$ runtime and queries and returns a description of a time-varying ISAN $M'$ such that $\E[\TV(M, M')] \leq \eps$. 
\end{theorem}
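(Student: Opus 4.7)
The plan is to exploit the equivalence established in \Cref{thm:low-rank-equiv}: $M$ admits a hidden-state representation $x_t \in \R^d$ with $x_t = A_{z_t,t} x_{t-1}$ and $z_t \sim \softmax(B_t x_{t-1})$, and the algorithm will recover (up to a per-step change of basis) the matrices $A_{z,t}, B_t$ by constructing anchor sets of histories and solving linear systems from logit queries. The construction proceeds timestep by timestep, building for each $t = 0, 1, \ldots, T$ a set $\MH_t \subset \Sigma^t$ of at most $d$ histories and a set $\MF_t \subset \Sigma^{\leq T-t}$ of at most $d$ distinguishing futures such that $\calL_M(\MH_t, \MF_t)$ is full rank and well conditioned. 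The key observation, which comes directly from the ISAN linearity guaranteed by \Cref{thm:low-rank-equiv}, is that if $\MH_{t-1}$ spans the row space at time $t-1$ then the extensions $\{h \circ z : h \in \MH_{t-1}, z \in \Sigma\}$ already span the row space at time $t$; hence $\MH_t$ can be chosen greedily from these $O(d|\Sigma|)$ candidates via QR with column pivoting, and $\MF_t$ by a dual extension argument. Each candidate row costs at most $|\MF_t| \leq d$ logit queries, for an $O(d^2 |\Sigma| T)$ query budget at this stage.

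Next, fix the coordinate system in which anchor $h_i \in \MH_t$ has hidden state $e_i$, and read off the ISAN parameters. For each $h_i \in \MH_{t-1}$ and each $z \in \Sigma$, query $\calL_M(\{h_i \circ z\}, \MF_t)$ and solve the linear system
\begin{align*}
\calL_M(\{h_i \circ z\}, \MF_t) \;=\; c_{i,z,t}^\top \, \calL_M(\MH_t, \MF_t),
\end{align*}
then set $A_{z,t-1} e_i := c_{i,z,t}$ and $B_{t-1} e_i := \logit_M[\cdot \mid h_i]$ (the latter from one query per anchor). Extending linearly yields a candidate time-varying ISAN $M'$ (padding to common hidden dimension $d$ if some $|\MH_t| < d$). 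With exact logit queries, $M'$ agrees with $M$ on every anchor by construction, and then on every history in $\Sigma^t$ by linearity: any $h \in \Sigma^t$ lies in the linear span of the anchor extensions as a logit row, and that span relation is preserved by the recovered $A_{z,t}, B_t$, so inductively $M' \equiv M$.

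To convert this into a quantitative TV bound under finite precision, set the singular-value threshold $\tau = \poly(\eps/(dT|\Sigma|))$ when selecting basis elements in Step~1, which keeps $\mathrm{cond}(\calL_M(\MH_t, \MF_t))$ polynomially bounded. Each coefficient vector $c_{i,z,t}$, and hence each recovered $A_{z,t}, B_t$, is then accurate to within $\poly(\eps/(dT))$ in operator norm; per-step errors translate to per-step next-token KL at most $\poly(\eps/T)$, and a chain rule for KL across the $T$ autoregressive steps combined with Pinsker's inequality yields $\E[\TV(M, M')] \leq \eps$. The overall runtime and query complexity are $\poly(d, |\Sigma|, T, 1/\eps)$, dominated by the $O(d^2 |\Sigma| T)$ queries and $O(d^3 |\Sigma| T)$ linear-system solves.

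The main obstacle is the error-propagation analysis: the recovered hidden state at time $t$ is the product $A_{z_t,t} \cdots A_{z_1,1} x_0$, so operator-norm perturbations in the $A_{z,t}$ could a priori blow up multiplicatively across $T$ steps, giving a useless $\mathrm{poly}(d)^T$ dependence. The remedy, and where most of the technical work lies, is two-fold: (i) select anchor bases via volume-maximizing greedy pivoting so that the smallest singular value of each $\calL_M(\MH_t, \MF_t)$ is lower bounded polynomially in $1/(dT|\Sigma|)$, and (ii) measure errors in the log-probability (softmax) domain and pass to KL divergence, where the autoregressive chain rule renders them additive rather than multiplicative across the $T$ steps. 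This additivity is what ultimately keeps the final sample complexity polynomial in $T$ rather than exponential.
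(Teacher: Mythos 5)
Your overall architecture---per-timestep anchor sets of histories and futures, linear systems for the transition matrices, and a per-step error chained over $T$ positions---matches the paper's, and your key spanning observation (that if $\MH_{t-1}$ spans the row space at time $t-1$ then $\MH_{t-1}\circ\Sigma$ spans it at time $t$) is correct and is exactly what underlies \Cref{lem:low-rank-implies-isan}. However, there are two genuine gaps.

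First, the certification problem. To extract a basis from the $O(d|\Sigma|)$ candidate rows you must restrict them to a polynomial-size set of futures $\MF_t$, and rows that are independent in $\calL_M(\cdot,\Sigma^{\leq T-t})$ can become dependent after restriction to $\MF_t$. Your greedy pivoting can therefore select an $\MH_t$ that does \emph{not} span the true row space, and the coefficients solved on $\MF_t$ then fail on futures outside $\MF_t$; the claim that $M'\equiv M$ with exact queries is false. Your ``dual extension argument'' for $\MF_t$ hides a circularity: spanning futures must be built backward from $t=T$ and certified against time-$(t{+}1)$ histories, which are built forward and certified against futures. The paper explicitly argues that exact spanning \emph{cannot} be certified with polynomially many queries (the missing rank directions may be supported on exponentially rare sequences) and instead proves a weaker, distribution-dependent coverage guarantee: \Cref{alg:complete-span} repeatedly samples histories from $M[:t]$ itself and enlarges $(\MH_t,\MF_t)$ until a fresh sample increases the rank only with probability $\eps/(2T)$ (\Cref{lem:span-completeness}). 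The TV bound then comes from the probability that autoregressive generation ever leaves these ``good sets,'' not from numerical error. Your proposal has no mechanism playing this role, and it is the crux of the proof.

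Second, the conditioning analysis is unsupported. There is no guarantee that anchor matrices with smallest singular value bounded below by $\poly(\eps/(dT|\Sigma|))$ exist, and no argument that directions discarded by your threshold $\tau$ cost only $\poly(\eps/T)$ per-step KL: a direction can have a tiny singular value on the chosen $\MF_t$ yet govern the logits of high-probability continuations outside $\MF_t$, so the singular values of the restricted anchor matrix do not control KL under the model's own distribution. The paper sidesteps this entirely---queries are exact, the linear systems are guaranteed consistent by the rank-stabilization check and solved exactly, so no error propagates; the only loss is the probabilistic coverage event above. If you want to salvage your route, you would need to replace the conditioning/thresholding step with a sampling-based coverage test of the paper's kind.
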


\iclr{\section{Conclusions}

Our paper develops a simple but mathematically justified framework built on the low-rank structure of the extended logit matrix and demonstrates its empirical predictive power across different models, data, and tokenizers.  
We believe this low-rank generative viewpoint will provide a general model-agnostic foundation for understanding, probing, and steering language models. 
Furthermore, this lens points to many promising future directions which we discuss in more detail in \Cref{app:conclusion}.}

\arxiv{
  \section{Conclusions}
  \label{app:conclusion}
Our paper develops a simple but mathematically justified framework built on the low-rank structure of the extended logit matrix and demonstrates its empirical predictive power across different models, data, and tokenizers.  
We believe this low-rank generative viewpoint will provide a general model-agnostic foundation for understanding, probing, and steering language models. 
Furthermore, this lens points to many promising future directions:
\begin{itemize}
    \item Can we better understand how the singular value decay evolves during training (recall Figure~\ref{fig:checkpoints-kl}) and can we use it as a diagnostic for training progress?
    \item The low rank structure allows us to represent each history as a vector (recall \Cref{sec:logit-matrix}) \---- can we extract concepts and features in this representation space?  
    This would help build towards a model-agnostic approach to interpretability.
    \item Can we use \Lingen or some variation of it to bypass safety guardrails and generate responses to unsafe prompts? If so, does our framework suggest techniques for safeguarding against such attacks?
    \item Can we extend our theoretical results to the case when the model is only approximately low-rank as observed in practice? 
    Further, it is an interesting open question to understand what notions of approximation (e.g. total variation vs matrix norm) are theoretically feasible while maintaining fidelity to practice.   
\end{itemize}

}

\section*{Acknowledgements}
AL was supported by a Miller Research Fellowship. We are grateful to Adam Block, Akshay Krishnamurthy, and Ankur Moitra for helpful comments. 

\iclr{\section{Ethics Statement}

We view the results in this paper as fundamental research and do not see ethical concerns.  While some of the insights may eventually lead to attacks or jailbreaks against LLMs, discovering such attacks is an important part of the research process, and we believe that the benefits outweigh the concerns and that the insights in this paper will help towards better defenses as well.}

\iclr{\section{Reproducibility Statement}

Full experimental specifications are described in detail in \Cref{sec:experimental-details}, and full proofs of our theoretical results are given in \Cref{sec:proofs-appendix}.  We plan to make the code for our experiments available.}

\bibliography{bibliography}
\iclr{\bibliographystyle{iclr2026_conference}}
\arxiv{\bibliographystyle{abbrvnat}}

\appendix
\iclr{\section{Related Work}\label{app:related-work}}
\section{Experimental Details}
\label{sec:experimental-details}
\subsection{Additional experimental details from \Cref{sec:low-rank-experiments}}
\subsubsection{Setup for experiments}
\label{sec:low-rank-description}

\paragraph{Choice of datasets and models.}
\label{sec:models-datasets}
We considered the following choices for the autoregressive language model $M$ and the dataset $D$: 
  \begin{enumerate}[leftmargin=0.45cm]
\item The model $M$ was chosen amongst OLMo-1b, Olmo-7b \citep{team2024olmo}, Llama-1b \citep{grattafiori2024llama}, Mamba-1.4b \citep{gu2024mamba}, Gemma-1b \citep{gemma2025gemma}. Moreover, as a baseline comparison, we considered the checkpoint for OLMo-1b at time step $0$ of training. %
\item The dataset $D$ was chosen amongst: the \texttt{wiki}, \texttt{arxiv}, \texttt{starcoder} subsets of the \texttt{olmo-mix-1124} \citep{team2024olmo} training dataset; the \texttt{math} subset of the \texttt{dolmino-mix-1124} dataset \citep{team2024olmo}; and the \texttt{c4} dataset \citep{raffel2020exploring}.  
\end{enumerate}

Next, we detail the construction of the sets $\MH, \MF$ from a dataset $D$. The datasets $D$ we used each consisted of a collection of many sequences, i.e., we have $D = \{ y\sups{1}, \ldots, y\sups{N} \}$ for some large $N$.

\paragraph{Construction of histories.} To generate a set $\MH$ of histories of size $n$, we fixed parameters $\ell_{\min}, \ell_{\max}$. We first chose a random subset $\MS$ of $n$ elements of $D$ of length at least $\ell_{\max}$ (as measured by the number of characters).\footnote{For some of the larger datasets, we used a buffer size of at least $10^6$ to stream the dataset, so technically the set of histories was not a uniformly random subset.} For each sequence $y \in \MS$, we selected a uniformly random contiguous subsequence of $y$ of length $\ell_{\max}$. We then chose $\ell \sim \mathrm{Unif}[\ell_{\min}, \ell_{\max}]$, and added sequence $y_{1:\ell}$ to $\MH$ (rounded to the nearest full word). We opted to measure length (and perform truncation) with respect to characters (as opposed to tokens) since we wish to use the same set of histories (and futures) for different models, which typically have different tokenizers. 

\paragraph{Construction of futures.} The set of futures $\MF$ was generated identically to the set $\MH$ of histories, with the exception that in the context of the previous paragraph, $y_{\ell:|y|}$ (where $|y|$ denotes the length of $y$) was added to $\MF$.

\paragraph{Computation of the exponent $\alpha$.}
To compute the power law exponents $\alpha$ shown in \Cref{fig:wiki-olmo7b-sv,fig:singvals-wiki-appendix}, we used least-squares regression, as follows. Given a matrix $L \in \BR^{n \times m}$ (where $n \leq m$), we let its singular values, arranged in decreasing order (excluding the largest) be $\sigma_1, \ldots, \sigma_{n-1}$. We performed a 2-dimensional linear regression with covariates $(\log (i/n), \log ((n-i)/n))$ and labels $\log \sigma_i$, for $i \in [n-1]$.  Adding the covariate $\log((n-i)/n)$ yielded a better fit as it accounts for the precipitous drop in $\sigma_i$ as $i$ approaches the number of rows $n$ (when the power law as described in \Cref{sec:low-rank-experiments} ceases to describe the decay of the singular values $\sigma_i$). The resulting coefficient for the $\log((n-i)/n)$ covariate was always very small, meaning that its effect was negligible for $i \ll n$.

Moreover, in the regression, the data point corresponding to index $i$ was weighted by $1/i$. We chose this weighting (as opposed to, say, uniformly weighting all data points) to account for the logarithmic scale of the covariates in terms of their dependence on $i$. 

\subsubsection{Details for singular value plots}
\label{sec:sv-plots-details}
Recall that for an integer $k$, $\ML_{M,k}(\MH, \MF)$ denotes the sub-matrix obtained from $\ML_M(\MH, \MF)$ by selecting, for each $f \in \MF$, the columns $(f,z)$ indexed by tokens $z$ which are one of the $k$ most likely next tokens following $f$, with respect to $M$. 
\Cref{fig:wiki-olmo7b-sv,fig:singvals-wiki-appendix} show the singular values for logit matrices $\ML_{M,50}(\MH, \MF)$ for fixed sets $\MH, \MF$ of histories and futures of size approximately 10000,\footnote{In particular, we generated sets $\MH, \MF$ of size exactly 10000 and then filtered out 0-length futures, of which there were approximately 60.} as generated according to the procedure from \Cref{sec:models-datasets}. For this procedure we chose $\ell_{\min} = 50, \ell_{\max} = 200$. The figures also show the singular values for $\ML_{M,50}(\MH_i, \MF_i)$\arxiv{ (\emph{``Downsized by $i$x''})}, where $\MH_i$ (resp., $\MF_i$) contains the first $|\MH|/\sqrt{i}$ (resp., $|\MF|/\sqrt{i}$) elements of $\MH$ (resp., $\MF$), for $i \in \{2,4,8,16\}$. Thus the number of elements of $\ML_{M,50}(\MH_i, \MF_i)$ is roughly $1/i$ that of $\ML_{M,50}(\MH, \MF)$. Singular values were normalized by the square root of the number of elements of each matrix.

Next, \Cref{fig:singvals-datasets} shows the singular values of $\ML_{M,50}(\MH, \MF)$ when $M$ was OLMo-1b and  $\MH, \MF$ were obtained via the procedure described in \Cref{sec:models-datasets} when the dataset $D$ was taken to be $\texttt{arxiv}, \texttt{starcoder}, \texttt{math}, \texttt{c4}$. We remark that for these plots the sets $\MH, \MF$ were taken to be of size only 4000.

\subsubsection{Details for KL divergence plots}
\label{sec:kl-plots-details}
\Cref{fig:wiki-olmo7b-kl,fig:klerrors-wiki-appendix,fig:klerrors-datasets} show the average KL errors between the logit matrices considered in \Cref{fig:wiki-olmo7b-sv,fig:singvals-wiki-appendix,fig:singvals-datasets}, respectively, to low-rank approximations. Low-rank approximations were computed using an approximate singular value decomposition (via \texttt{torch.svd\_lowrank}) and then zeroing out all singular values apart from the largest $r$ ones. (Note that, for an exact SVD, this procedure gives the closest rank-$r$ approximation in Frobenius norm.)  Since we are considering the submatrices $\ML_{M,50}(\MH, \MF)$, these KL divergences are taken with respect to distributions restricted to the 50 most likely tokens per future. 

The dashed line at the top of the figures represents the following rank-1 baseline: for each future $f$, we simply use the next-token distribution for $f$, i.e., $\Pr_M[\cdot \mid f]$, to approximate all entries (i.e., rows) corresponding to $f$. We remark that this approximation is in general not the optimal rank-1 approximation to the logit matrix (with respect to Frobenius norm). 

\begin{figure}
  \centering
    \begin{subfigure}{0.40\textwidth}
        \includegraphics[width=\linewidth]{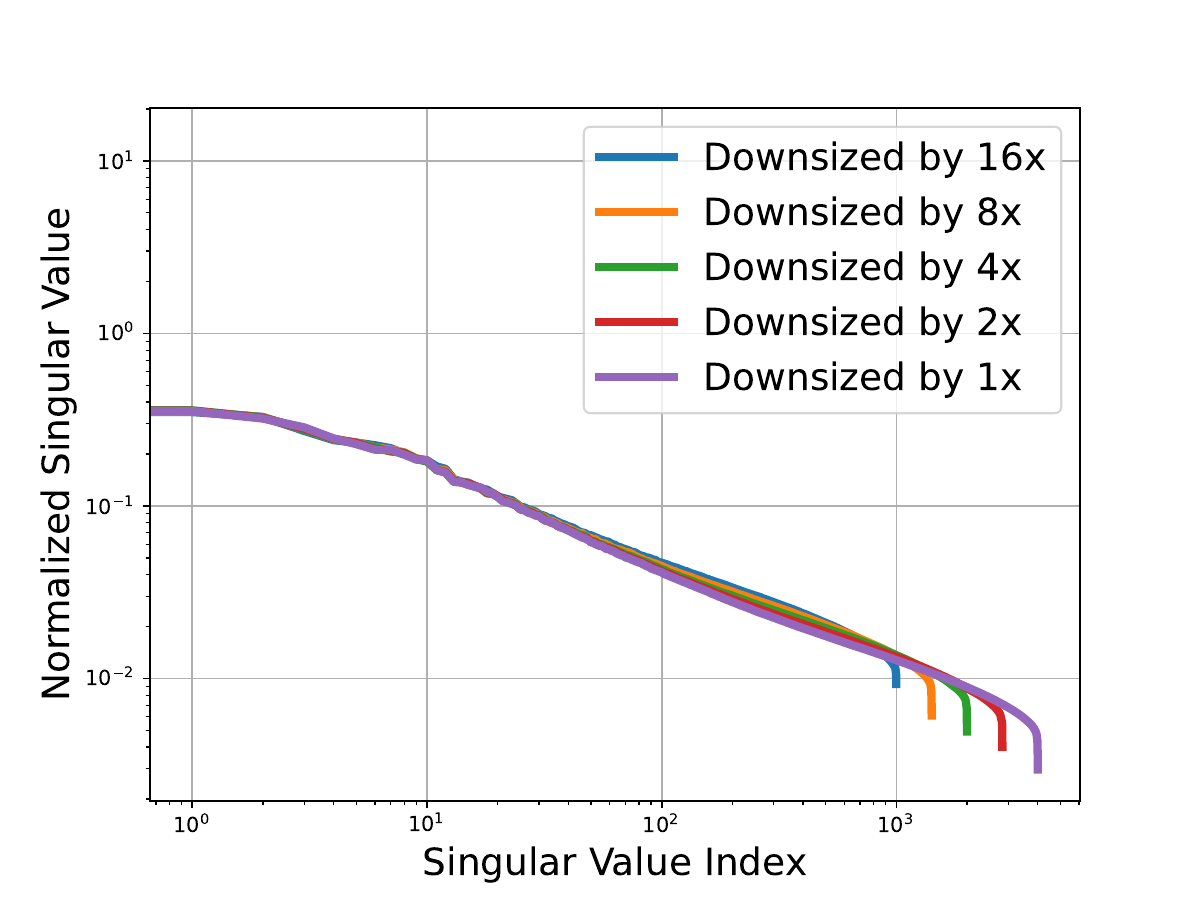}
        \caption{ OLMo-1b; $\ \  \alpha \approx 0.561$}
        \label{fig:olmo1b-sv-wiki-appendix}
      \end{subfigure}
    \begin{subfigure}{0.40\textwidth}
        \includegraphics[width=\linewidth]{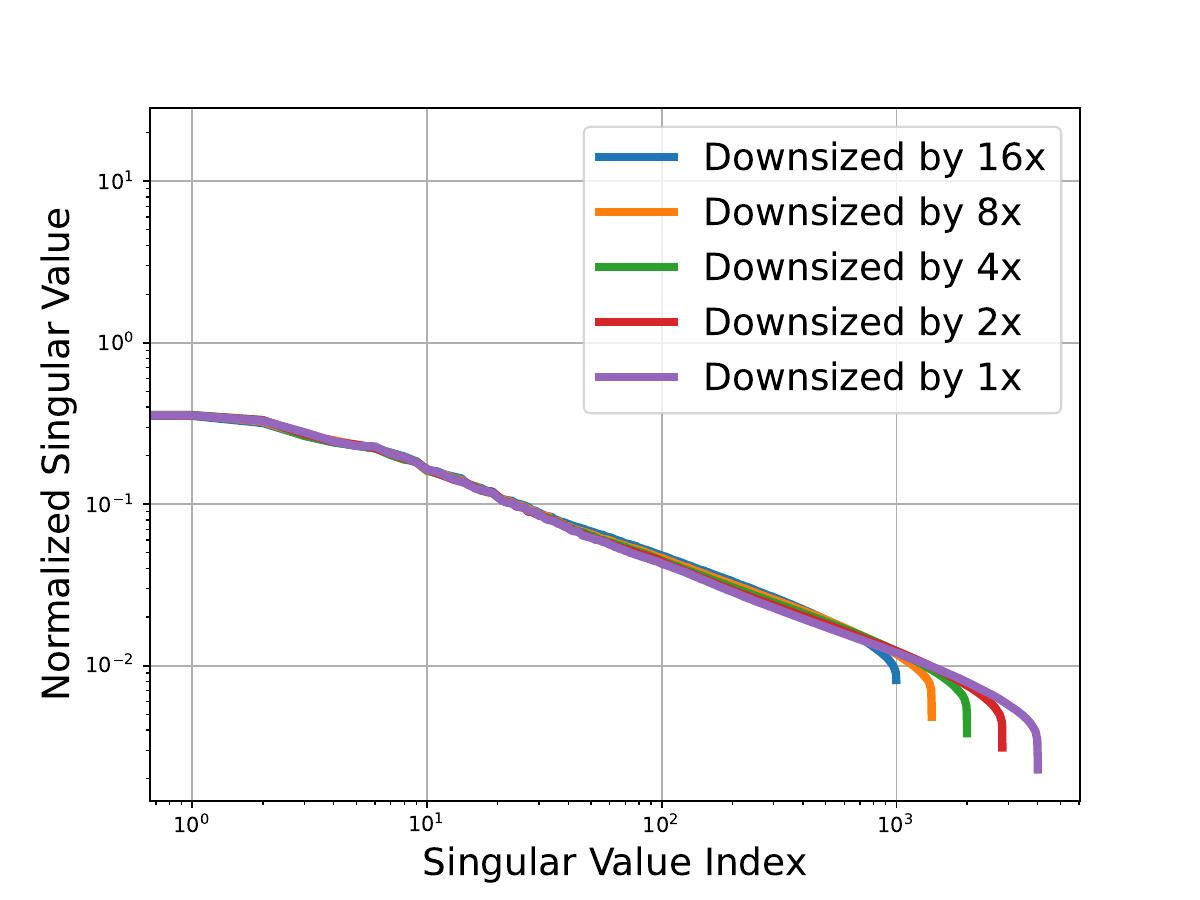}
        \caption{\centering Gemma-1b; $\alpha \approx 0.565$ }
    \end{subfigure}
    \begin{subfigure}{0.40\textwidth}
      \includegraphics[width=\linewidth]{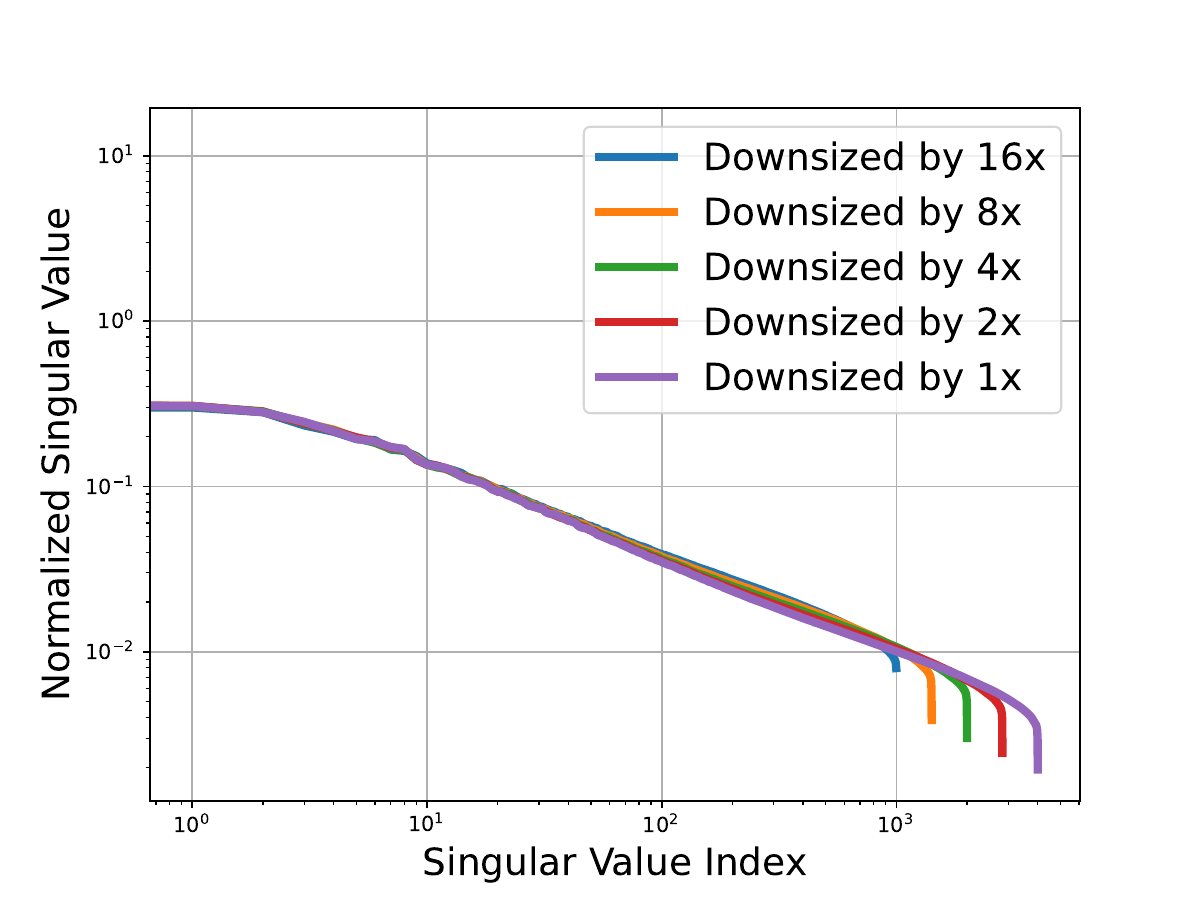}
        \caption{\centering Llama-1b; $\alpha \approx 0.573$ }
      \end{subfigure}
    \begin{subfigure}{0.40\textwidth}
        \includegraphics[width=\linewidth]{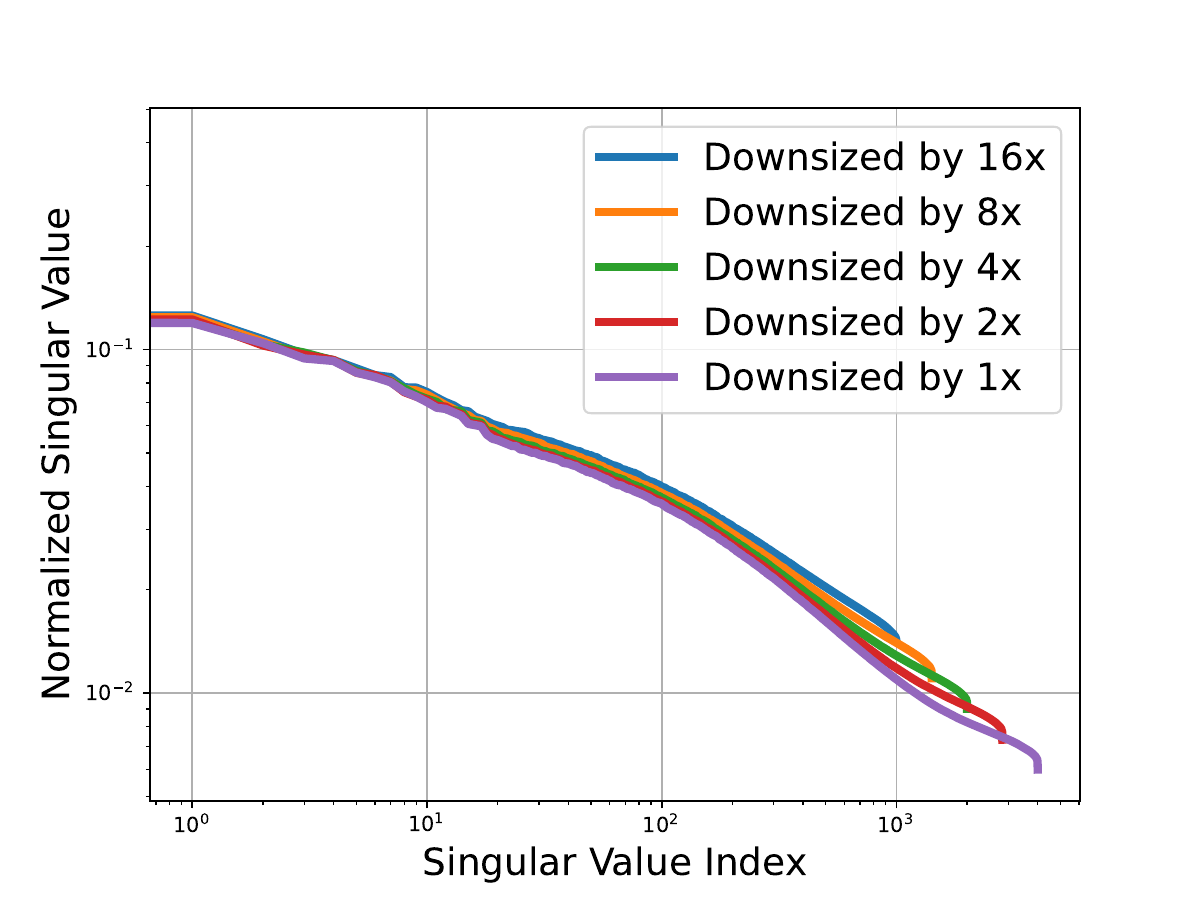}
        \caption{\centering OLMo-1b {(ckpt 0)}; $ \alpha \approx 0.374$}
        \label{fig:olmo1b-notrain-sv}
      \end{subfigure}\hfill
    \begin{subfigure}{0.40\textwidth}
        \includegraphics[width=\linewidth]{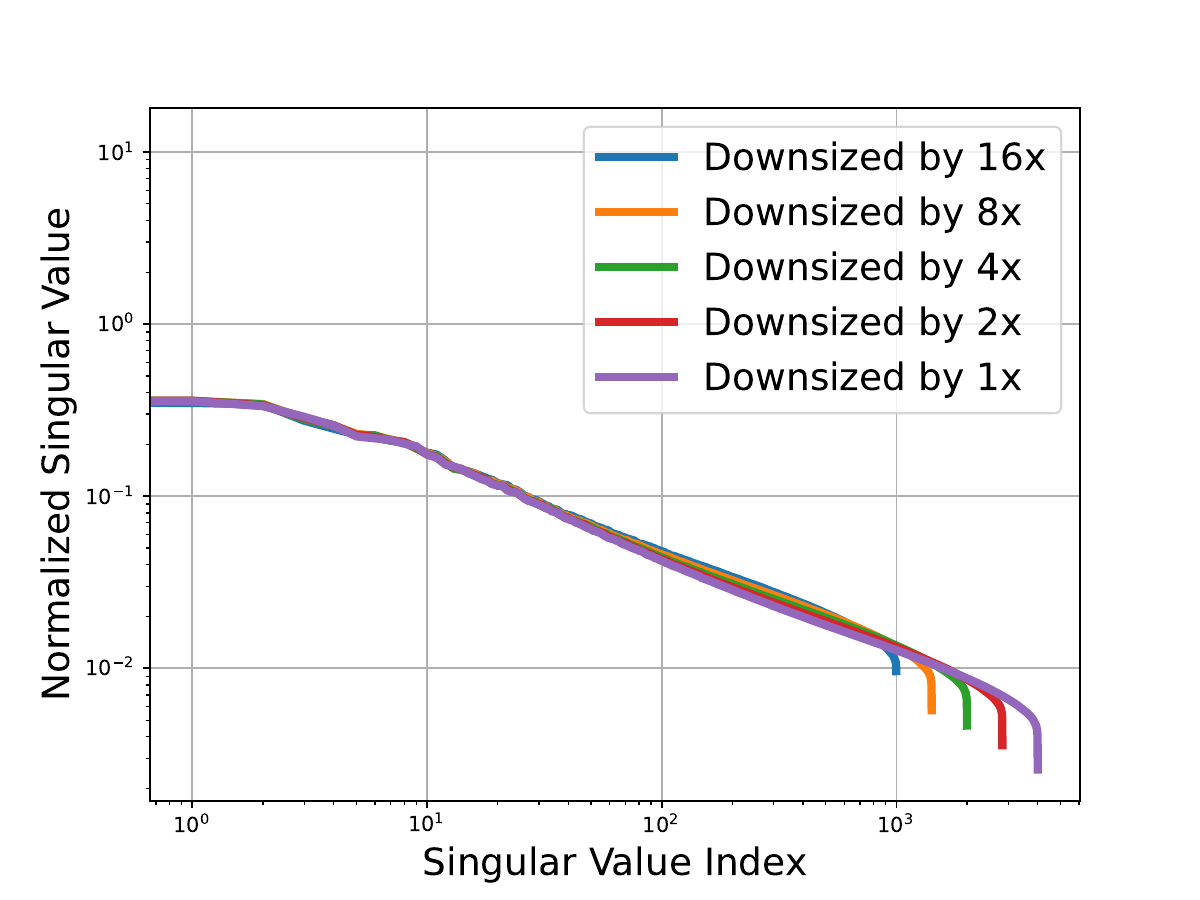}
        \caption{\small \centering Mamba-1.4b; $\alpha  \approx 0.563$}
        \label{fig:wiki-mamba-sv}
      \end{subfigure}      
      \caption{Singular values for $\ML_{M,50}(\MH, \MF)$ for various models $M$; see \Cref{sec:sv-plots-details}.}
      \label{fig:singvals-wiki-appendix}
    \end{figure}
    
\begin{figure}
  \centering
    \begin{subfigure}{0.40\textwidth}
        \includegraphics[width=\linewidth]{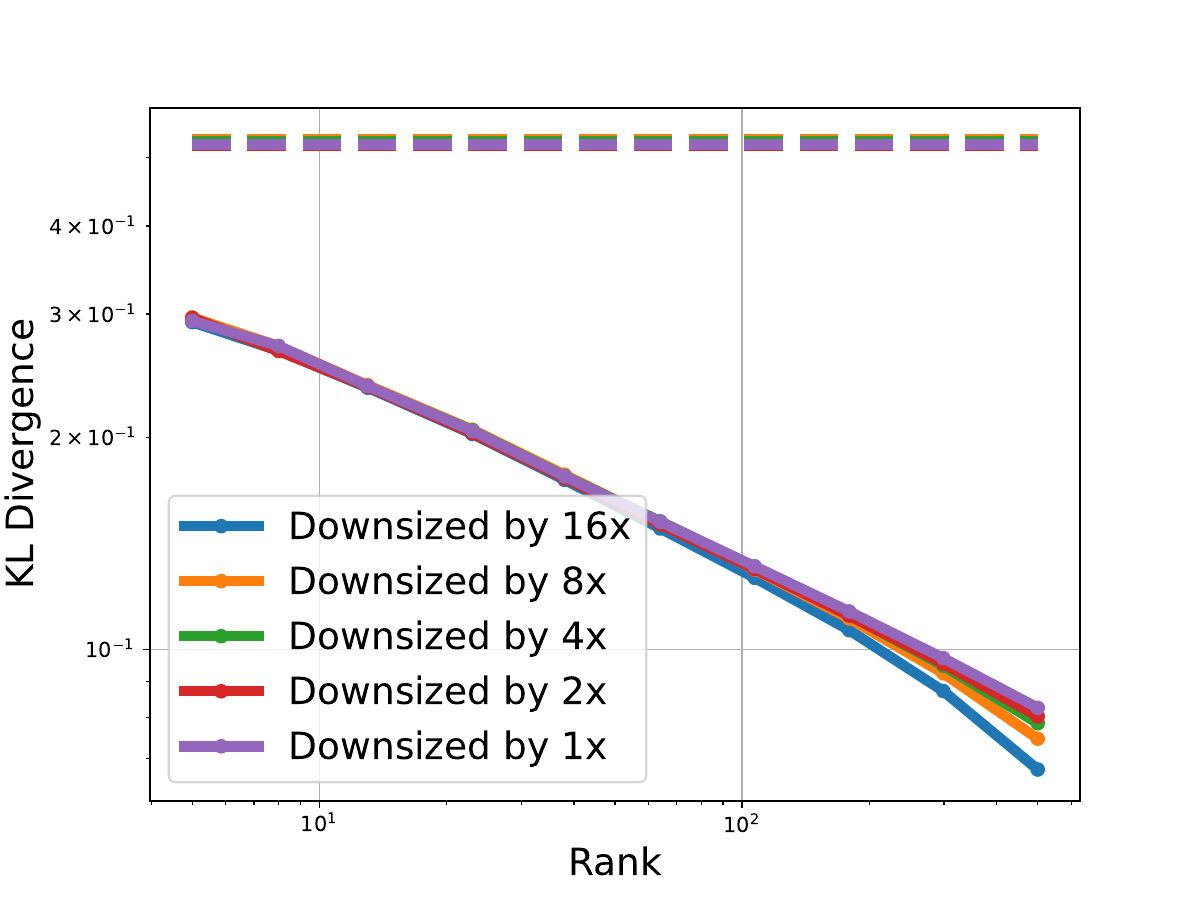}
        \caption{OLMo-1b}
      \end{subfigure}
    \begin{subfigure}{0.40\textwidth}
        \includegraphics[width=\linewidth]{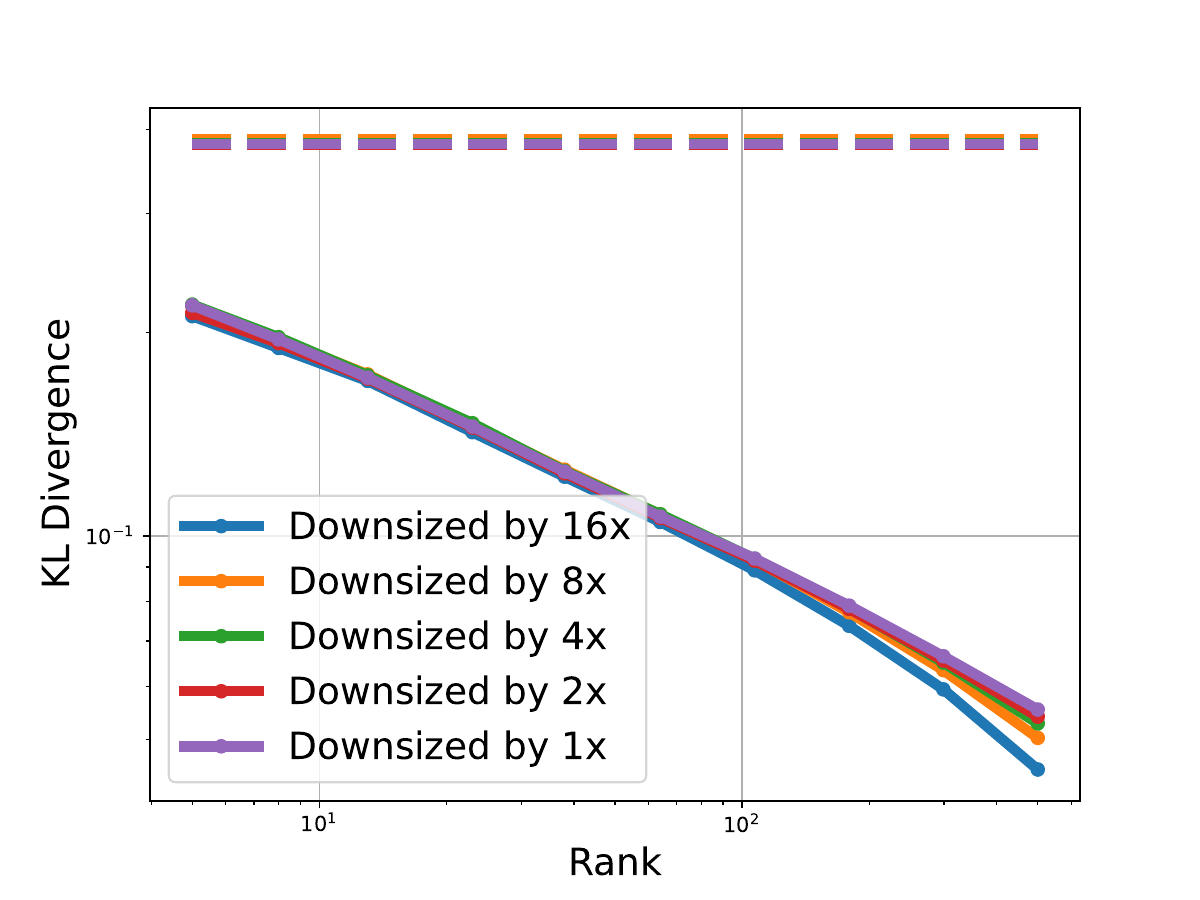}
        \caption{Gemma-1b}
    \end{subfigure}
    \begin{subfigure}{0.40\textwidth}
      \includegraphics[width=\linewidth]{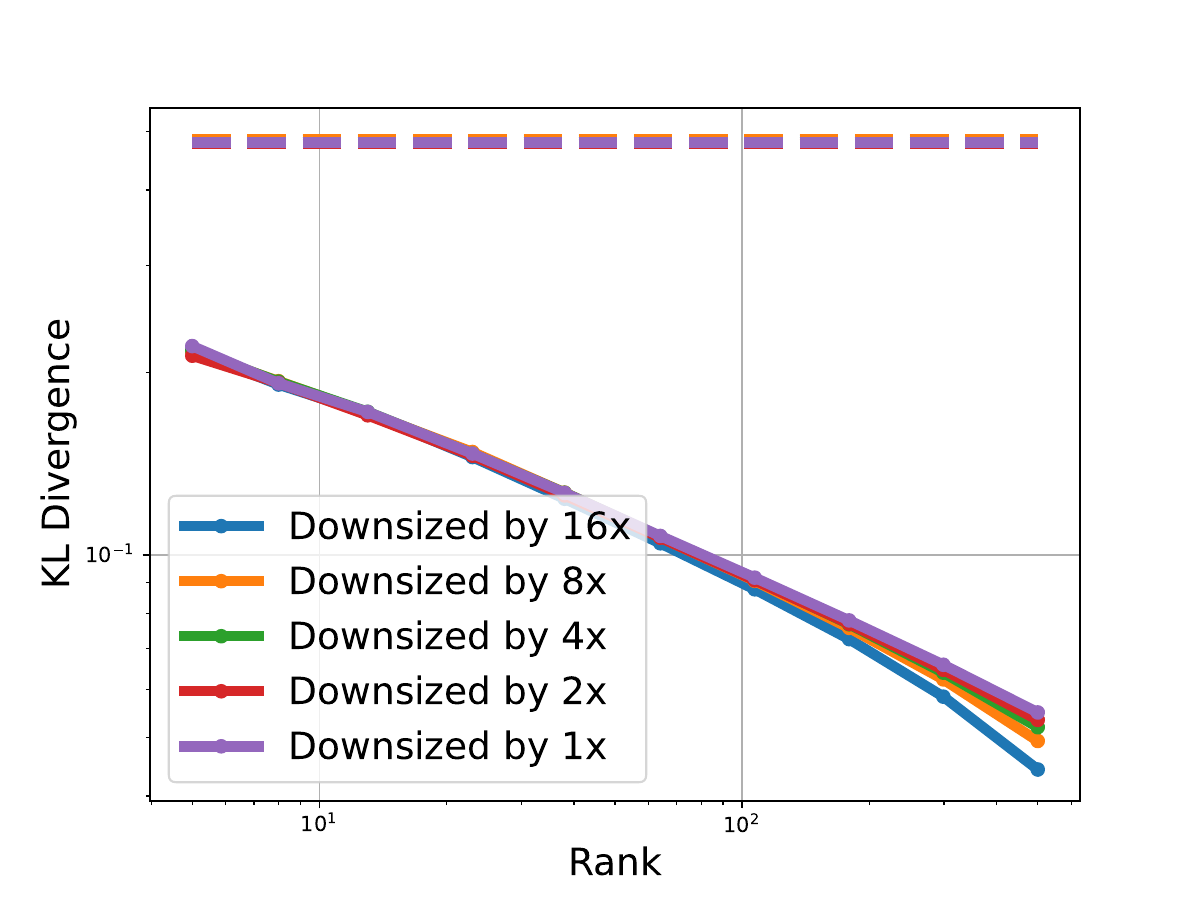}
        \caption{Llama-1b}
      \end{subfigure}
    \begin{subfigure}{0.40\textwidth}
        \includegraphics[width=\linewidth]{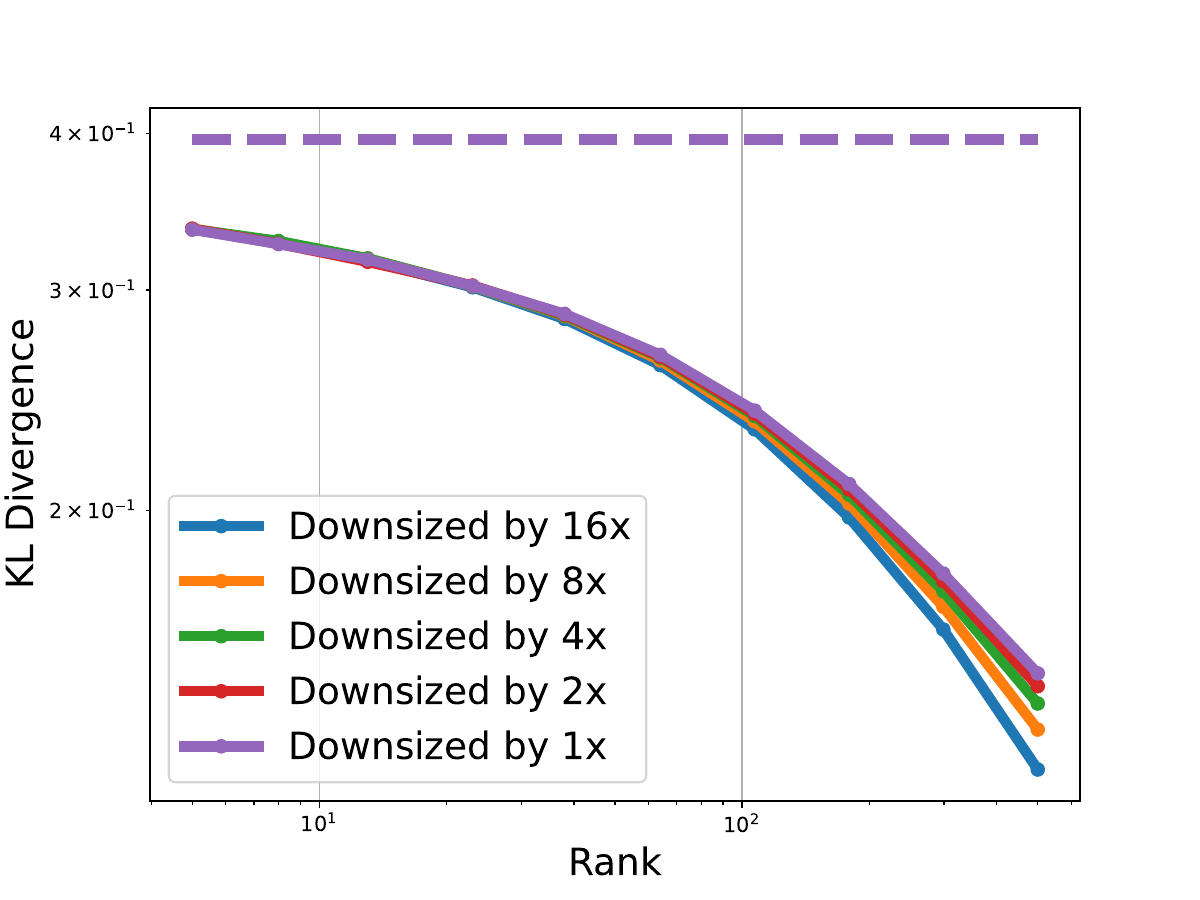}
        \caption{OLMo-1b (ckpt 0)}
        \label{fig:olmo1b-notrain-kl}
      \end{subfigure}
    \begin{subfigure}{0.40\textwidth}
        \includegraphics[width=\linewidth]{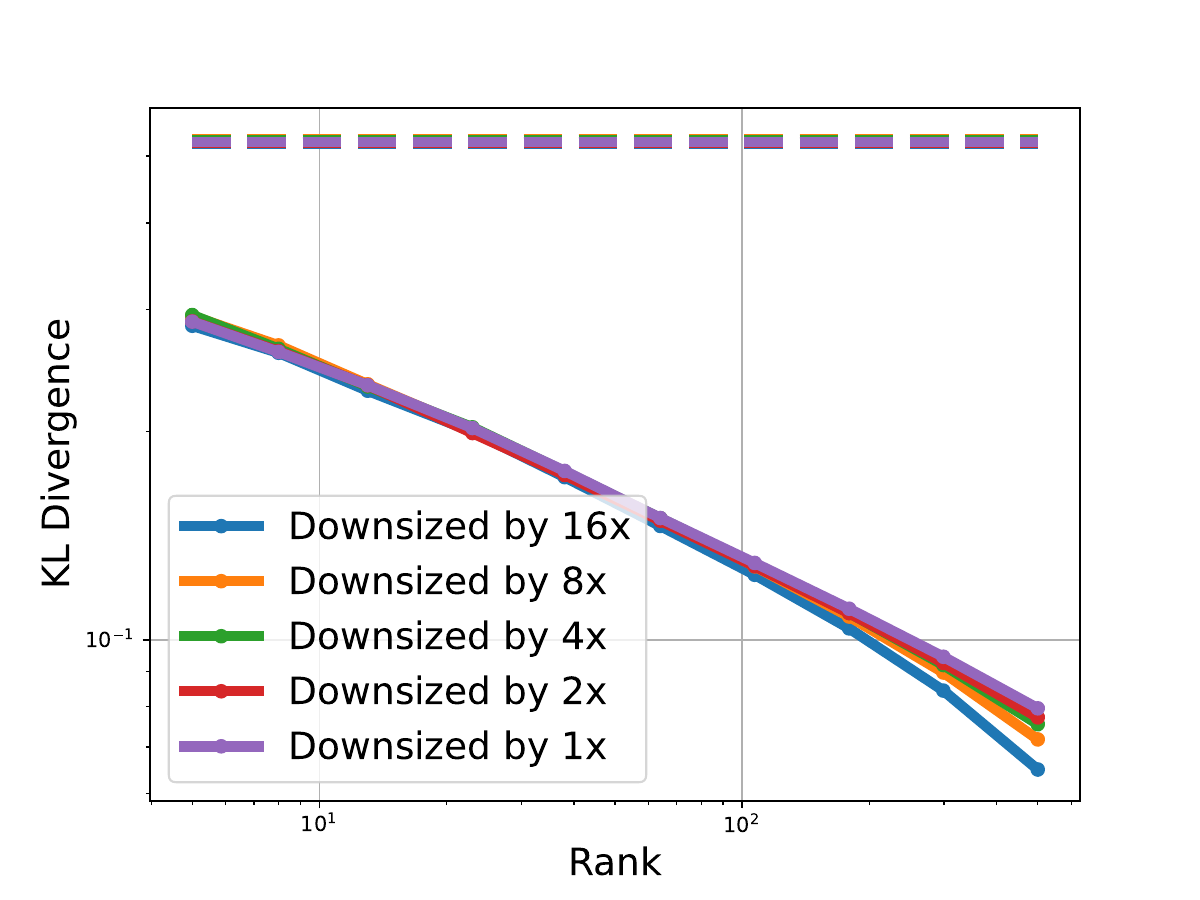}
        \caption{\centering Mamba-1.4b}
        \label{fig:wiki-mamba-kl}
    \end{subfigure}        
      \caption{Average KL divergence to a low-rank approximation for various models $M$; see \Cref{sec:kl-plots-details}.}
      \label{fig:klerrors-wiki-appendix}
    \end{figure}

     \begin{figure}
       \centering
    \begin{subfigure}{0.4\textwidth}
        \includegraphics[width=\linewidth]{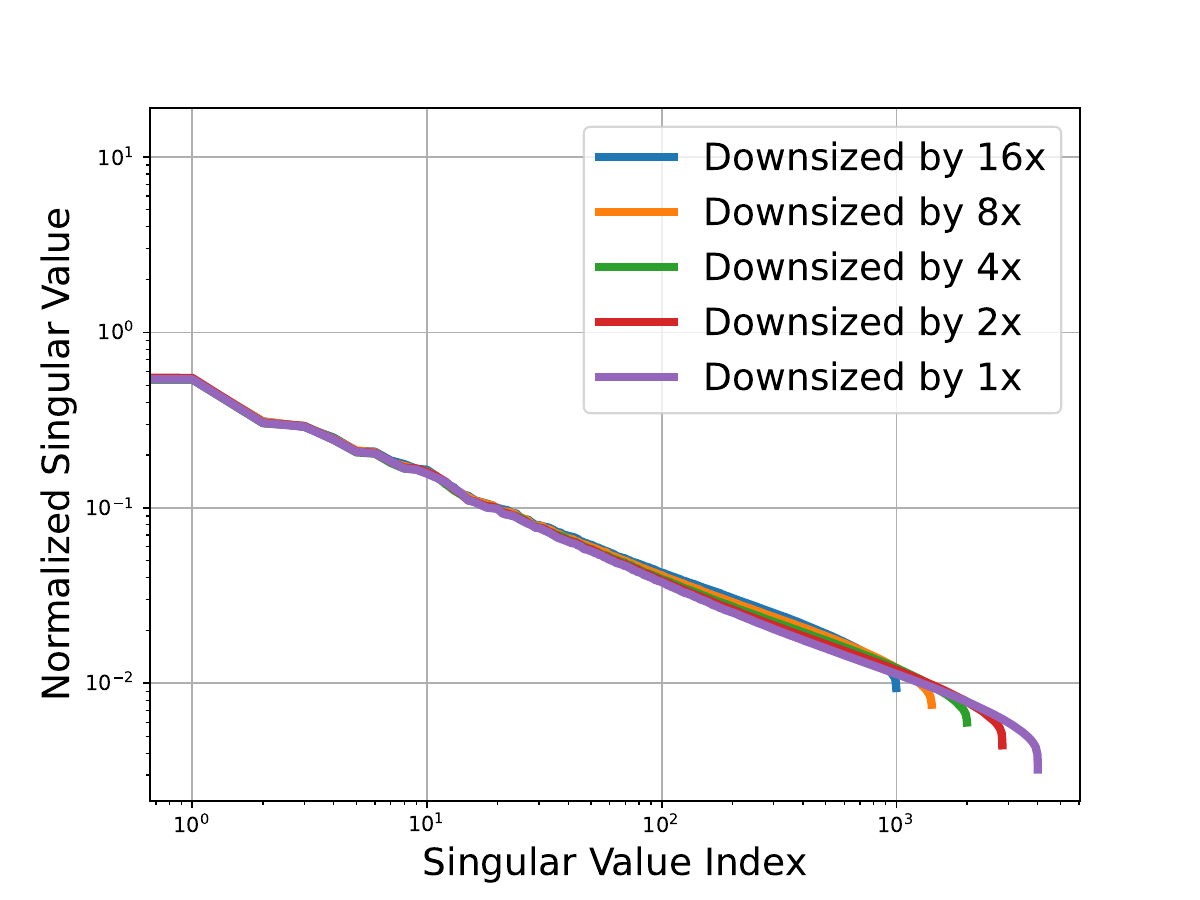}
        \caption{ \texttt{arxiv};\\ $ \alpha \approx 0.598$}
      \end{subfigure}
    \begin{subfigure}{0.4\textwidth}
        \includegraphics[width=\linewidth]{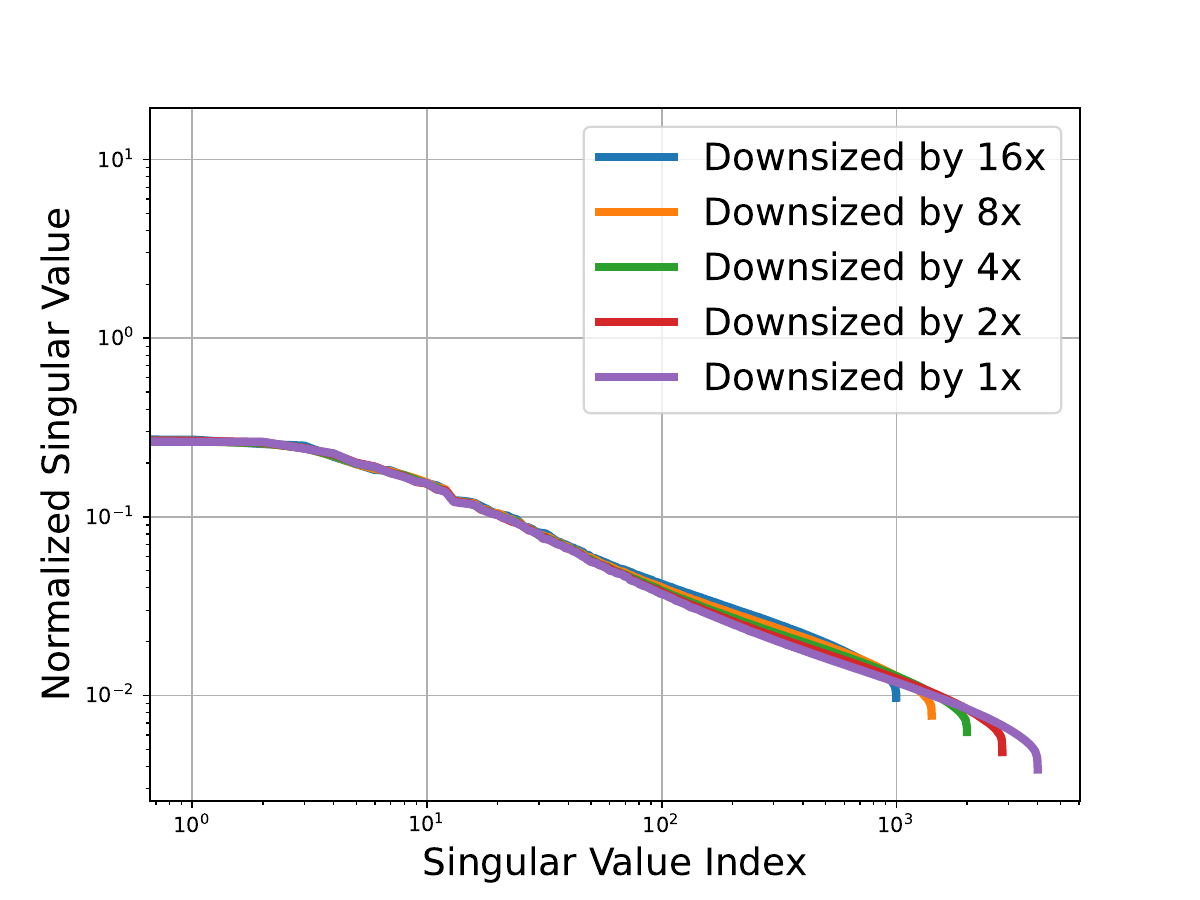}
        \caption{ \texttt{c4};\\ $ \alpha \approx 0.539$}
      \end{subfigure}
    \begin{subfigure}{0.4\textwidth}
        \includegraphics[width=\linewidth]{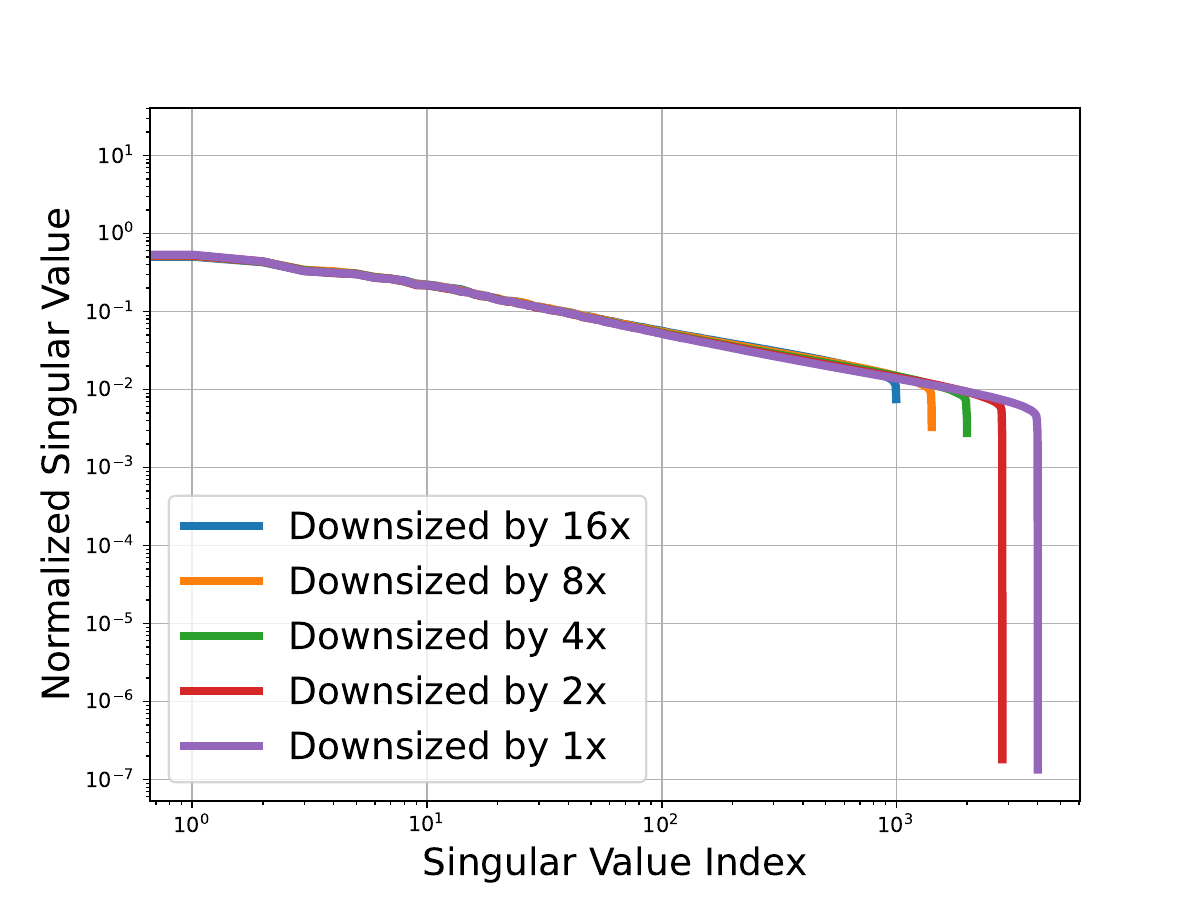}
        \caption{ \texttt{math};\\ $ \alpha \approx 0.590$}
    \end{subfigure}      
    \begin{subfigure}{0.4\textwidth}
        \includegraphics[width=\linewidth]{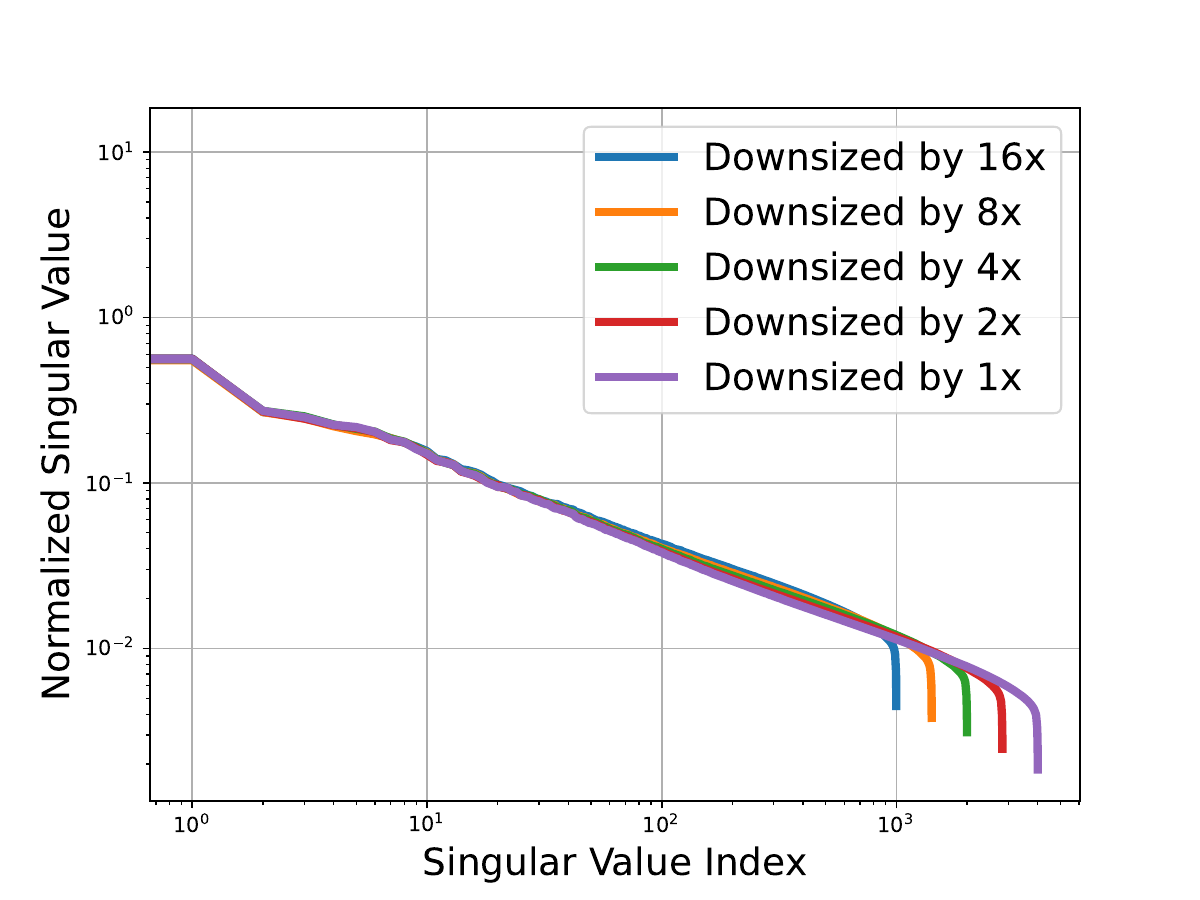}
        \caption{ \texttt{starcoder};\\ $ \alpha \approx 0.589$}
      \end{subfigure}
    \caption{Singular values of the logit matrix $\MF_{M,50}(\MH, \MF)$ when $\MH, \MF$ are chosen from various datasets as detailed in \Cref{sec:low-rank-description}. The model $M$ was OLMo-1b.}
    \label{fig:singvals-datasets}
  \end{figure}

       \begin{figure}
    \centering
    \begin{subfigure}{0.4\textwidth}
        \includegraphics[width=\linewidth]{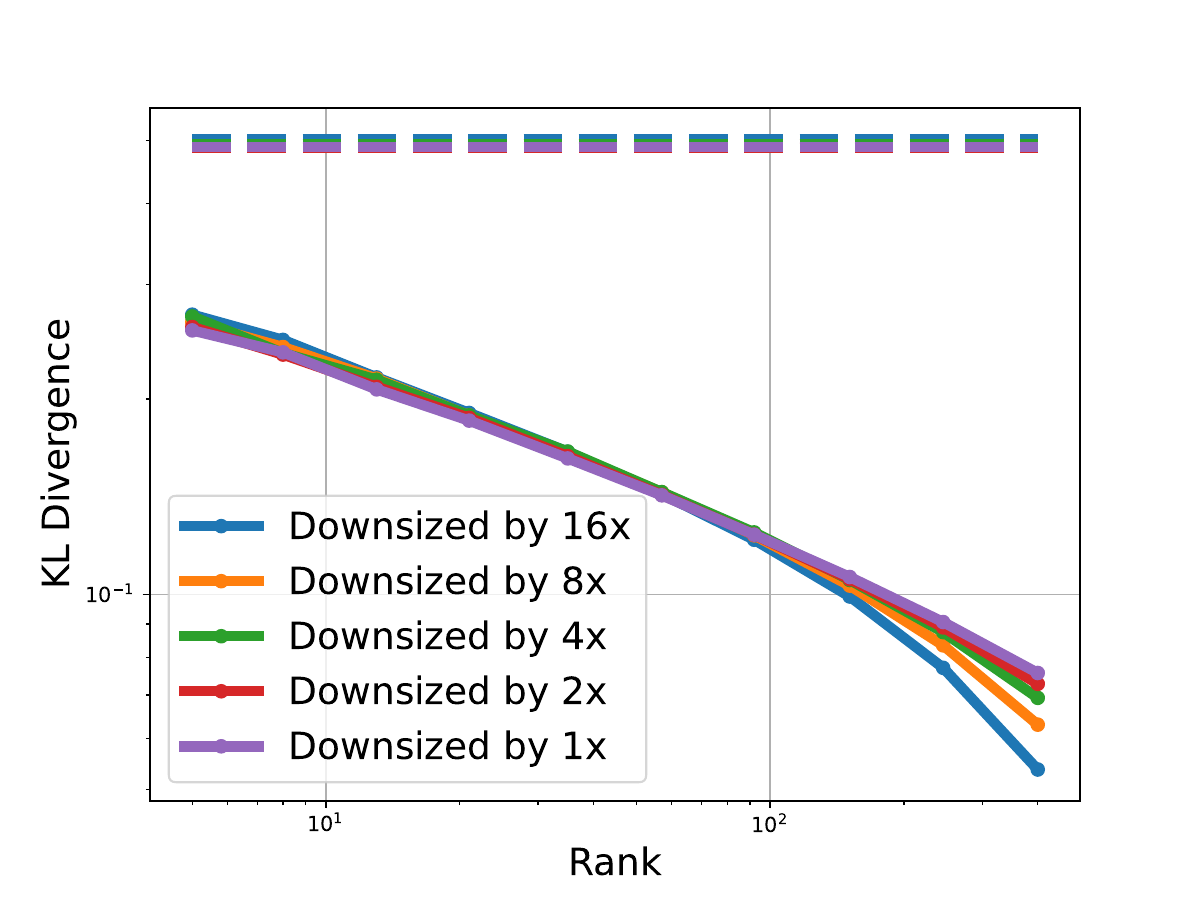}
        \caption{ \texttt{arxiv}}
      \end{subfigure}
    \begin{subfigure}{0.4\textwidth}
        \includegraphics[width=\linewidth]{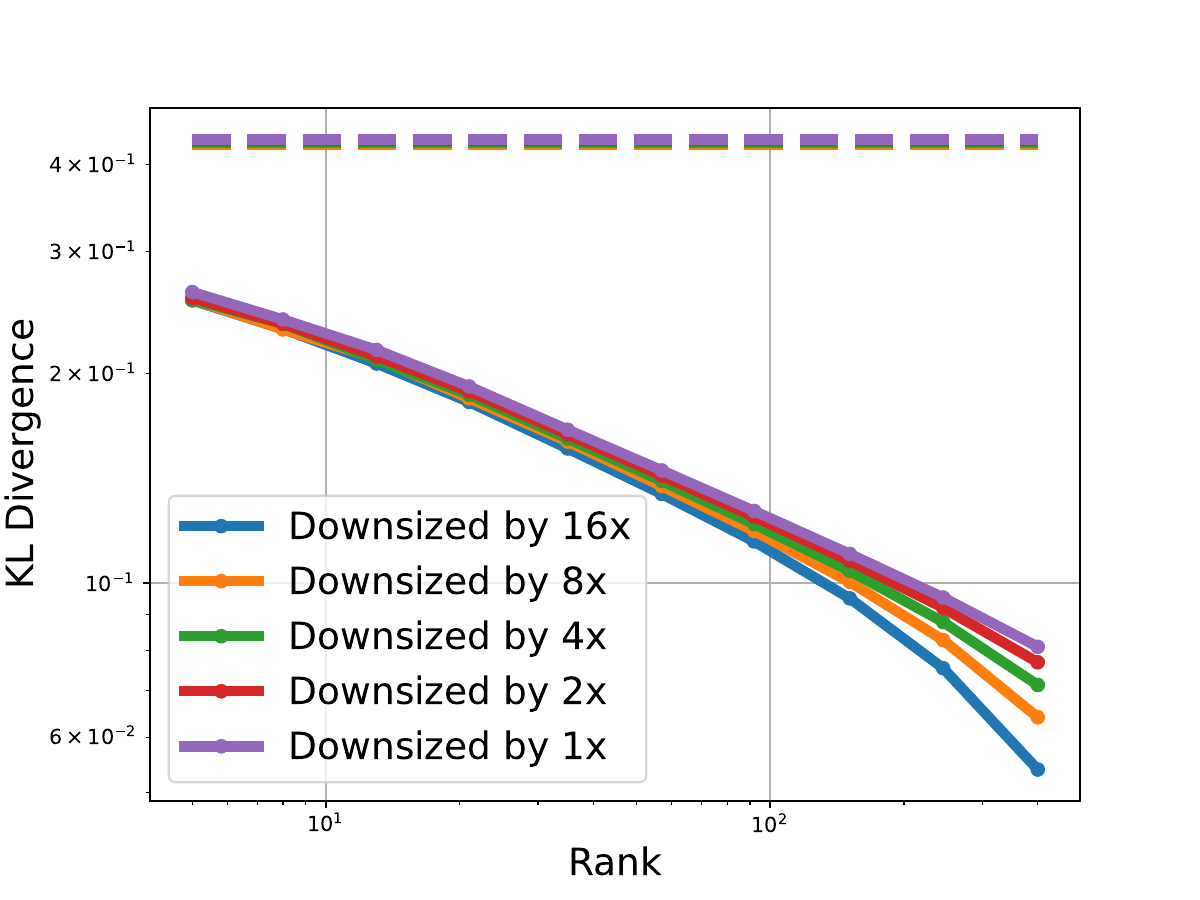}
        \caption{ \texttt{c4}}
      \end{subfigure}
    \begin{subfigure}{0.4\textwidth}
        \includegraphics[width=\linewidth]{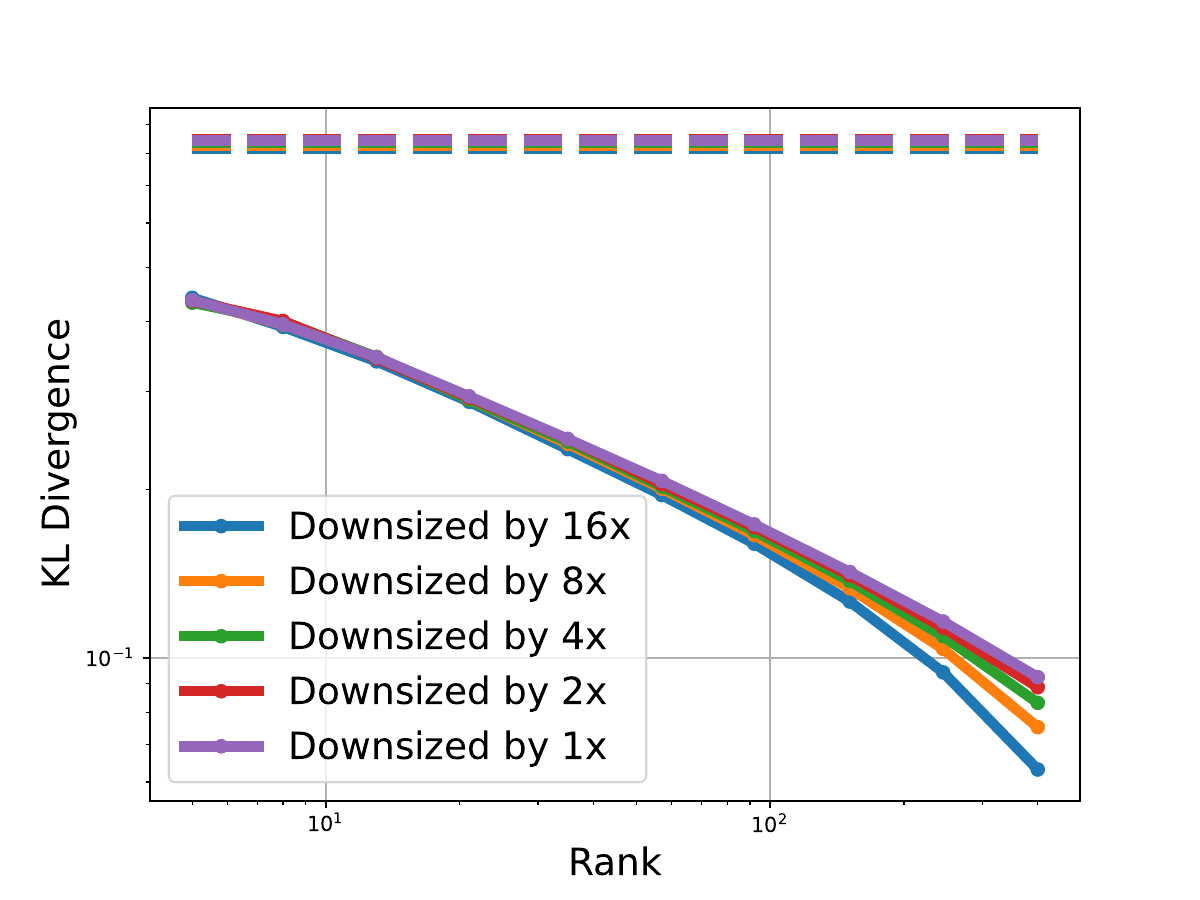}
        \caption{ \texttt{math}}
    \end{subfigure}      
    \begin{subfigure}{0.4\textwidth}
        \includegraphics[width=\linewidth]{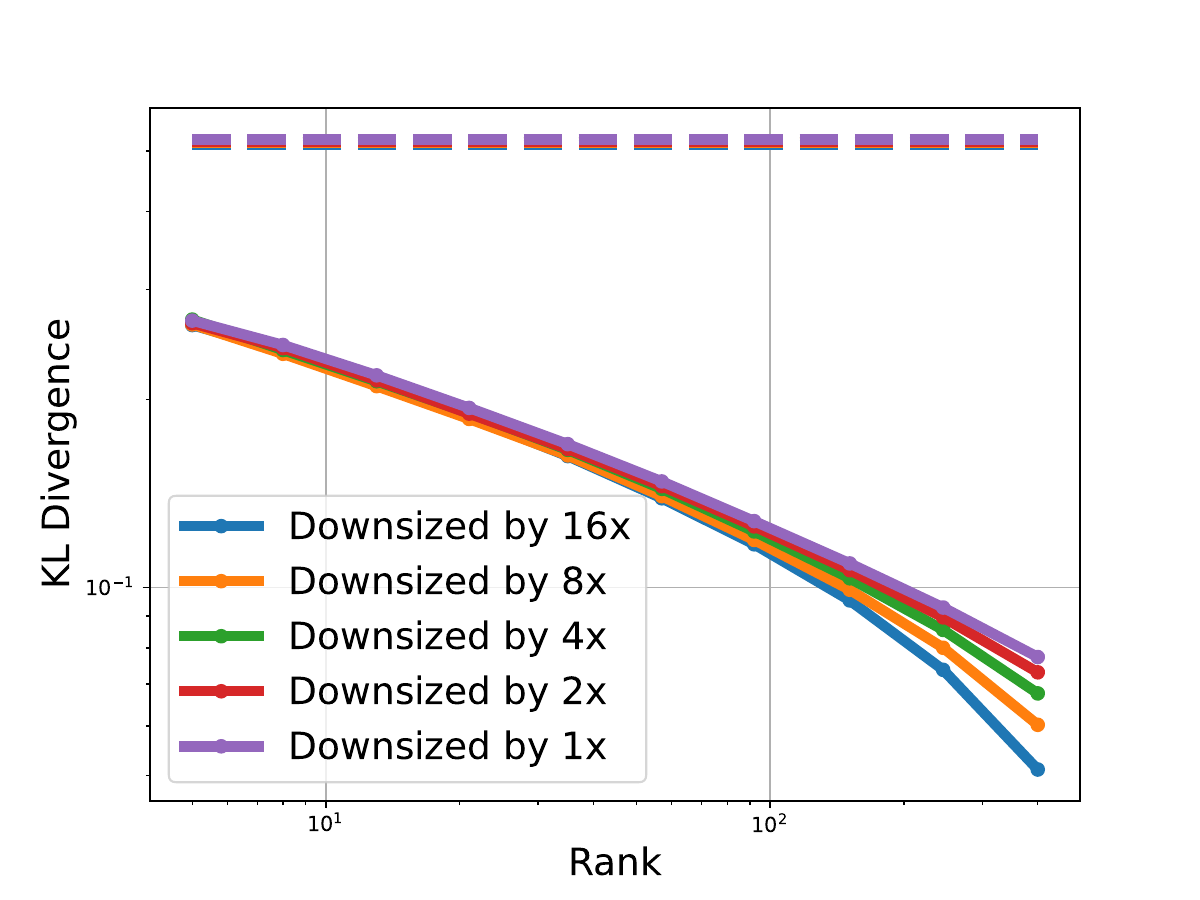}
        \caption{ \texttt{starcoder}}
    \end{subfigure}
    \caption{Average KL errors to a low-rank approximation, for the same logit matrices as described in \Cref{fig:singvals-datasets}.}
    \label{fig:klerrors-datasets}
\end{figure}

\subsubsection{Supporting theoretical results}
Below we collect a couple of standard facts which we use in our discussion of the experiments. For completeness, we provide their proofs (which are standard). 
  \begin{fact}[Phase transition in singular value decay]
   \label{prop:phase-tran}
   Suppose $L \in \BR^{n \times m}$ %
   with $n \leq m$, and its singular values $\sigma_1, \ldots, \sigma_n$ satisfy $\sigma_i \asymp C \cdot i^{-\alpha}$, for some constants $C, \alpha > 0$. Then:
   \begin{enumerate}[leftmargin=0.45cm]
   \item\label{item:lowrank} If $\alpha > 1/2$, then for any $r \in \mathbb{N}$, there is a matrix $A$ of rank $r$ for which ${ \| L-A \|_F \leq \|L\|_F \cdot  O( r^{\frac{1}{2} - \alpha})}$. %
     (Here {$\| \cdot \|_F$ represents the Frobenius norm.})
   \item\label{item:highrank} If $\alpha < 1/2$, then any matrix $A$ of rank $r \leq \frac{n}{2^{1/(1-2\alpha)}}$ satisfies ${\| L-A\|_F \geq \Omega( \| L \|_F)}$. %
   \end{enumerate}
   In the above statements, the $O(\cdot), \Omega(\cdot)$ hide absolute constants. 
 \end{fact}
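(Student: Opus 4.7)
The plan is to reduce both parts to bounding a tail-to-total ratio of squared singular values using the Eckart--Young--Mirsky theorem. Recall that for any rank-$r$ matrix $A$, we have $\|L - A\|_F^2 \geq \sum_{i > r} \sigma_i^2$, with equality achieved by the truncated SVD. Thus it suffices to estimate $\sum_{i > r} \sigma_i^2$ and $\|L\|_F^2 = \sum_{i=1}^n \sigma_i^2$ under the hypothesis $\sigma_i \asymp C \cdot i^{-\alpha}$, so that $\sigma_i^2 \asymp C^2 \cdot i^{-2\alpha}$.

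For part \ref{item:lowrank}, I would approximate $\sum_{i=1}^n i^{-2\alpha}$ by the integral $\int_1^\infty x^{-2\alpha}\, dx$, which converges to a constant depending only on $\alpha$ when $2\alpha > 1$. Hence $\|L\|_F^2 \asymp C^2$ up to an $\alpha$-dependent constant. Similarly, $\sum_{i > r} i^{-2\alpha} \asymp \int_r^\infty x^{-2\alpha}\, dx = \Theta(r^{1-2\alpha})$. Taking $A$ to be the rank-$r$ truncated SVD gives $\|L - A\|_F^2 \leq O(C^2 \cdot r^{1-2\alpha}) \leq O(\|L\|_F^2 \cdot r^{1-2\alpha})$, and taking square roots yields the claimed $O(r^{1/2 - \alpha})$ bound.

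For part \ref{item:highrank}, the same integral comparison gives $\|L\|_F^2 \asymp C^2 \cdot \frac{n^{1-2\alpha} - 1}{1 - 2\alpha} \asymp C^2 \cdot n^{1-2\alpha}$ (up to absolute constants, since $1 - 2\alpha > 0$), and likewise $\sum_{i > r} \sigma_i^2 \asymp C^2 \cdot (n^{1-2\alpha} - r^{1-2\alpha})$. The hypothesis $r \leq n / 2^{1/(1-2\alpha)}$ is chosen exactly so that $r^{1-2\alpha} \leq n^{1-2\alpha}/2$, which gives $\sum_{i > r} \sigma_i^2 \geq \Omega(C^2 \cdot n^{1-2\alpha}) = \Omega(\|L\|_F^2)$. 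Applying Eckart--Young to the left-hand side then proves the lower bound $\|L - A\|_F \geq \Omega(\|L\|_F)$ for every rank-$r$ matrix $A$.

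The proof is essentially routine once Eckart--Young is invoked; there is no conceptual obstacle. The only care needed is in tracking the absolute constants inside $\asymp$ in both directions (upper and lower bounds on $\sigma_i$) to ensure the integral approximations are valid over both the tail sum and the full sum, and in verifying that the threshold $n/2^{1/(1-2\alpha)}$ gives exactly a constant-factor gap between $r^{1-2\alpha}$ and $n^{1-2\alpha}$ so that the difference is a constant fraction of the total.
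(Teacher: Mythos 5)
Your proposal is correct and follows essentially the same route as the paper's proof: truncate the SVD (invoking Eckart--Young for the lower bound in part 2), estimate the tail and total sums of $\sigma_i^2$ by comparison with $\int x^{-2\alpha}\,dx$, and note that the threshold $r \le n/2^{1/(1-2\alpha)}$ forces $r^{1-2\alpha} \le n^{1-2\alpha}/2$. No substantive differences from the paper's argument.
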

 \begin{proof}
   The fact is standard, but we provide the proof anyways. Let us suppose that $C_0, C_1 > 0$ are so that $C_0 \cdot i^{-\alpha} \leq \sigma_i \leq C_1 \cdot i^{-\alpha}$ for all $i$.

\paragraph{Case 1: $\alpha > 1/2$.}   First suppose that $\alpha > 1/2$. We may write the SVD of $L$ as $\sum_{i=1}^n \sigma_i \cdot u_i v_i^\top$, where $\{ u_i \}$ and $\{ v_i \}$ are orthonormal bases of $\BR^n$ and $\BR^m$, respectively. Note that $\| L \|_F = \sqrt{\sum_{i=1}^n \sigma_i^2} \geq C_0 \cdot \Omega\left(\sqrt{\frac{1}{2\alpha-1}}\right)$.

Fix any $r \in \mathbb{N}$, and let $A = \sum_{i=1}^r \sigma_i u_i v_i^\top$. Then $\| L - A \|_F = \sqrt{\sum_{i=r+1}^n \sigma_i^2} \leq C_1 \cdot O \left( \sqrt{\frac{r^{1-2\alpha}}{2\alpha-1}}\right)$. It follows that
\begin{align}
\frac{\| L - A \|_F}{\| L \|_F} \leq O \left(\frac{C_1}{C_0} \cdot r^{\frac{1}{2} - \alpha}\right)\nonumber,
\end{align}
where the $O(\cdot)$ hides dependence on an absolute constant, as desired.

\paragraph{Case 2: $\alpha < 1/2$.} It is a standard fact that for any $r$, the best rank-$r$ approximation to a matrix $L$ is given by truncating its SVD at rank-$r$, i.e., for any rank-$r$ matrix $A$, we have that
\begin{align}
\| L - A \|_F \geq \left\| L - \sum_{i=1}^r \sigma_i u_i v_i^\top \right\|_F = \sqrt{\sum_{i=r+1}^n \sigma_i^2} \geq C_0 \cdot \Omega \left( \sqrt{\frac{n^{1-2\alpha} - r^{1-2\alpha}}{2\alpha-1}}\right)\nonumber.
\end{align}
But $\| L \|_F \leq C_1 \cdot O \left( \sqrt{\frac{n^{1-2\alpha}}{1-2\alpha}} \right)$, meaning that for $r \leq \frac{n}{2^{1/(1-2\alpha)}}$, we have
$
\frac{\| L - A \|_F}{\| L \|_F} \geq \Omega(C_0 / C_1),
$
as desired. 
 \end{proof}

\begin{fact}
   \label{fact:softmax-frobenius}
   Consider sets $\MH, \MF$ of sizes $n,m$ respectively. Then for any matrices $A,A' \in \BR^{\MH \times (\MF \times \Sigma)}$, it holds that 
   \begin{align}
\sum_{h \in \MH, f \in \MF} \kld{\softmax(A_{h,f})}{\softmax(A'_{h,f})} \leq \frac 12 \| A-A' \|_F^2\nonumber.
   \end{align}
 \end{fact}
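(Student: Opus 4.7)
The plan is to reduce the claim to the standard per-row bound
\[
\kld{\softmax(u)}{\softmax(v)} \leq \tfrac{1}{2}\|u - v\|_2^2 \qquad \text{for all } u, v \in \mathbb{R}^\Sigma.
\]
Once this is in hand, summing over all pairs $(h,f) \in \MH \times \MF$ and applying it with $u = A_{h,f}$ and $v = A'_{h,f}$ immediately gives the desired inequality, since by definition of the Frobenius norm (organizing the columns of $A, A'$ into blocks of size $|\Sigma|$ indexed by $f \in \MF$) we have $\sum_{h,f}\|A_{h,f} - A'_{h,f}\|_2^2 = \|A - A'\|_F^2$.

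To prove the per-row bound, I would interpret the KL divergence as a Bregman divergence of the log-partition function $\phi(x) = \log \sum_{z \in \Sigma} e^{x_z}$. A short computation using $\nabla \phi(u) = \softmax(u)$ yields the identity
\[
\kld{\softmax(u)}{\softmax(v)} = \phi(v) - \phi(u) - \langle \nabla \phi(u), v - u\rangle.
\]
Taylor's theorem with integral remainder then expresses this Bregman divergence as
\[
\int_0^1 (1-t)\,(v-u)^\top \nabla^2 \phi\!\left(u + t(v-u)\right)(v-u)\,dt,
\]
while the standard calculation of the Hessian gives $\nabla^2 \phi(x) = \mathrm{diag}(p) - pp^\top$ with $p = \softmax(x)$.

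The key step is therefore the Hessian bound $\nabla^2 \phi(x) \preceq I$. This follows by observing that for any $w \in \mathbb{R}^\Sigma$, the quadratic form $w^\top \nabla^2 \phi(x) w = \mathrm{Var}_p(W)$, where $W$ is a random variable taking value $w_z$ with probability $p_z$, and hence is at most $\mathbb{E}_p[W^2] \leq \|w\|_\infty^2 \leq \|w\|_2^2$. Plugging this PSD bound into the Taylor integral produces the factor $\int_0^1 (1-t)\,dt = \tfrac12$ in front of $\|u-v\|_2^2$. The main ``obstacle'' is really just this Hessian bound, but it is a one-line variance argument, so I expect no substantial difficulty; the only thing to watch is that the bound is genuinely $\preceq I$ (rather than some larger multiple) so that the stated constant $\tfrac12$ is recovered exactly.
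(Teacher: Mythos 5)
Your proposal is correct and follows essentially the same route as the paper's proof: reduce to the per-pair bound, write the KL divergence as the Bregman divergence of the log-partition function $F(x)=\log\sum_z e^{x_z}$, and conclude from the Hessian bound $\nabla^2 F = \mathrm{diag}(p)-pp^\top \preceq I$ (i.e., $1$-smoothness), which yields the constant $\tfrac12$. Your variance argument for the Hessian bound and the explicit integral remainder are just slightly more detailed versions of the same steps.
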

 \begin{proof}
   It follows immediately from the definition of average KL divergence that it suffices to prove the following fact: if $v,v' \in \BR^d$, then letting $p = \softmax(v), \ p' = \softmax(v')$ (so that $p,p' \in \Delta^d$), then we have $\kld{p}{p'} \leq \|v-v'\|_2^2$. To prove this fact, let us define $F(v) = \log \sum_{i=1}^d e^{v_i}$. Then a direct computation yields
   \begin{align}
\kld{p}{p'} = F(v) - F(v') - \langle \nabla F(v), v'-v \rangle\label{eq:kld-pp}.
   \end{align}
   Next, for any $x \in\BR^d$, for $q = \softmax(x)$ we have $\nabla^2 F(x) = \mathrm{diag}(q) - qq^\top \preceq I_d$, i.e., $F$ is $1$-gradient Lipschitz.  But this yields that the right-hand side of \Cref{eq:kld-pp} is bounded above by $\frac 12 \cdot \| v-v'\|_2^2$, as desired. \
 \end{proof}

 \subsubsection{Ablations for the parameter $k$}
 \label{sec:k-ablations}
 Recall that our experiments exhibiting approximate low-rank structure of the logit matrix actually apply to the sub-matrices $\ML_{M,k}(\MH, \MF)$ described in \Cref{sec:low-rank-experiments}, for $k = 50$. In \Cref{fig:k-ablation}, we present two modifications to this approach (all with OLMo-1b): in \Cref{fig:k200-svs,fig:k200-kl} we measure the low-rank approximability of $\ML_{M,200}(\MH, \MF)$, for sets $\MH, \MF$. Due to memory limitations, we took $|\MH|, |\MF|$ to be of size approximately $2000$ in \Cref{fig:k200-svs} and to be of size approximately $5000$ for \Cref{fig:k200-kl}.

 Next, in \Cref{fig:randk-svs,fig:randk-kl}, we measure the low-rank approximability of a submatrix $\tilde \ML_{M, 50}(\MH, \MF)$ obtained from $\ML_M(\MH, \MF)$ by sampling \emph{randomly} $50$ tokens per future and selecting the corresponding columns of $\ML_M(\MH, \MF)$. For these figures we used $\MH, \MF$ of sizes approximately $4000$.

 All of the aforementioned ablations show similar evidence of low-rank structure similarly to the logit matrices $\ML_{M,50}(\MH, \MF)$. We remark that, in general, as the size of $\MH, \MF$ are increased, the fitted power law exponent $\alpha$ describing the decay of the singular values tends to decrease slightly, perhaps because the effect of the slight ``hump'' for the first few singular values is diminished. This may explain why the exponents $\alpha$ in \Cref{fig:k200-svs,fig:randk-svs} are slightly smaller than in \Cref{fig:olmo1b-sv-wiki-appendix}. 

       \begin{figure}
    \centering
    \begin{subfigure}{0.4\textwidth}
        \includegraphics[width=\linewidth]{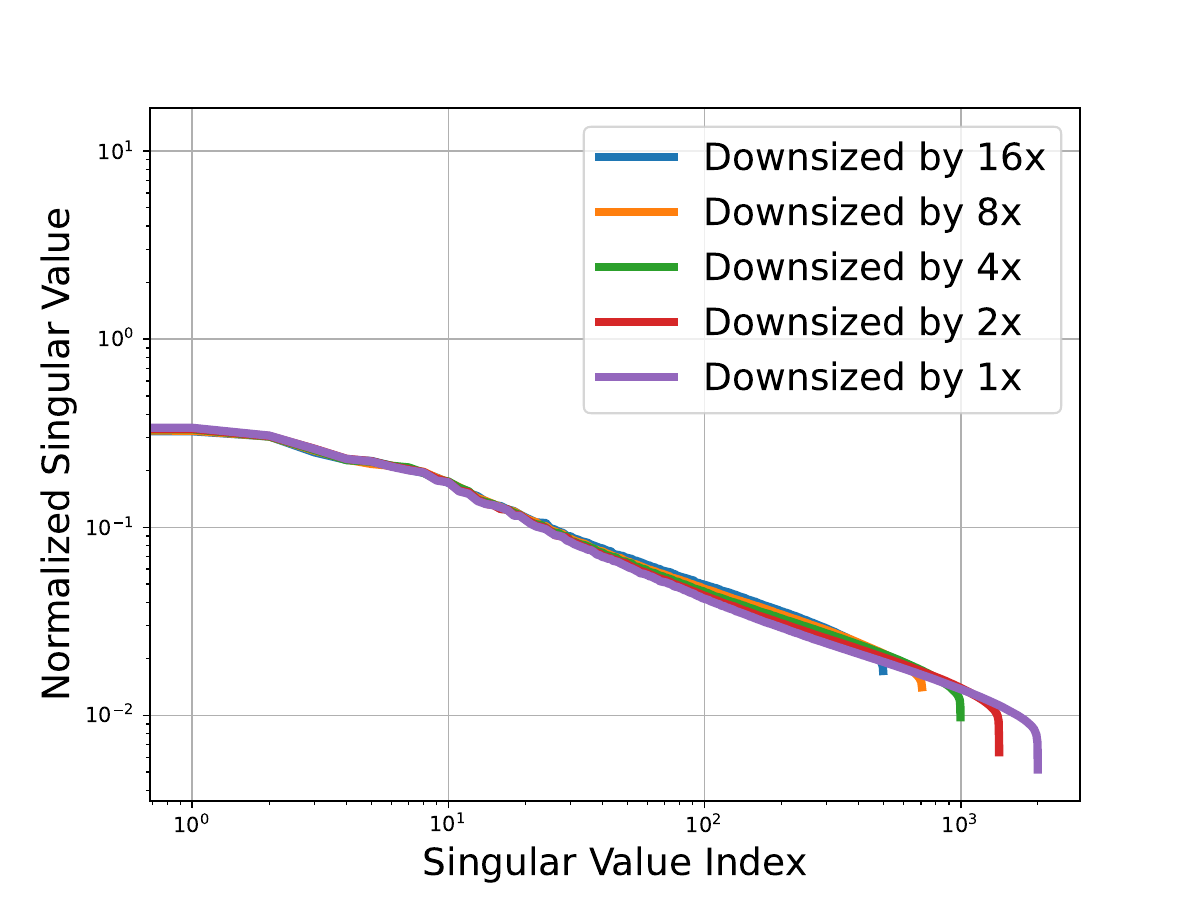}
        \caption{\centering $k = 200$ sing.~vals; $\alpha \approx 0.543 $}
        \label{fig:k200-svs}
      \end{subfigure}
      \begin{subfigure}{0.4\textwidth}
        \includegraphics[width=\linewidth]{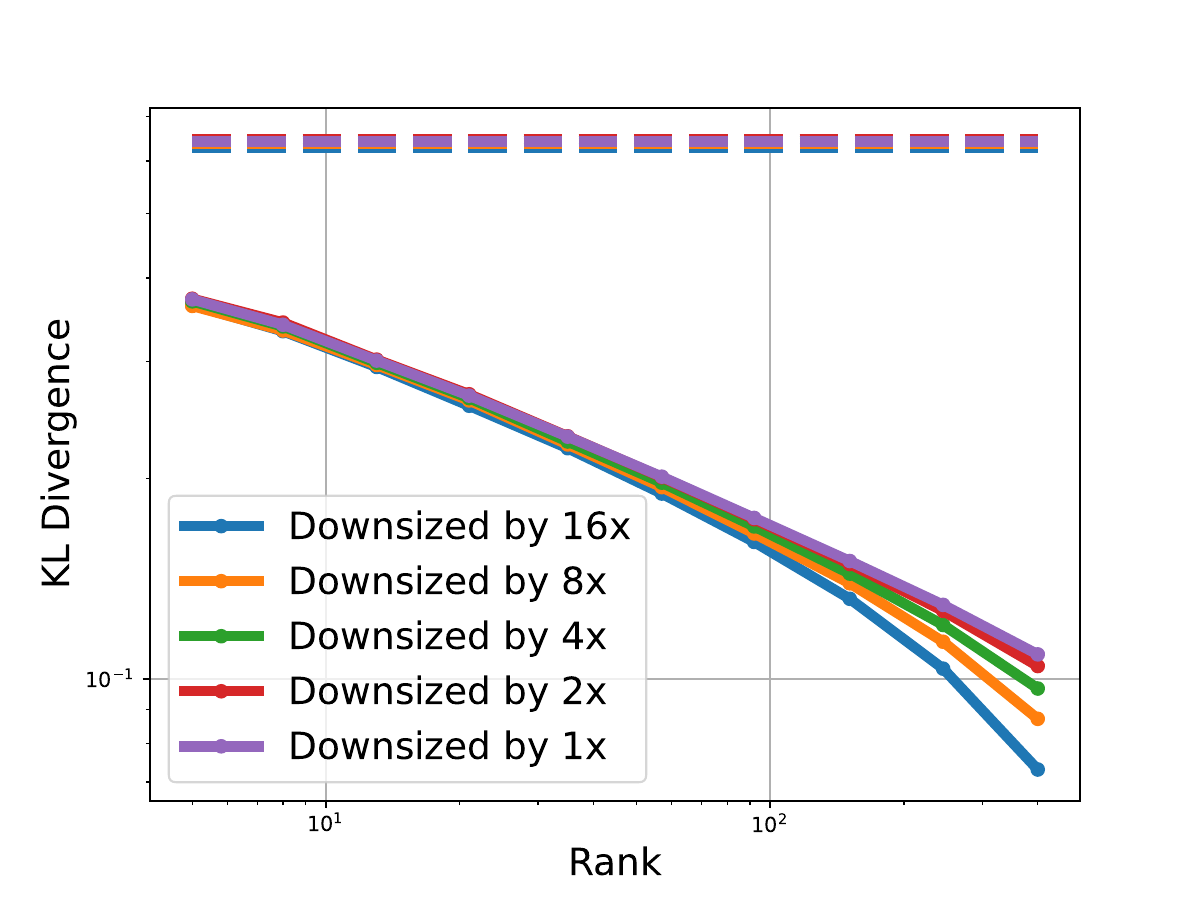}
        \caption{\centering $k = 200$; Avg.~KL Divergence}
        \label{fig:k200-kl}
      \end{subfigure}
          \begin{subfigure}{0.4\textwidth}
        \includegraphics[width=\linewidth]{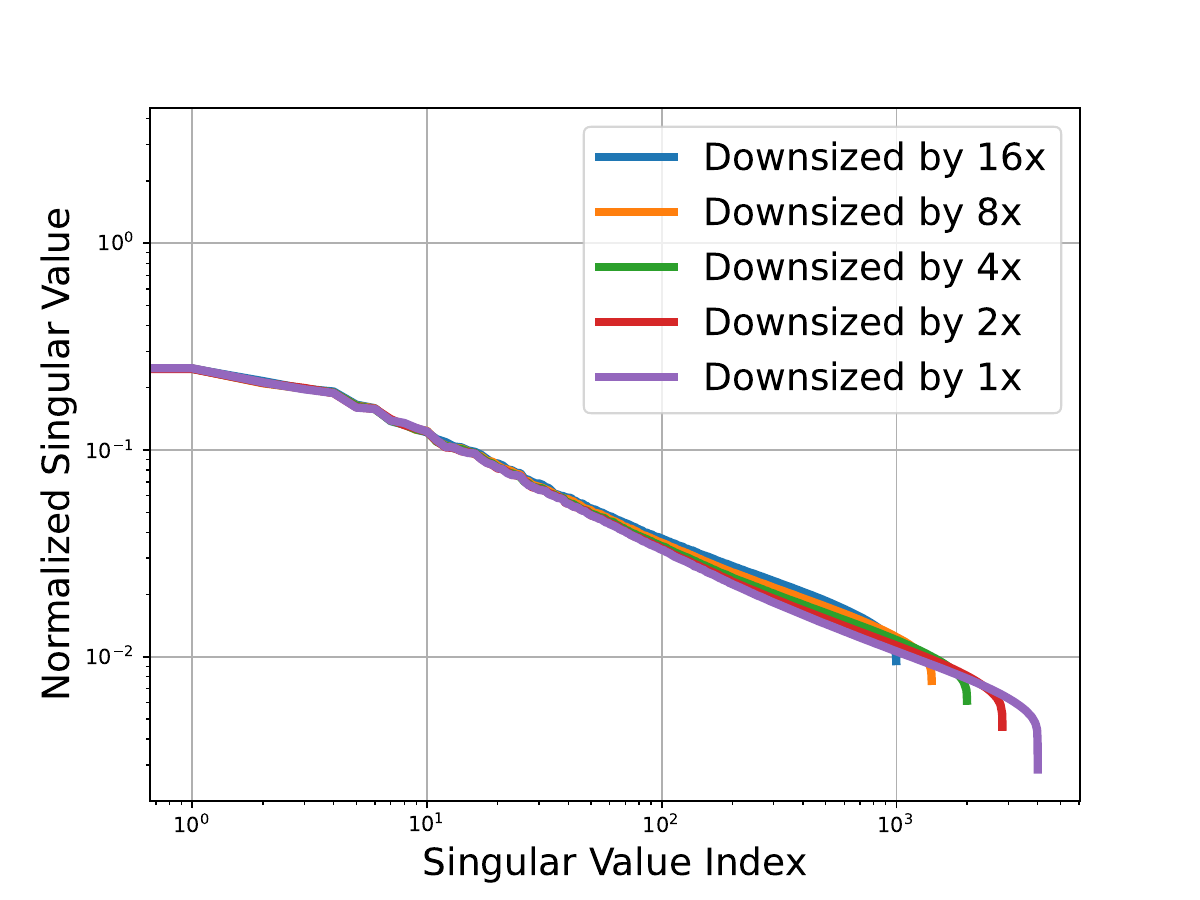}
        \caption{\centering $k = 50$ w/ random tokens, sing.~vals; $\alpha \approx 0.527 $}
        \label{fig:randk-svs}
      \end{subfigure}
                  \begin{subfigure}{0.4\textwidth}
        \includegraphics[width=\linewidth]{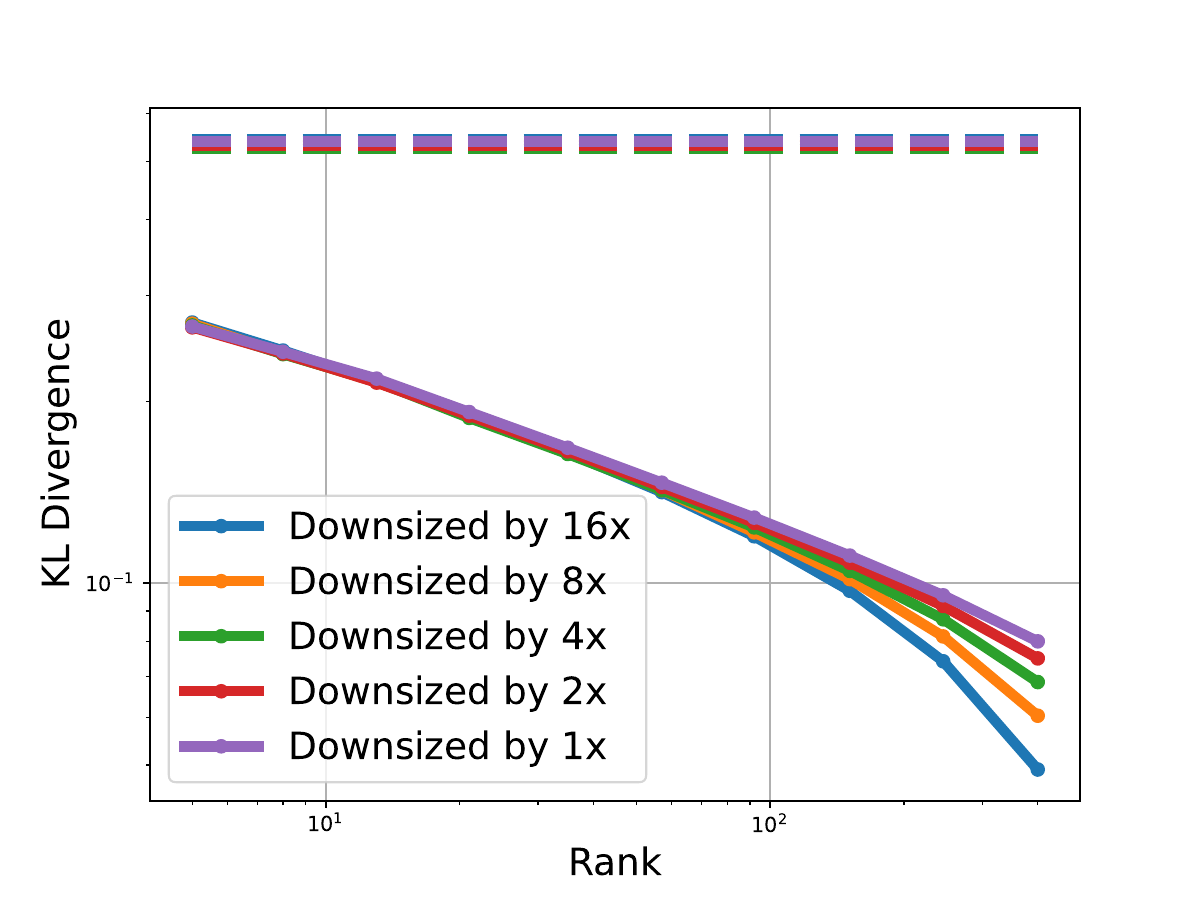}
        \caption{\centering $k = 50$ w/ random tokens; Avg.~KL Divergence}
        \label{fig:randk-kl}
      \end{subfigure}

      \caption{(\textbf{(a)}, \textbf{(b)}): Low-rank approximability of $\ML_{M,200}(\MH, \MH)$. \textbf{(c)}, \textbf{(d)}: Low-rank approximability of $\tilde \ML_{M,50}(\MH, \MF)$; see \Cref{sec:k-ablations}.}
      \label{fig:k-ablation}
    \end{figure}

 \subsection{Subspace comparison}\label{sec:subspace-comparison}
 \begin{figure}[htbp]
    \centering

    \begin{subfigure}{0.3\textwidth}
        \includegraphics[width=\linewidth]{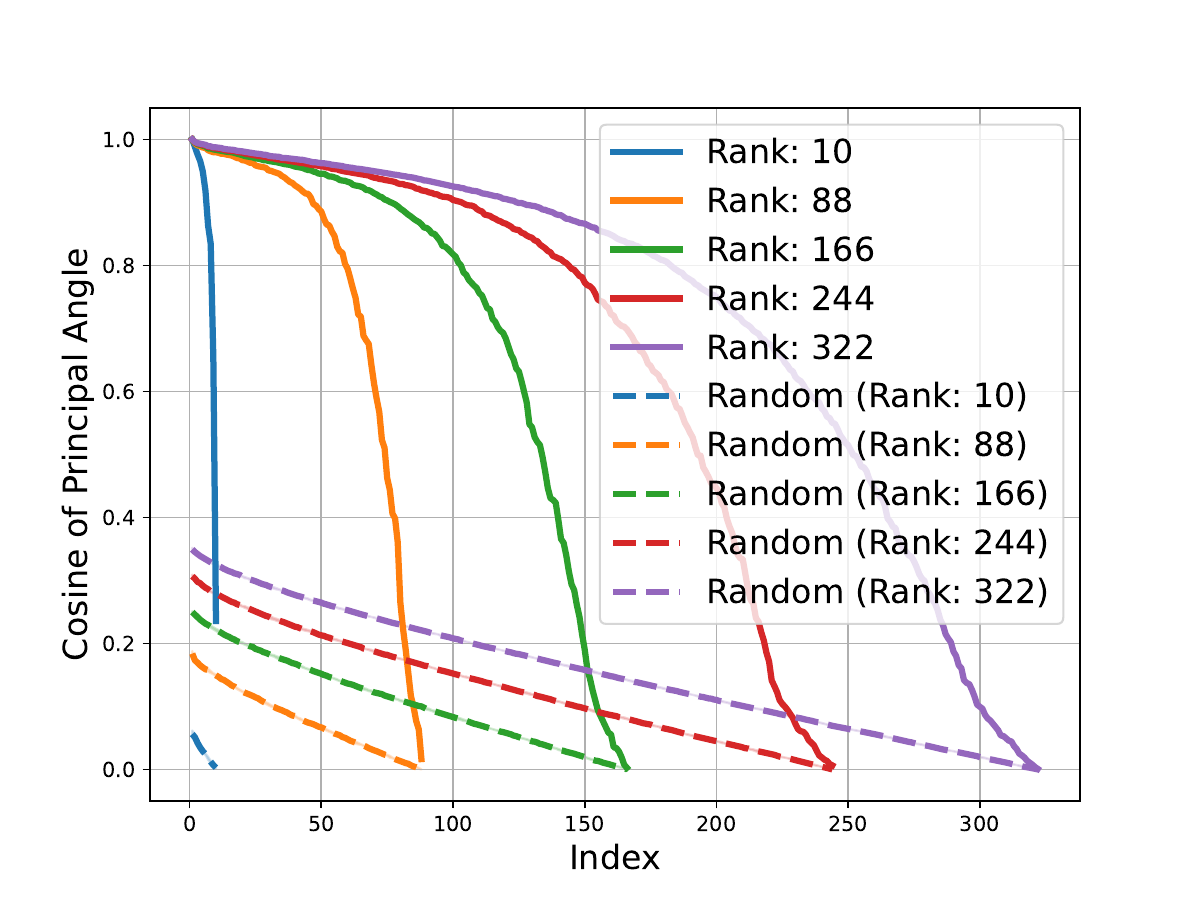}
        \caption{Gemma-1B vs Mamba-1.4B}
    \end{subfigure}\hfill
    \begin{subfigure}{0.3\textwidth}
        \includegraphics[width=\linewidth]{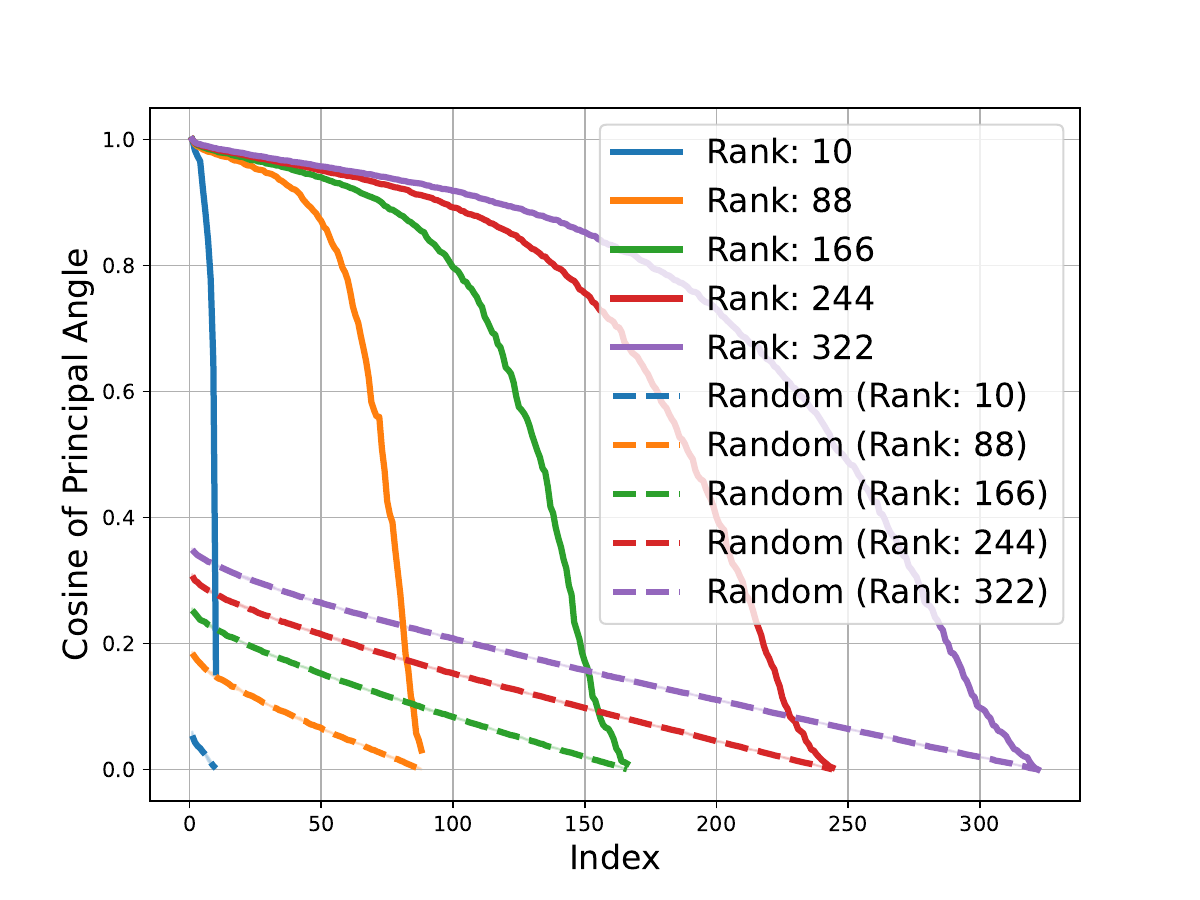}
        \caption{Gemma-1B vs Olmo-7B}
    \end{subfigure}\hfill
    \begin{subfigure}{0.3\textwidth}
        \includegraphics[width=\linewidth]{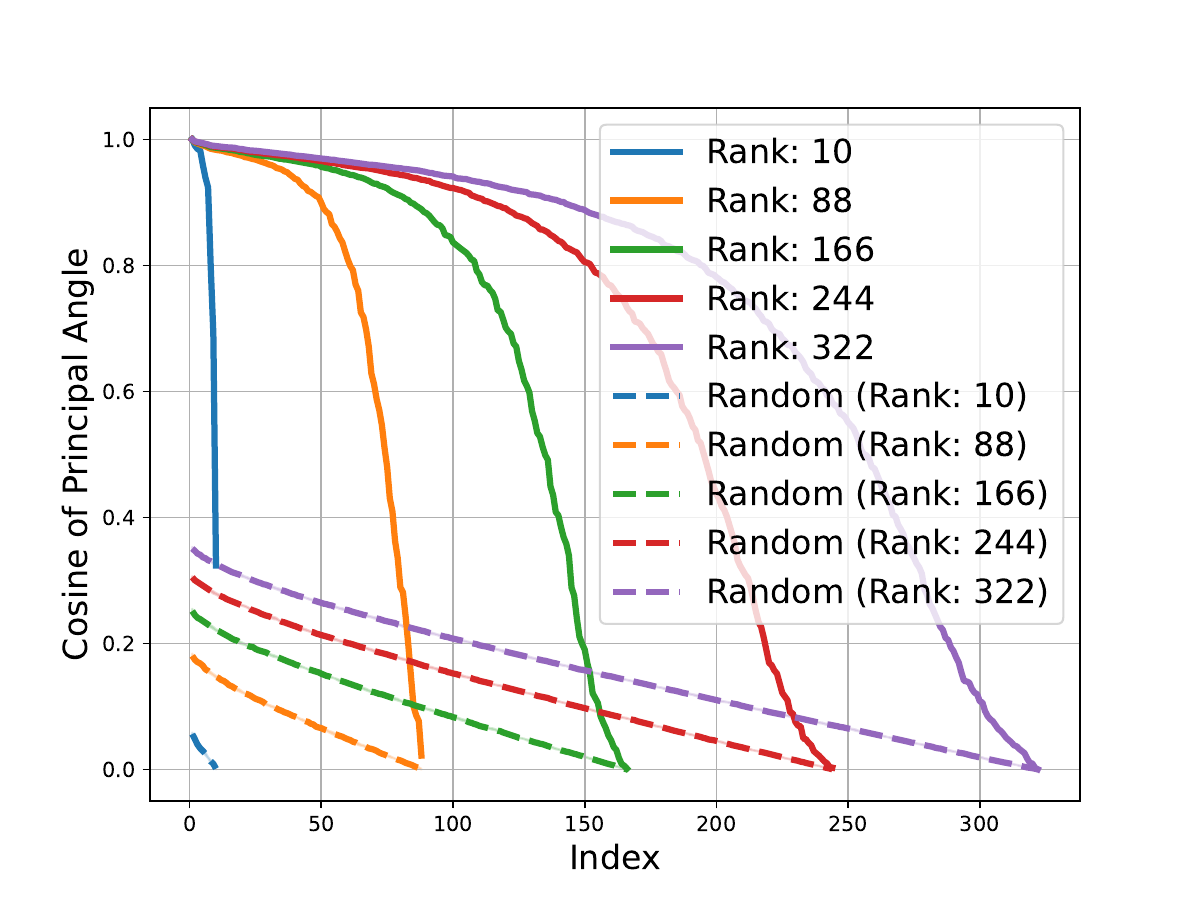}
        \caption{LlaMA-1B vs Gemma-1B}
    \end{subfigure}

    \begin{subfigure}{0.3\textwidth}
        \includegraphics[width=\linewidth]{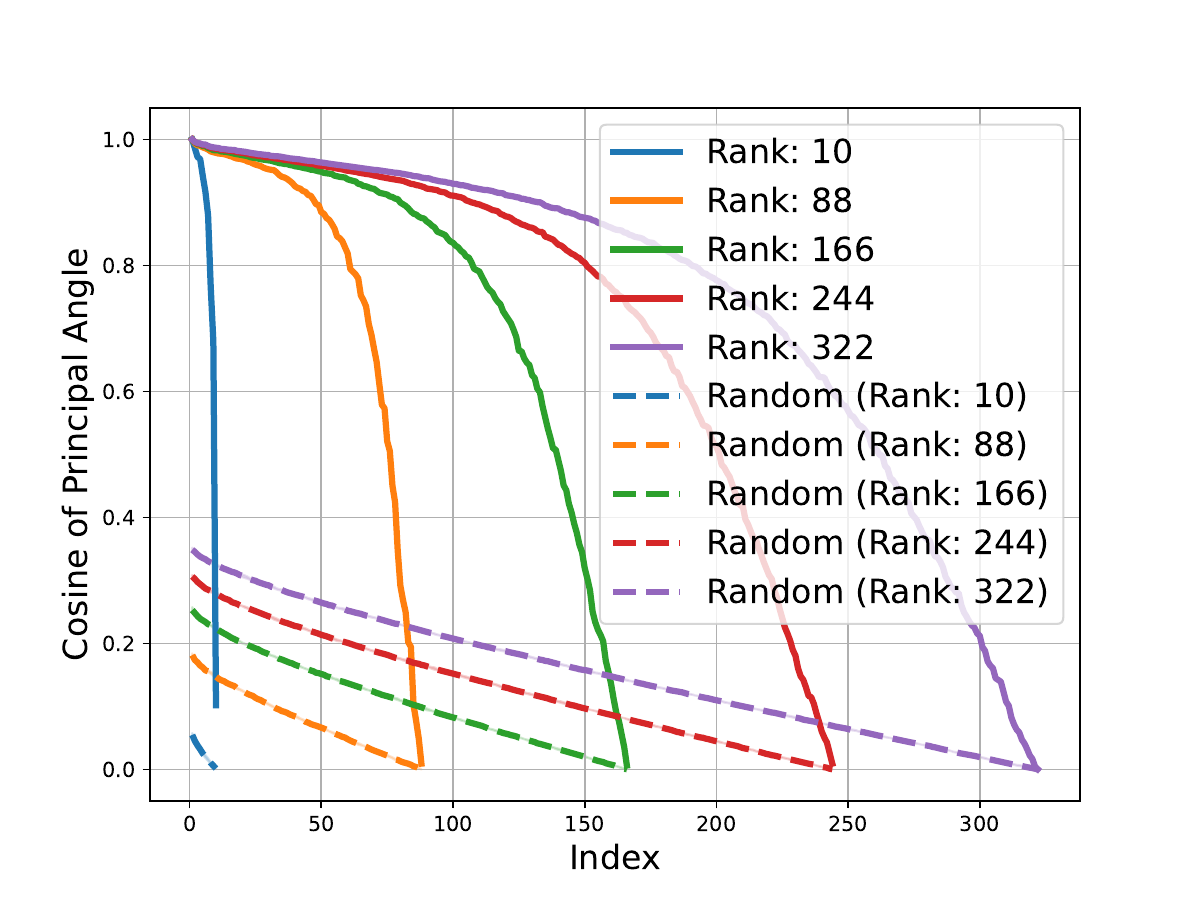}
        \caption{LlaMA-1B vs Mamba-1.4B}
    \end{subfigure}\hfill
    \begin{subfigure}{0.3\textwidth}
        \includegraphics[width=\linewidth]{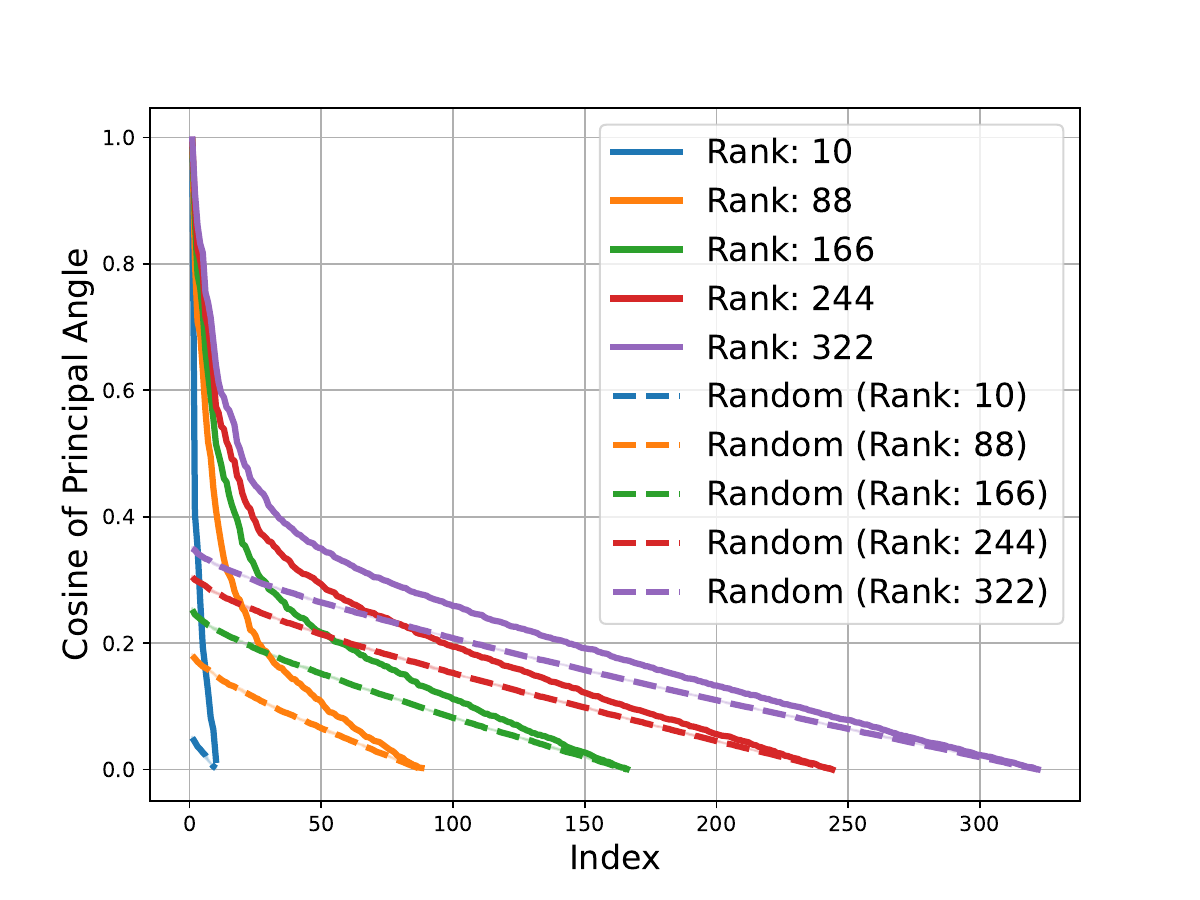}
        \caption{LlaMA-1B vs Olmo-1B ({ckpt 0})}
    \end{subfigure}\hfill
    \begin{subfigure}{0.3\textwidth}
        \includegraphics[width=\linewidth]{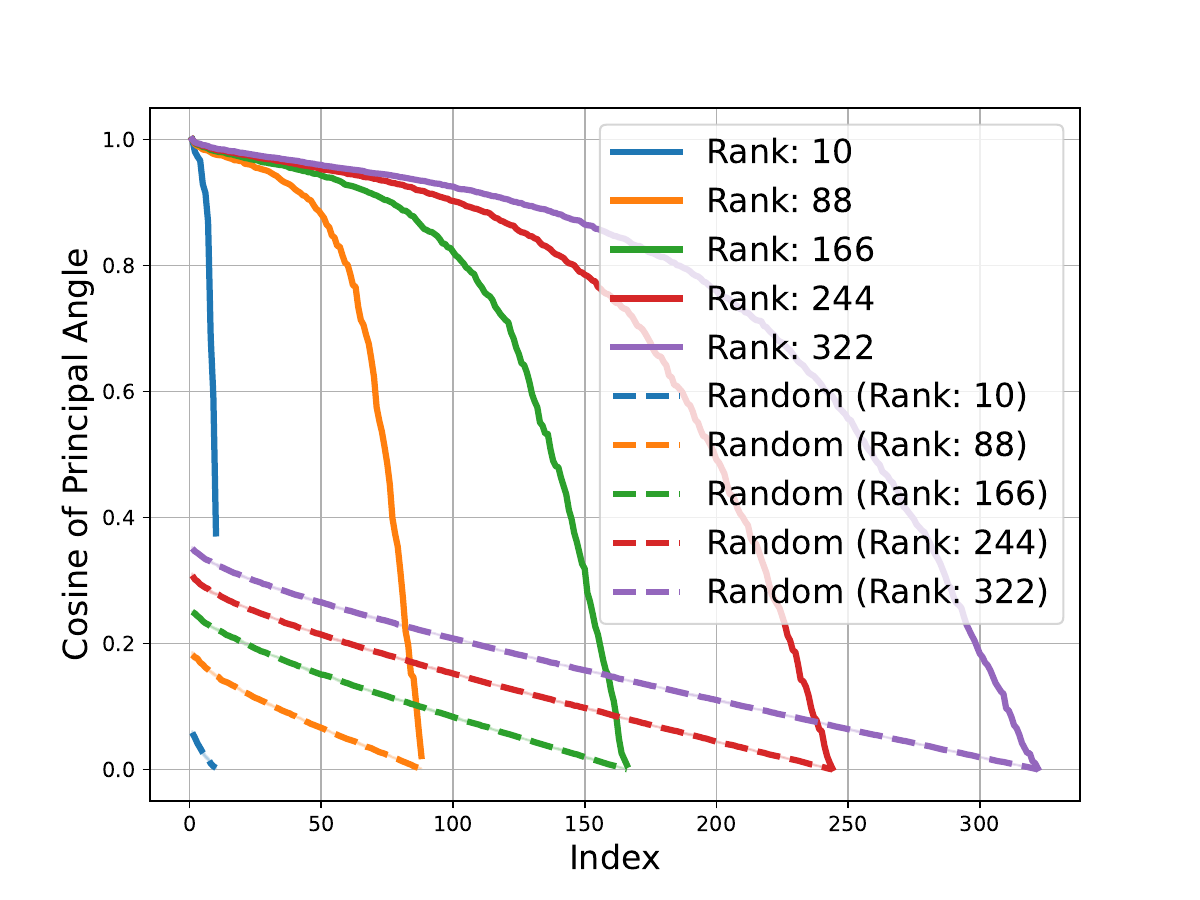}
        \caption{LlaMA-1B vs Olmo-7B}
    \end{subfigure}

    \begin{subfigure}{0.3\textwidth}
        \includegraphics[width=\linewidth]{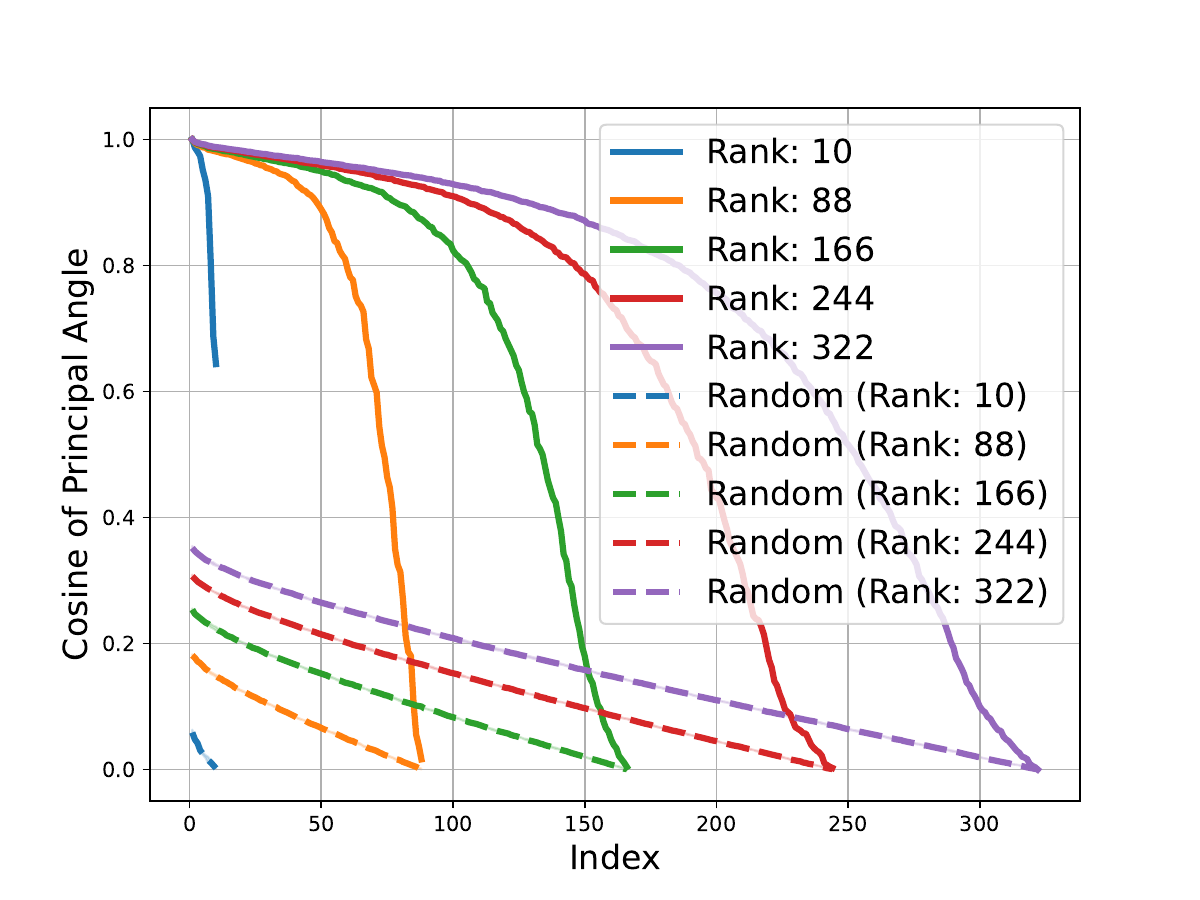}
        \caption{Olmo-1B vs Gemma-1B}
    \end{subfigure}\hfill
    \begin{subfigure}{0.3\textwidth}
        \includegraphics[width=\linewidth]{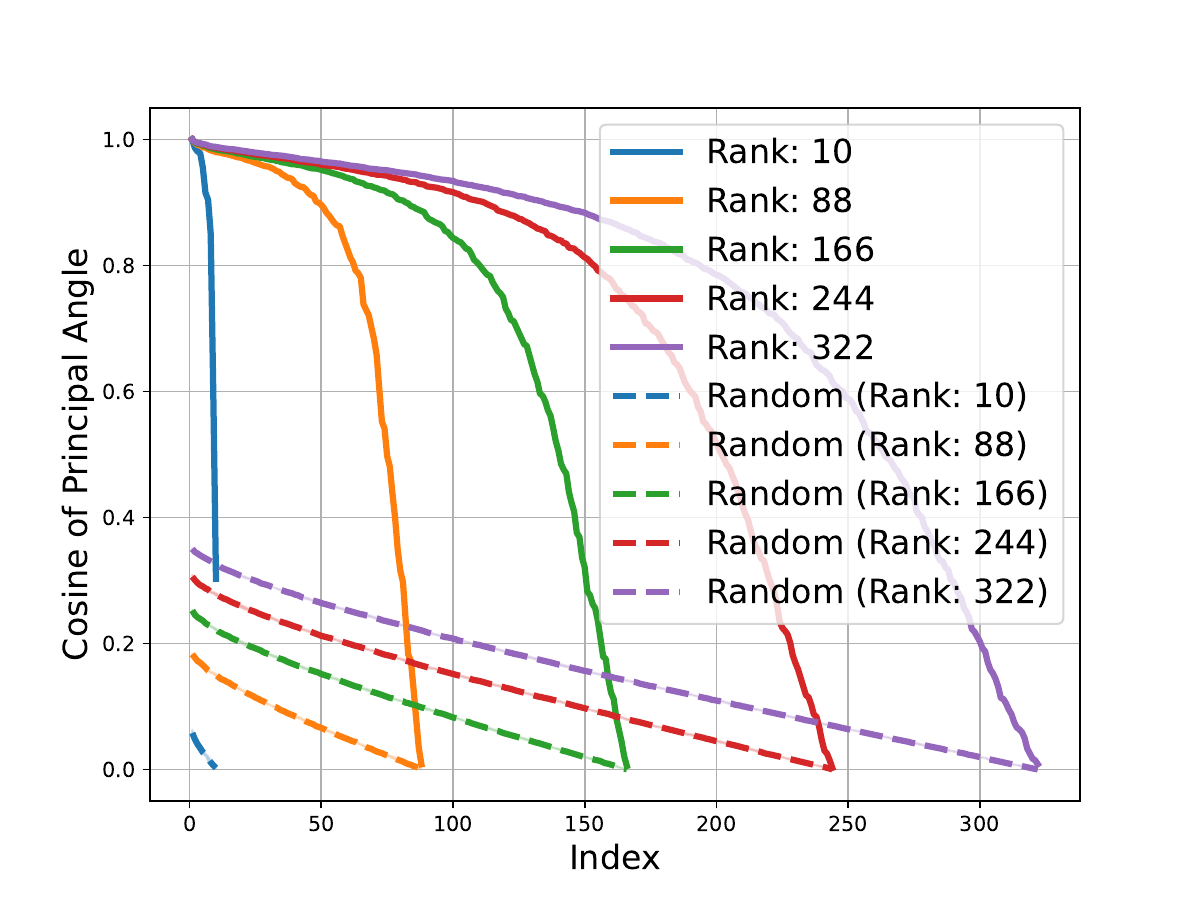}
        \caption{Olmo-1B vs LlaMA-1B}
    \end{subfigure}\hfill
    \begin{subfigure}{0.3\textwidth}
        \includegraphics[width=\linewidth]{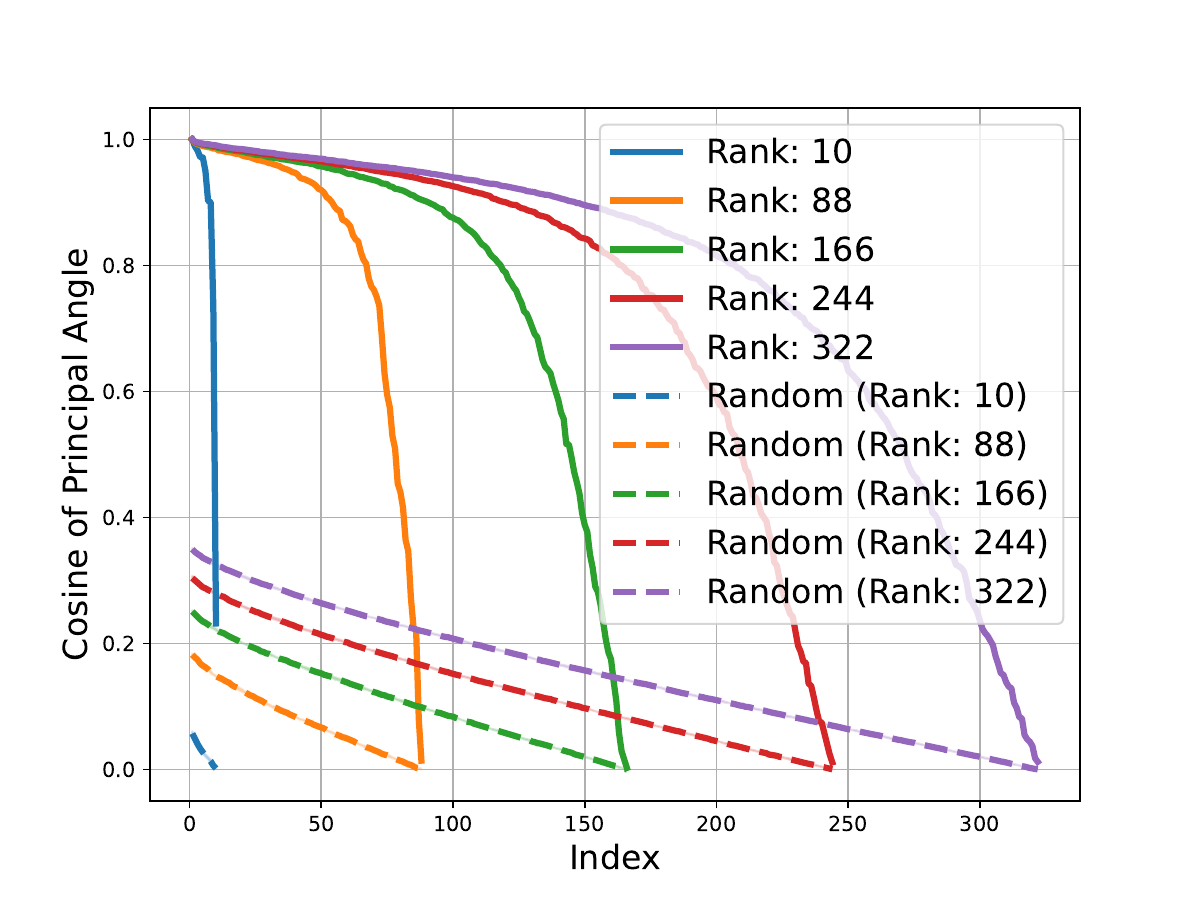}
        \caption{Olmo-1B vs Mamba-1.4B}
    \end{subfigure}

    \begin{subfigure}{0.3\textwidth}
        \includegraphics[width=\linewidth]{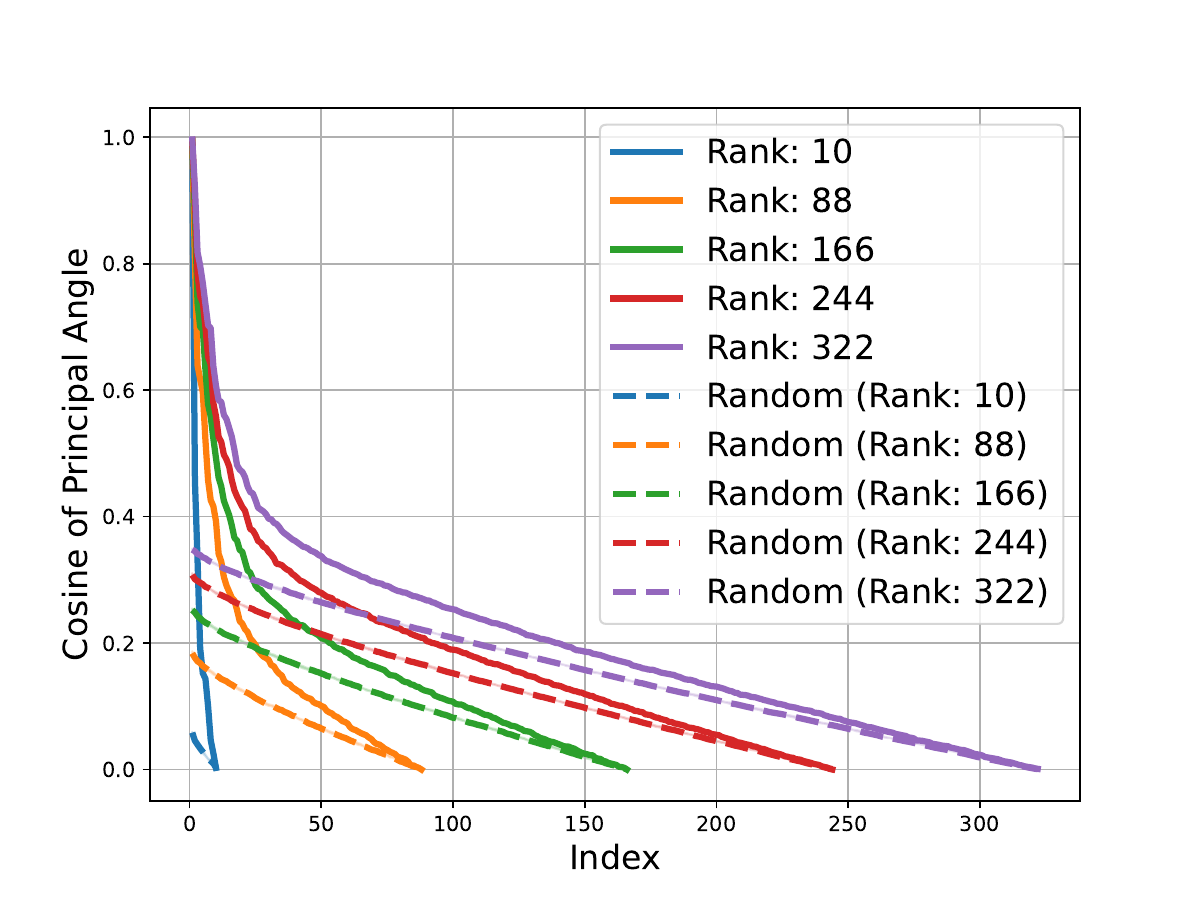}
        \caption{Olmo-1B vs Olmo-1B ({ckpt 0})}
    \end{subfigure}\hfill
    \begin{subfigure}{0.3\textwidth}
        \includegraphics[width=\linewidth]{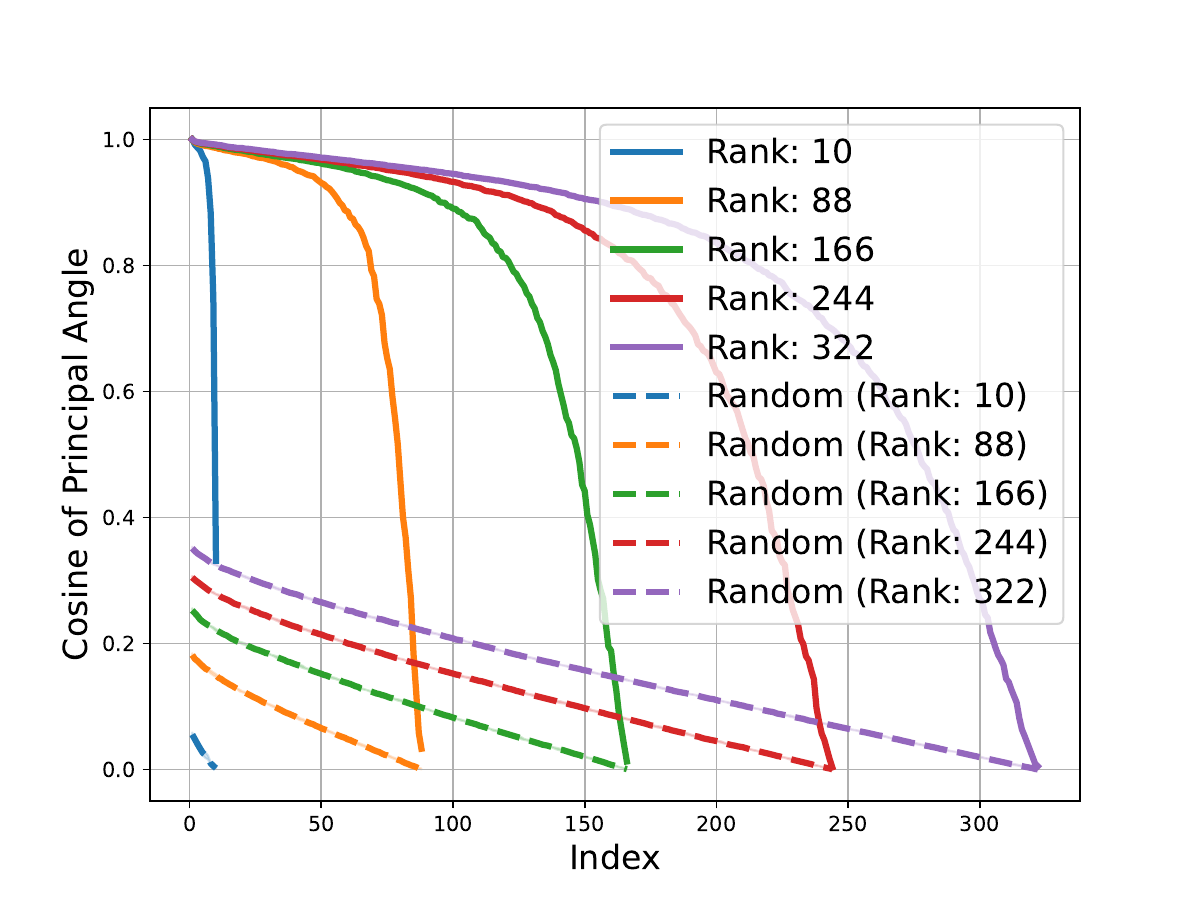}
        \caption{Olmo-1B vs Olmo-7B}
    \end{subfigure}\hfill
    \begin{subfigure}{0.3\textwidth}
        \includegraphics[width=\linewidth]{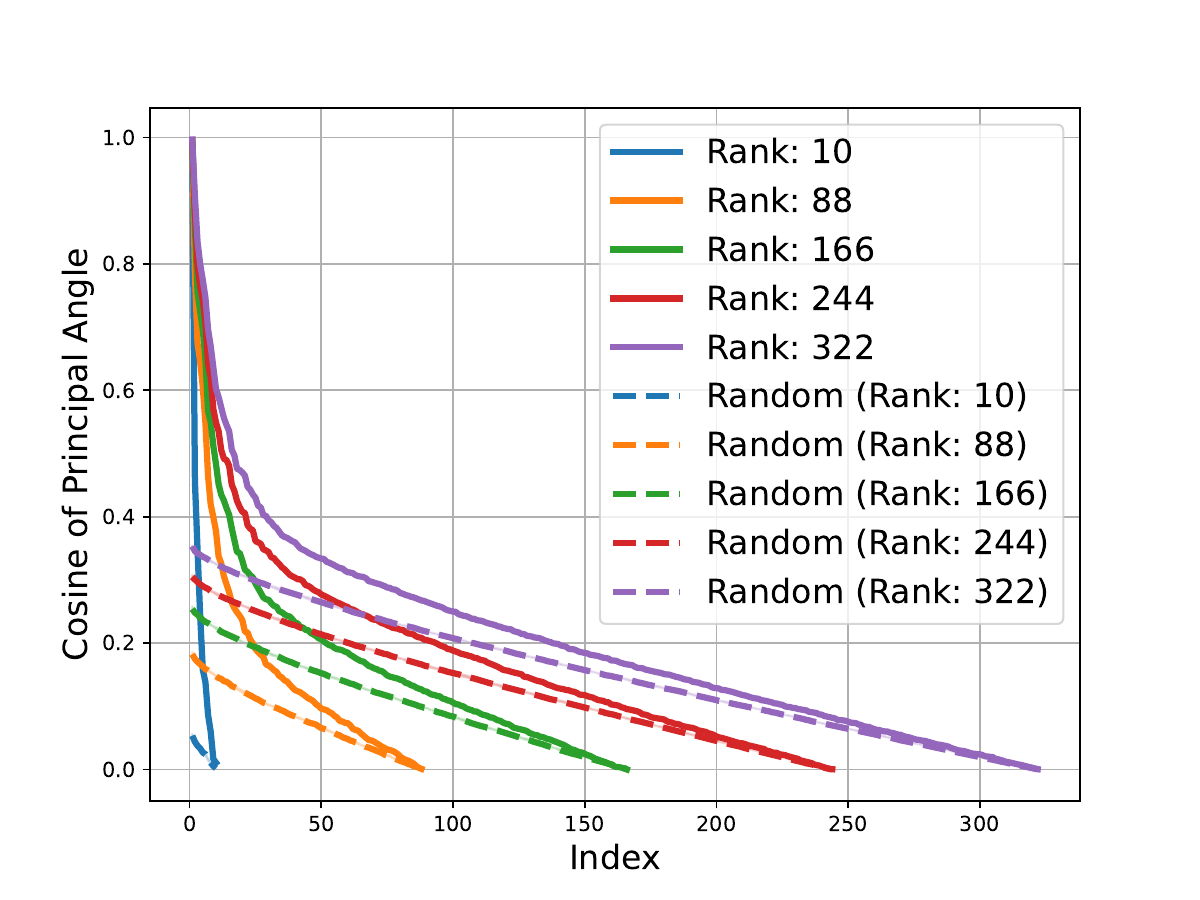}
        \caption{Olmo-1B ({ckpt 0}) vs Gemma-1B}
    \end{subfigure}

    \begin{subfigure}{0.3\textwidth}
        \includegraphics[width=\linewidth]{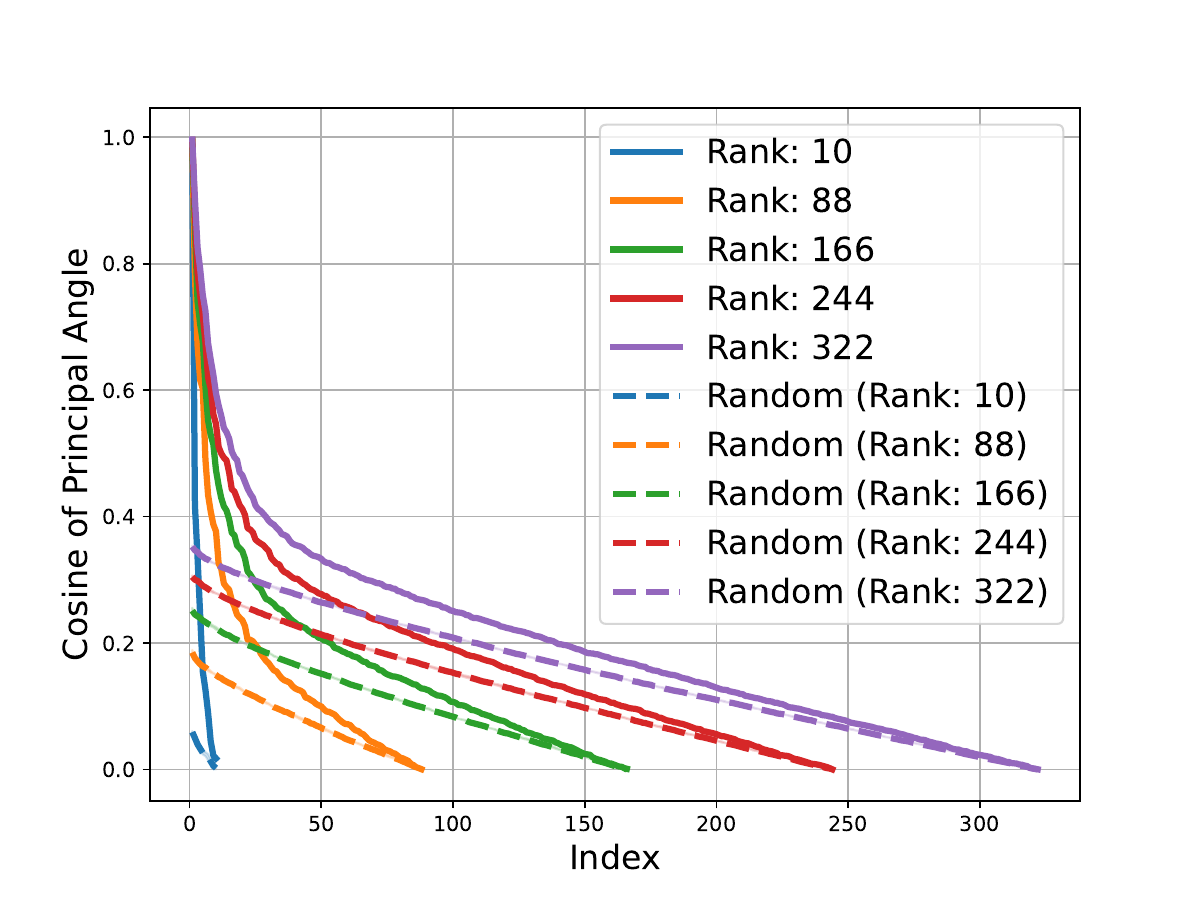}
        \caption{Olmo-1B ({ckpt 0}) vs Mamba-1.4B}
    \end{subfigure}\hfill
    \begin{subfigure}{0.3\textwidth}
        \includegraphics[width=\linewidth]{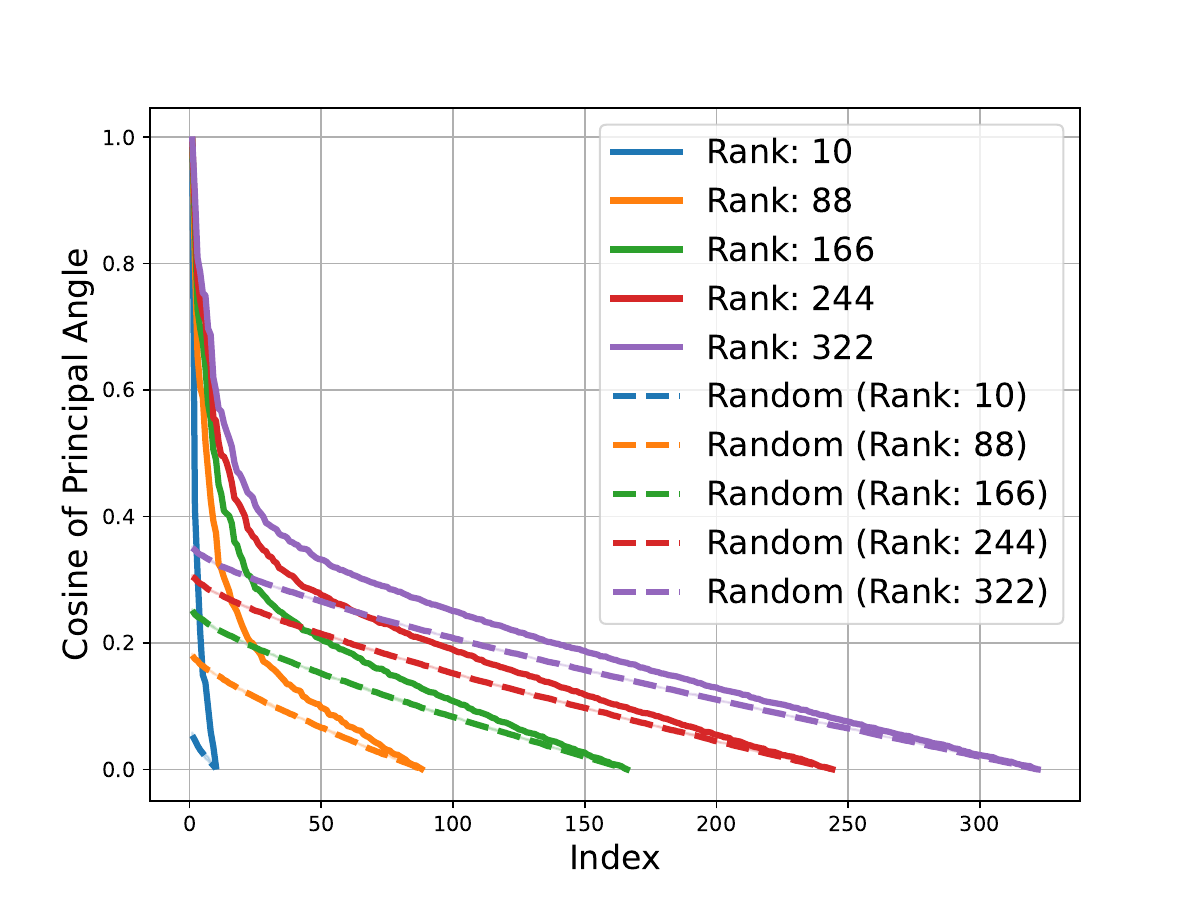}
        \caption{Olmo-1B ({ckpt 0}) vs Olmo-7B}
    \end{subfigure}\hfill
    \begin{subfigure}{0.3\textwidth}
        \includegraphics[width=\linewidth]{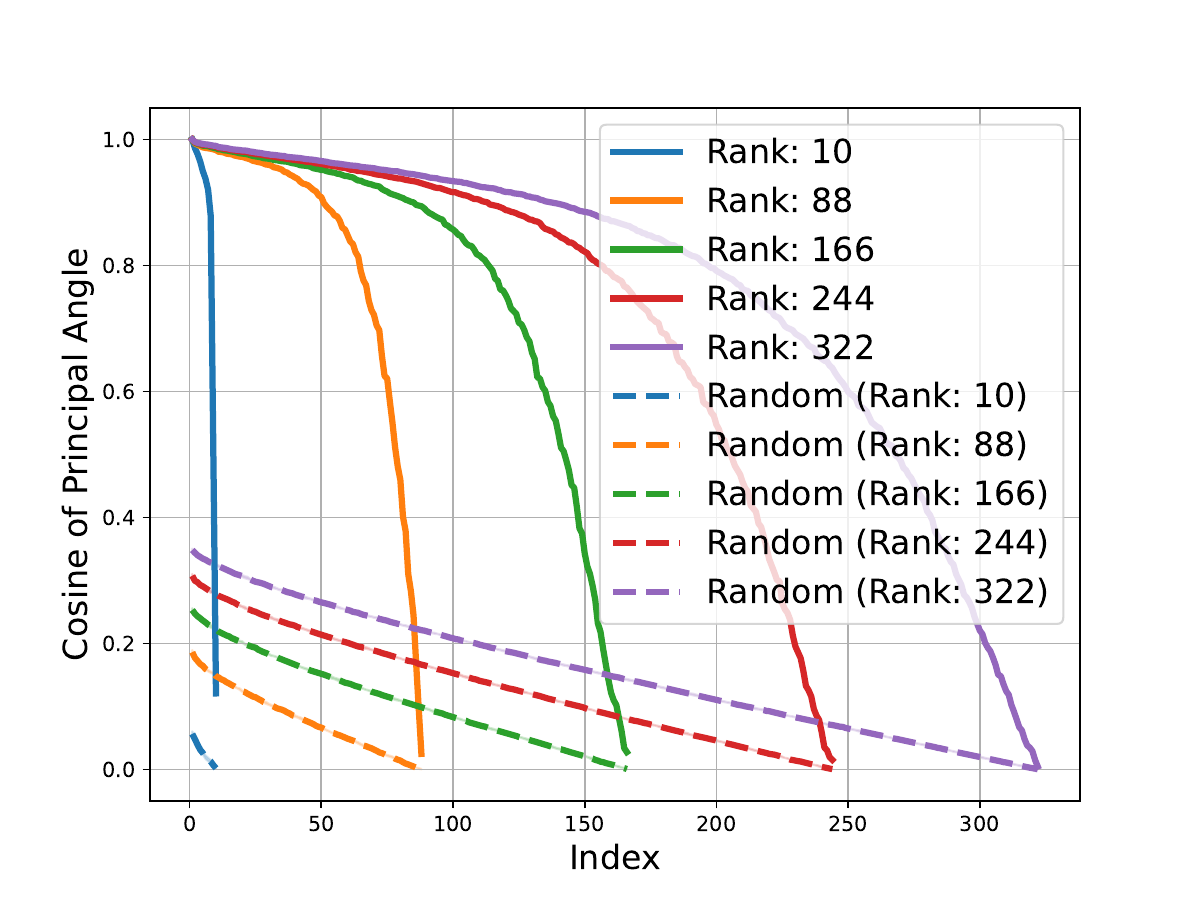}
        \caption{Olmo-7B vs Mamba-1.4B}
    \end{subfigure}

    \caption{Principal angles between column spaces of logit matrices across all pairs of models; see \Cref{sec:subspace-comparison}.}
    \label{fig:model-overlap}
\end{figure}

 \begin{table}
\renewcommand{\arraystretch}{1.15}
\setlength{\tabcolsep}{6pt}
\centering
\begin{tabularx}{0.95\textwidth}{>{\raggedright\arraybackslash}X >{\raggedright\arraybackslash}X}
\toprule
\textbf{History} & \textbf{Closest history} \\
\midrule
Investigations personnel in 1927. The name, derived from the eighth letter of the Greek alphabet, appears to have been first used on a 1946 Argentine government chart following surveys
&
(BIT) was created on 9 December 1898, in response to the Bombay plague epidemic of 1896. It was created
\\
\addlinespace[2pt]
on Honest Jon's capitalised on this with Theo Parrish, Rashad \& Spinn, Ricardo
&
for Nelly Furtado, N-Dubz, Nina Sky, Lil' Flip, Noel Gourdin, Starboy Nathan, Shawnna, Jimmy Cozier, Alison Hinds, Jazmine Sullivan, I 20, Cory Lee, Chris Webby, City
\\
\addlinespace[2pt]
Shiroro Airfield is an airstrip serving the village of Shiroro and the Shiroro Hydroelectric Power Station in the Niger State of Nigeria. The runway is
&
Canadian Forces Base located immediately south of the town of Chatham, New Brunswick, Canada. Parts are now operating as Miramichi Municipal Airport since 1974 with a partial runway
\\
\addlinespace[2pt]
to numerous publications, and supported the development of a range of new PES schemes including the BioCarbon Fund at the World Bank and the Mexican PES
&
as a mitigation bank providing ecosystem services to the public in the form of Environmental
\\
\addlinespace[2pt]
Thereupon, Gangaram slays out Janardhan Seth, perceiving the dirty deed angered Rambabu reaches Janardhan Seth's
&
stealing the idol's jewellery and his wife is accused of being a witch. They are killed by a mysterious person after which Chandni's father Yash Narayana Vashisht becomes the Mahant. Dev escapes
\\
\addlinespace[2pt]
process of removing government-controlled entry and price restrictions on airlines affecting, in particular, the carriers
&
stated model. It maintains close ties to other Green parties
\\
\addlinespace[2pt]
On Spitfires. Scotland was in range of Nazi Germany's long-range bombers and reconnaissance aircraft. The Luftwaffe's
&
minor improvements that increased their anti-aircraft capabilities. Their crew numbered 228 officers and enlisted men. The
\\
\addlinespace[2pt]
U-boat built for the Nazi Germany's ``Kriegsmarine'' for service during World War II.
&
floatplane of the 1910s produced by Flugzeugbau Friedrichshafen.
\\
\addlinespace[2pt]
European works, classical Chinese works and Nôm works into Quoc Ngu - modern
&
supported the creation of the Empire of China and the 1917 Manchu
\\
\addlinespace[2pt]
appointed Assistant Chaplain at Geelong Grammar School from 1959 to 1961, vicar of Romsey and Sunbury with Lancefield from 1961 to 1964
&
the Young Women's Christian Association of New Zealand. She was a member of its Dunedin board from 1930 until 1944, Dunedin
\\
\bottomrule
\end{tabularx}
\caption{For 10 values of $h \in \MH$ (left column) we display the history $h' \in \MH, h' \neq h$ minimizing the distance between the respective rows of the logits matrix, i.e., $\| \ML_M(\{h \}, \MF) - \ML_M(\{ h' \}, \MF)\|_2$. Examples were not cherry-picked (they correspond to the first 10 examples in the set $\MH$ described in \cref{sec:low-rank-experiments}, as obtained from \texttt{wiki}). }
\label{tab:close-histories}
\end{table}

 \Cref{fig:f-fnon-overlap} displays the principal angles between low-rank approximations of $\ML_{M,50}(\MH, \MF)$ and $\ML_{M,50}(\MH, \Fnon)$ where the model $M$ was OLMo-1b, $\MH, \MF$ were generated from \texttt{wiki} as described in \Cref{sec:low-rank-description}, with $|\MH| = 10000$ and where $\MF, \Fnon$ were each of size approximately 5000.\footnote{We started with sets $\MF, \Fnon$ of size 5000 and then removed a small number of 0-length futures.} Low-rank approximations were computed by truncating (approximate) singular value decompositions, as described in \Cref{sec:kl-plots-details}. The dashed lines show the principal angles between a pair of independent uniformly random subspaces of the indicated dimension. It is evident that the column spaces of the logit matrices have significantly more overlap than what would be expected from random subspaces. 

 In a similar manner, \Cref{fig:model-overlap} shows the principal angles between low-rank approximations of $\ML_{M,50}(\MH, \MF)$ and $\ML_{M',50}(\MH, \MF)$ for each pair $(M,M')$ of models described in \Cref{sec:models-datasets}. For these matrices, we used the same sets $\MH, \MF$ deriving from the \texttt{wiki} dataset as were used in \Cref{sec:sv-plots-details,sec:kl-plots-details}. Note that all pairs of models $(M,M')$ show significant overlap in the column spaces of their logit matrices, with the exception of when either $M$ or $M'$ is the untrained variant of OLMo-1b. This is unsurprising, given that the latter cannot be expected to capture any semantic information in sequences. Interestingly, there is still a bit more overlap than would be expected from random subspaces. We believe this may be because of the nature of the transformer architecture: for instance, if two sequences are very close (e.g., differ in a single token), then it is reasonable to expect that the next-token distributions induced by even a transformer with random weights are somewhat close, due to the nature of attention. We leave further investigation of this point to future work. 

 \subsection{Linear generation}
 The experiments using \Lingen in \Cref{sec:exp-generation} make use of OLMo-1b and generate for a total of 15 tokens. Below we provide details pertaining to how the coefficient vector $v$ was chosen for each of \textbf{Option 1} and \textbf{Option 2} described in \Cref{sec:exp-generation}:

 \paragraph{Option 1.} For this case, we chose 40000 histories and 10000 futures from the \texttt{wiki} dataset, as described in \Cref{sec:models-datasets}. We chose the lengths $\ell_{\min} = 8, \ell_{\max} = 30$, but now measured the lengths with respect to \emph{tokens} (not characters), since we only intend to use the resulting histories and futures for a single model. As in the previous sections, because the token space $\Sigma$ is extremely large, we cannot perform the regression described in \Cref{sec:exp-generation} on the entire logit matrix $\ML_M(\MH, \MF)$, so we instead used the submatrix $\ML_{M,50}(\MH, \MF)$ as described in \Cref{sec:low-rank-experiments}. 

 \paragraph{Option 2.} For this case, we generated sets $\MH, \MF$ of histories and futures as in \textbf{Option 2}, but then randomly permuted all tokens amongst all elements of each of $\MH$ and $\MF$, which yielded sets $\Hnon$, $\Fnon$. The remaining details of the experiment are identical to \textbf{Option 1}. 

 \paragraph{Baselines.} In \Cref{fig:lingen,fig:lingen-gibberish} we compare the performance of \Lingen to the following baselines:
 \begin{itemize}
 \item \textbf{Single-token version of \Lingen}: this may be viewed as using \Lingen but where the matrix $\ML_{M,50}(\MH, \MF)$ was replaced with $\ML_{M}(\MH, \{ \text{Null} \})$, i.e., the \emph{full} logit matrix where there is a single future, namely the empty future.
 \item \textbf{Generation from a short context}: we generate from OLMo-1b using a context window of 5 tokens (i.e., we mask out all tokens more than 5 units in the past).
 \item \textbf{Intermediate training checkpoint}: we generate from OLMo-1b using the checkpoint ``stage1-step1907359-tokens4001B'' which corresponds to the end of Stage-1 pretraining. 
 \end{itemize}

 \begin{table}
\renewcommand{\arraystretch}{1.15}
\setlength{\tabcolsep}{6pt}
\centering
\begin{tabularx}{0.95\textwidth}{>{\raggedright\arraybackslash}X >{\raggedright\arraybackslash}X}
\toprule
\textbf{\Lingen } & \textbf{\Lingen with single-token logit matrix} \\
\midrule
  \emph{\color{gray} Third Sea Lord was given the command upon taking leave from the Admiralty. He hoisted his flag in ``Bac}chon'' (1874), which was in the Mediterranean under Rear Admiral & \emph{\color{gray}Third Sea Lord was given the command upon taking leave from the Admiralty. He hoisted his flag in ``Bac}chusa in 1881 and 1892 compilation also showed that the \\
  \emph{\color{gray} in 2007 in Rajasthan, India. Krishna Bhatt has been twice the} recipient of the Award for literary Excellence and has been to Ghana, Switzerland, & \emph{\color{gray} in 2007 in Rajasthan, India. Krishna Bhatt has been twice the} winner of the Golden Mapuhal Trophy at the General Bob badi of \\
  \emph{\color{gray} '' and Katy Gibson in ``Gigi}'' had the world's best-known actors and pop singers Peter O'To & \emph{\color{gray} `` and Katy Gibson in ``Gigi}'' and ``Diva'' were produced, and they were thus of the\\ 
  \emph{\color{gray} 2018 including Jago Wali Raat and her first devotional song Tor Dita Lalan Nu}. For the song God Save India, she uses traditional instruments like Bhajan & \emph{\color{gray} 2018 including Jago Wali Raat and her first devotional song Tor Dita Lalan Nu}. Rebecca and Mr. Costa confess to each other that they were falling in\\
  \emph{\color{gray} KSTQ (93.5 FM) is a radio station licensed to Stuart, Oklahoma, United States. The station} is owned by KFBL Radio and is currently simulcasting KXLP & \emph{\color{gray} KSTQ (93.5 FM) is a radio station licensed to Stuart, Oklahoma, United States. The station} is locally called ``The Rock,'' is currently operated by Axion and E\\
  \bottomrule
\end{tabularx}
\caption{Sample generations from \Lingen (\textbf{Option 1}) and single-token baseline. \emph{\color{gray} Prompts are shown in gray and italicized,} and continuations are shown in black. Samples were not cherry-picked, i.e., the first generation from each prompt was selected. The generations from \Lingen (left) all read well and make clear use of context in the prompt; in contrast, while generations from \Lingen with the single-token logit matrix (right) typically are reasonable for the first token or few, they become derailed soon thereafter.}
\label{tab:lingen-samples}
\end{table}

 \begin{table}
\renewcommand{\arraystretch}{1.15}
\setlength{\tabcolsep}{6pt}
\centering
\begin{tabularx}{0.95\textwidth}{>{\raggedright\arraybackslash}X >{\raggedright\arraybackslash}X}
\toprule
\textbf{\Lingen  generations with $\Hnon$ } \\
  \midrule
  \emph{\color{gray} Third Sea Lord was given the command upon taking leave from the Admiralty. He hoisted his flag in ``Bac}chill'' confusing manoeuvre that involved making the windward course. \\
  \emph{\color{gray} Third Sea Lord was given the command upon taking leave from the Admiralty. He hoisted his flag in ``Bac}ch'' for the first time. After Sapp didn't take part in \\
  \emph{\color{gray} Third Sea Lord was given the command upon taking leave from the Admiralty. He hoisted his flag in ``Bac}chel'' and joined with her and the USS Vixen, a   \\ \midrule
  \emph{\color{gray} in 2007 in Rajasthan, India. Krishna Bhatt has been twice the} National Head of Dairy Labelling 2004 and 2014. He \\
  \emph{\color{gray} in 2007 in Rajasthan, India. Krishna Bhatt has been twice the}  national shortlisting for the International Youth Olympic Festival (IYOOF) \\
    \emph{\color{gray} in 2007 in Rajasthan, India. Krishna Bhatt has been twice the} author of a book entitled `Chaerbato Jaataa Aap \\ 
  \midrule 
  \emph{\color{gray} '' and Katy Gibson in ``Gigi}''. Consists of:       -lrb- lead -rrb- \\
  \emph{\color{gray} '' and Katy Gibson in ``Gigi}'', for which she received a nomination at the 2020 Jogja Music \\
  \emph{\color{gray} '' and Katy Gibson in ``Gigi}''. Music career. In addition to her television work, Bryant has released several \\ \midrule
  \emph{\color{gray} 2018 including Jago Wali Raat and her first devotional song Tor Dita Lalan Nu} Ghareba Hae. A new format was created by exposing a new \\
  \emph{\color{gray} 2018 including Jago Wali Raat and her first devotional song Tor Dita Lalan Nu}, Ranthi Narayan Thakkar are known for their voice and \\
  \emph{\color{gray} 2018 including Jago Wali Raat and her first devotional song Tor Dita Lalan Nu} Ram''. She made her film debut opposite Satish Kekar in Raj \\
  \midrule 
  \emph{\color{gray} KSTQ (93.5 FM) is a radio station licensed to Stuart, Oklahoma, United States. The station} is a 24/7 transmitter reports that broadcasts to Tulsa, Page, \\
  \emph{\color{gray} KSTQ (93.5 FM) is a radio station licensed to Stuart, Oklahoma, United States. The station} is locally owned by Quad Cities Radio and is maintained by Educational Insights. In \\
   \emph{\color{gray} KSTQ (93.5 FM) is a radio station licensed to Stuart, Oklahoma, United States. The station} is a reading of a Victim News radio station licensed to Tulsa, Oklahoma broadcasts \\
  \bottomrule
\end{tabularx}
\caption{Sample generations from \Lingen and single-token baseline with $\Hnon$ (i.e., \textbf{Option 2}), for the same 5 target prompts as in \Cref{tab:lingen-samples}. \emph{\color{gray}Prompts are shown in gray and italicized}, while \Lingen's generations are in black. While not all of the generations are natural continuations of the target prompt, most of the generations show clear use of information from the prompt.}
\label{tab:lingengib-samples}
  \end{table}

\begin{algorithm}
\caption{\Lingen Algorithm}
\label{alg:lin-gen}
\begin{algorithmic}[1]
\State \textbf{Input:} Language model $M$, histories $\calH \subset \Sigma^\star$, vector $v \in \BR^\MH$.
\State \textbf{Input:} Parameter $m$ (number of new tokens to generate)
\Function{$\textsf{LinGen}_{M}(\MH, v, m)$}{}
\For{$t = 1,2, \dots , m$}
\State Compute $\text{logits}_t = v^\top \cdot \ML_M(\MH, \{ z_{1:i-1} \})$. \Comment{\emph{($\ML_M(\MH, \{ z_{1:i-1} \})$ is the logits matrix where the set of futures is the singleton sequence $z_{1:i-1}$.)}}
\State Sample next token $z_t \sim \text{softmax}(\text{logits}_t)$
\EndFor
\State \textbf{Return:} the sequence $(z_1, \ldots, z_m)$. 
\EndFunction
\end{algorithmic}
\end{algorithm}

\section{Comparison to Natural Baselines}
\label{sec:random_comparison}

In this section, we briefly discuss comparison of the low rankness of the extended logit matrix to natural baselines.  
First, let us consider a random model for comparison.  
Recall that the extended logit matrix $\calL_{\calD}( \calH , \calF )$ is a $| \calH | \times |\calF | \cdot |\Sigma|$ matrix. 
A matrix of the same dimension with i.i.d. Gaussian entries, denoted by $G$ (i.e., $G_{ij} \sim \calN(0,1)$), has a well-understood asymptotic spectrum given by the Marchenko-Pastur law. 
Further, non-asymptotic bounds on the singular values of $G$ are also known (See e.g. \citep{rudelson2010nonasymptotictheoryrandommatrices}). 
In particular, the results suggest that the singular values of $G$ are all of the order $\Theta(\sqrt{n})$ where $n = \max\{|\calH|, |\calF| \cdot |\Sigma|\}$ with even the deviation of the smallest singular value from $\sqrt{n}$ being of the order $\Theta(n^{1/6})$.
This indicates that the singular values of $G$ do not decay rapidly and are all of the same order and hence $G$ is not approximated well by a low-rank matrix.

More, interestingly in order to observe power law decays in the singular values, as noted empirically in \cref{sec:low-rank-experiments}, one needs to consider matrices with dependent entries. 
An interesting future direction suggested by our work is to potentially find interesting random matrix models that can be both used to explain the power law decay in the singular values of the extended logit matrix, while also helping us understand the underlying structure in natural language.

A second natural baseline to compare the low-rankness of the extended logit matrix that arise purely from the low-rank structure of the single step logit matrix. 
To flesh this out consider, the matrix $L =  \calL_{\model} ( \calH, \calF ) $ for $\calH = \Sigma^{t}$ and $\calF = \text{Null}$ and set $ \tilde{L}  = \calL_{\model} ( \calH, \calF )  $ for $\calH = \Sigma^{t/2}$ and $\calF = \Sigma^{t/2}$.
The choice of $t/2$ was made for simplicity, and the argument can be easily extended to other splits of $t$.
Now, from the argument regarding the low-rankness of the single step logit matrix, we have that $\text{rank}(L) \leq d$ but for the sake of argument let us assume that $d=1$. 
This indicates that the rows of $L$ are all multiples of each other,  again for simplicity assume that it is a constant multiple of $e_1$ the first standard basis vector.
Understanding the struture of $\tilde{L}$ tells us that the rows of $\tilde{L}$ are concatenations of the rows of $L$ corresponding to splits of the history into two halves.
But, since the coefficient corresponding to each row of $L$ was arbitary, the rank of $\tilde{L}$ corresponds to the rank of an arbitrary matrix of dimension $|\Sigma|^{t/2} \times |\Sigma|^{t/2}$ which is exponentially large in $t$.
Thus, the low-rankness of the single step logit matrix does not imply any non-trivial upper bound on the rank of the extended logit matrix.
In particular, this can be seen as a strong argument for the claim that low-rankedness of the extended logit matrix is an extremely surprising phenomena that points towards the presence of interesting structure in natural language and models thereof.

\section{Deferred Proofs of Theoretical Results}
\label{sec:proofs-appendix}

As a reference, we first state the definition of an Input Switched Affine Network (ISAN) by \cite{foerster2017input}.  Note that the difference compared to a time-varying ISAN in \Cref{def:time-varying-isan} is that the transitions depend only on the current token but not on the timestep.  

\begin{definition}[Input Switched Affine Network (ISAN)]\label{def:isan}
An Input Switched Affine Network (ISAN) is defined by matrices $A_{z} \in \R^{d \times d}$ for each $z \in \Sigma$ and a matrix $B \in \R^{\Sigma \times d}$ and an initial state $x_0 \in \R^{d}$.  It defines a distribution over sequences of tokens at each timestep $t = 1,2, \dots$: 
\begin{itemize}
  \item Sampling: token $z_{t}$ is sampled from $\softmax(B x_{t-1})$.
  \item State Update: the hidden state is updated as $x_{t} = A_{z_{t}} x_{t-1}$.
\end{itemize}

\end{definition}

First, we show an equivalence between an ISAN and a time-varying ISAN up to a factor of $T$ in the hidden dimension.

\begin{fact}[Reducing a Time Varying ISAN to an ISAN]\label{fact:time-varying-equivalence}
For any time-varying ISAN with hidden dimension $d$ that generates a distribution over sequences of tokens of length $T$, there exists an ISAN with hidden dimension $Td$ that generates the same distribution.
\end{fact}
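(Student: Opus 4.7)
The plan is to simulate a time-varying ISAN of hidden dimension $d$ by an ISAN of hidden dimension $Td$ via a block-structured construction that encodes the current timestep $t$ into which of the $T$ blocks (each of size $d$) is currently nonzero. Concretely, I will view the ISAN hidden state $\tilde{x}_s \in \R^{Td}$ as a stack of $T$ blocks $[\tilde{x}_s^{(1)}; \ldots; \tilde{x}_s^{(T)}]$, and maintain the invariant that after $s$ steps of the simulated ISAN, only block $s+1$ is nonzero and equals the state $x_s$ of the original time-varying ISAN.

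To set this up, initialize $\tilde{x}_0 = [x_0; 0; \ldots; 0]$, so block $1$ carries $x_0$ and the invariant holds at $s=0$. Define the emission matrix by horizontal concatenation, $\tilde{B} = [\, B_1 \;\; B_2 \;\; \cdots \;\; B_T\,] \in \R^{\Sigma \times Td}$; then under the invariant we have $\tilde{B}\tilde{x}_{s-1} = B_s x_{s-1}$, so the ISAN samples $z_s$ from the same distribution as the time-varying ISAN at step $s$. Define, for each $z \in \Sigma$, the matrix $\tilde{A}_z \in \R^{Td \times Td}$ as the block matrix whose $(t+1,t)$-block equals $A_{z,t}$ for $t=1,\ldots,T-1$ and whose other blocks are all zero (the $T$-th block column is immaterial and may be set to $0$). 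Then $\tilde{A}_{z_s} \tilde{x}_{s-1}$ has block $s+1$ equal to $A_{z_s,s} x_{s-1} = x_s$ and all other blocks zero, which preserves the invariant for the next step.

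With this construction, I would finish by a straightforward induction on $s$: the joint law of $(z_1,\ldots,z_T)$ under the ISAN equals that under the time-varying ISAN, since at every step the conditional distribution of $z_s$ given $z_{1:s-1}$ coincides in the two models (both equal $\softmax(B_s x_{s-1})$ with the same $x_{s-1}$). The hidden dimension is $Td$ by construction.

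There is no real obstacle here beyond verifying the block algebra, so I expect the proof to be short. The only thing to be mindful of is that an ISAN as in \Cref{def:isan} has transitions depending only on the token, not the timestep, so the ``time-stamp'' must be encoded entirely in the \emph{state} rather than the transition matrices; the block-shift structure of $\tilde{A}_z$ accomplishes exactly this, at the cost of a factor $T$ blow-up in hidden dimension.
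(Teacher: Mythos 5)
Your proposal is correct and is essentially identical to the paper's own proof: the same block lower-shift transition matrices with $A_{z,t}$ in the $(t+1,t)$ block, the same horizontally concatenated emission matrix $[B_1\;\cdots\;B_T]$, the same initial state $(x_0,0,\dots,0)$, and the same invariant that block $s+1$ holds $x_s$. No differences worth noting.
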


\begin{proof}
Let the initial state of the time-varying ISAN be $x_0 \in \R^d$ and its transition and observation matrices be $A_{z,t} \in \R^{d \times d}, B_t \in \R^{\Sigma \times d}$ for $z \in \Sigma, t \in [T]$.  To express this as a (time-invariant) ISAN, we build a $Td$-dimensional hidden state consisting of $T$ separate $d$-dimensional blocks.  We can then aggregate the transition and observation matrices across the different timesteps by defining for each token $z \in \Sigma$,
\[
\begin{split}
A_z &= \begin{bmatrix} 0 & 0 & \cdots & 0 & 0 \\ A_{z,1} & 0 & \cdots & 0 & 0 \\ 0 & A_{z,2} & \cdots & 0 & 0 \\ \vdots & \vdots & \ddots & \vdots & \vdots \\ 0 & 0 & \dots & A_{z,T-1} & 0 \end{bmatrix} 
\\
B &= \begin{bmatrix} B_1 & B_2 & \dots B_{T} \end{bmatrix} \,.
\end{split}
\]
We let the initial state be $(x_0, 0, \dots , 0)$ i.e. $x_0$ on the first $d$ coordinates and $0$ everywhere else.  It is immediate by construction that this ISAN is equivalent to the time-varying ISAN because at each timestep $t$, the hidden state $x_t$ of the original time-varying ISAN is simply stored in the $t+1$st $d$-dimensional block and is moved to the next block after a transition (the state becomes identically $0$ after the last transition).  Thus, this ISAN generates the exact same distribution over sequences of length $T$, as desired.
\end{proof}

\subsection{Equivalence Between Time-varying ISAN and Log Logit Rank}

This subsection is devoted to proving \Cref{thm:low-rank-equiv}.  We first show one direction, that low logit rank implies expressibility as a time-varying ISAN.

\begin{lemma}[Low Logit Rank Implies Time-Varying ISAN]\label{lem:low-rank-implies-isan}
Let $M$ be a language model such that for each $t \in [T]$,  the matrix $\calL_{M}(\Sigma^t, \Sigma^{\leq T - t})$ (recall \Cref{def:his-fut-logit-matrix}) has rank $d$.  Then there is a time-varying ISAN of hidden dimension $d$ that generates the exact same distribution as $M$ on sequences of length $T$.    
\end{lemma}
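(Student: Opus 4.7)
The plan is to construct the time-varying ISAN directly from rank-$d$ factorizations of the matrices $L_t := \calL_M(\Sigma^t, \Sigma^{\leq T-t})$ for $t = 0, 1, \ldots, T$. By hypothesis each $L_t$ has rank at most $d$, so I can write $L_t = \Phi_t W_t$ with $\Phi_t \in \R^{|\Sigma|^t \times d}$ having row $\phi_t(h)^\top$ indexed by $h \in \Sigma^t$ and $W_t \in \R^{d \times m_t}$ where $m_t := |\Sigma^{\leq T-t}| \cdot |\Sigma|$ (padding with zero columns or dimensions if the rank happens to be strictly less than $d$, so that $W_t$ has full row rank $d$ and hence admits a right inverse $W_t^+$). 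The hidden state of the ISAN at time $t$ after observing history $h$ will be $\phi_t(h)$, and the initial state is $x_0 := \phi_0(\text{Null})$.

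For the observation matrices $B_{t+1}$, I would use the block of columns of $L_t$ indexed by the empty future $f = \text{Null}$: by definition, $L_t(h, (\text{Null}, z)) = \logit_M[z \mid h]$, so letting $B_{t+1}\in \R^{|\Sigma| \times d}$ be the transpose of the corresponding $d \times |\Sigma|$ block of $W_t$, one has $B_{t+1} \phi_t(h) = \logit_M[\cdot \mid h]$. Since softmax is invariant under constant shifts, $\softmax(B_{t+1}\phi_t(h)) = \Pr_M[\cdot \mid h]$, matching the sampling rule of Definition~\ref{def:time-varying-isan}. (This is precisely where mean-centering becomes convenient, per Remark~\ref{rem:mean-centering}.)

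The heart of the proof is the construction of the token-dependent transition $A_{z,t+1}$. The key structural identity is
\[
L_{t+1}(h \circ z,\, (f', z')) \;=\; \logit_M[z' \mid h \circ z \circ f'] \;=\; L_t(h,\, (z \circ f', z')),
\]
which says that the $(h\circ z)$-th row of $L_{t+1}$ coincides with the sub-row of $L_t$ at $h$ obtained by restricting to the block of columns whose futures begin with $z$. Denoting that $d \times m_{t+1}$ sub-block of $W_t$ by $W_t^{(z)}$, we get $L_{t+1}(h \circ z, \cdot) = \phi_t(h)^\top W_t^{(z)}$. But by construction $L_{t+1}(h \circ z, \cdot) = \phi_{t+1}(h \circ z)^\top W_{t+1}$ as well, so applying a right inverse $W_{t+1}^+$ yields
\[
\phi_{t+1}(h \circ z) \;=\; (W_{t+1}^+)^\top (W_t^{(z)})^\top \phi_t(h).
\]
Crucially, the right-hand side depends on $h$ only through $\phi_t(h)$, so I simply set $A_{z,t+1} := (W_{t+1}^+)^\top (W_t^{(z)})^\top$.

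The main obstacle is exactly this last step: we need a single linear map $A_{z,t+1}$, independent of $h$, that correctly advances $\phi_t(h)$ to $\phi_{t+1}(h \circ z)$ for \emph{every} history $h \in \Sigma^t$. This is what forces the rank bound to hold at \emph{every} level $t$, not just one; the nested ``future starting with $z$'' structure of the extended logit matrix is what makes the same matrix $A_{z,t+1}$ work uniformly in $h$. Once transitions and observations are in place, a short induction on $t$ shows that if $z_{1:t}$ has been sampled from the ISAN dynamics then its internal state equals $\phi_t(z_{1:t})$, so the next-token distribution $\softmax(B_{t+1} \phi_t(z_{1:t}))$ equals $\Pr_M[\cdot \mid z_{1:t}]$; chaining over $t$ proves equality of the two distributions on $\Sigma^T$, completing the proof.
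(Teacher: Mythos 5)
Your proposal is correct and follows essentially the same route as the paper's proof: both exploit the key identity that the row of $\calL_M(\Sigma^{t+1},\Sigma^{\leq T-t-1})$ indexed by $h\circ z$ is the restriction of the row of $\calL_M(\Sigma^{t},\Sigma^{\leq T-t})$ indexed by $h$ to the columns whose futures begin with $z$, and then run the same induction showing the hidden state tracks the current history's row. The only (cosmetic) difference is that the paper realizes the rank factorization concretely by choosing a spanning multiset $\calS^{(t)}$ of $d$ histories and taking $W_t=\calL_M(\calS^{(t)},\Sigma^{\leq T-t})$, whereas you use an abstract factorization $L_t=\Phi_t W_t$ with a right inverse; your parenthetical about "padding with zero columns" to force full row rank should instead be handled by working with $r_t=\Rank(L_t)\leq d$ and zero-padding the final ISAN matrices (or, as the paper does, duplicating spanning rows), but this is a trivial fix.
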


\begin{proof}
For each length $t = 0,1, \dots , T-1$, let $\calS^{(t)} \subset \Sigma^t$ be a (multi)set of sequences with $|\calS^{(t)}| = d$ such that the rows of $\calL_{M}(\calS^{(t)}, \Sigma^{\leq T - t})$ span the row space of $\calL_{M}(\Sigma^t, \Sigma^{\leq T - t})$.  In other words, this is saying that the rows indexed by $\calS^{(t)}$ span all of the rows indexed by sequences of length $t$.  Note that such a set exists by the rank assumption \---- for $t = 0$ there is only one row, and we allow for a multiset so if $|\Sigma^t| \leq d$ then we can arbitrarily choose to duplicate some of the elements so that $|\calS^{(t)}| = d$ exactly.

For each timestep $t \in [T]$ and token $z \in \Sigma$ we can construct a matrix $A_{z,t} \in \R^{d \times d}$ such that
\begin{equation}\label{eq:transition-matrix}
\calL_{M}(\calS^{(t-1)} \circ z, \Sigma^{\leq T - t}) = A_{z,t}\calL_{M}(\calS^{(t)}, \Sigma^{\leq T - t})
\end{equation}
where on the LHS, $\calS^{(t-1)} \circ z$ denotes the set obtained by appending the token $z$ to each sequence in $\calS^{(t-1)}$.  The above is possible because each of these are now length $t$ sequences in $\Sigma^{t}$ and we assumed that the rows of $\calL_{M}(\calS^{(t)}, \Sigma^{\leq T - t})$ span all of these.  Also, let $B_t \in \R^{\Sigma \times d}$ be the matrix $B_t = \calL_{M}(\calS^{(t)}, \{ \text{Null} \})^\top$ where here the set of futures consists of only the empty string.

Now we claim that the ISAN defined by $A_{z,t}, B_t$ for $z \in \Sigma, t \in [T]$ and initial state $x_0 = e_1$ (where $e_1$ is the first standard basis vector) exactly samples from the distribution $M$ over sequences of length $T$.  We first show by induction that if $z_{1:t}$ is the sequence sampled so far after $t$ timesteps, then the hidden state satisfies
\begin{equation}\label{eq:inductive-hypothesis}
x_t^\top \calL_{M}(\calS^{(t)}, \Sigma^{\leq T - t}) = \calL_{M}(\{ z_{1:t} \},\Sigma^{\leq T - t})
\end{equation}
for $t = 0,1, \dots , T$.  Note that there is only one empty string so the base case for $t = 0$ is obvious.  Now given the inductive hypothesis for $t$, we immediately have that for any choice of token $z_{t+1}$, 
\[
x_{t}^\top \calL_{M}(\calS^{(t)} \circ z_{t+1}, \Sigma^{\leq T - t-1}) = \calL_M( \{z_{1:t+1} \}, \Sigma^{\leq T - t-1})
\]
just from \Cref{def:his-fut-logit-matrix}, since the above is just selecting a subset of the columns of the equality in \eqref{eq:inductive-hypothesis}.  Now since $x_{t+1} = A_{z_{t+1},t+1}^\top x_{t}$, combining with \eqref{eq:transition-matrix} gives
\[
x_{t+1}^\top \calL_{M}(\calS^{(t+1)}, \Sigma^{\leq T - t-1}) = \calL_M( \{z_{1:t+1} \},\Sigma^{\leq T - t-1})
\]
completing the induction.  Next, it remains to show that given \eqref{eq:inductive-hypothesis} for all $t$, that each token is sampled from the same conditional distribution as $M$ given the prefix $z_{1:t}$.  To see why this is the case, \eqref{eq:inductive-hypothesis} implies that
\[
x_t^\top B_t^\top = x_t^\top \calL_{M}(\calS^{(t)}, \{ \text{Null} \}) = \calL_M( \{z_{1:t} \}, \{ \text{Null}\})
\]
since this is just obtained by selecting a subset of the columns in \eqref{eq:inductive-hypothesis}.  Note that $\calL_M(\{z_{1:t} \}, \{ \text{Null}\})$ is exactly the mean-centered next token logits conditioned on $z_{1:t}$, and so for both the underlying distribution $M$ and the ISAN we defined, conditioned on the prefix $z_{1:t}$, the next token is sampled according to $\text{softmax}(B_t x_t)$. Thus, the ISAN generates the same distribution as $M$, completing the proof.
\end{proof}

Now we prove the reverse direction that a time-varying ISAN expresses a distribution with low logit rank.

\begin{fact}[Time-Varying ISAN has Low Logit Rank]\label{fact:isan-implies-low-rank}
Let $M$ be a time-varying ISAN with hidden dimension $d$ that generates some distribution over sequences of tokens of length $T$.  Then for any $t \in [T]$, the matrix $\calL_{\model}(\Sigma^t, \Sigma^{\leq T - t})$ (recall \Cref{def:his-fut-logit-matrix}) has rank at most $d$.
\end{fact}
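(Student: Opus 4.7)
The plan is to exhibit an explicit rank-$d$ factorization $\calL_M(\Sigma^t, \Sigma^{\leq T-t}) = U V^\top$ where $U \in \R^{\Sigma^t \times d}$ has rows indexed by histories and $V \in \R^{(\Sigma^{\leq T-t}\times \Sigma) \times d}$ has rows indexed by (future, next-token) pairs. The key observation is that in a time-varying ISAN, the entire dependence of the logits on the history $h \in \Sigma^t$ is funneled through the $d$-dimensional hidden state $x_t$ produced by $h$.

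First, I would define, for each history $h = z_{1:t} \in \Sigma^t$, the feature vector
\[
\phi(h) \;\defeq\; A_{z_t,t} A_{z_{t-1},t-1} \cdots A_{z_1,1}\, x_0 \;\in\; \R^d,
\]
which is simply the hidden state the ISAN produces after reading $h$. Next, for any future $f = z_{t+1:t+s} \in \Sigma^{\leq T-t}$, iterating the update rule gives that the hidden state after processing $h\circ f$ equals $\Phi(f)\,\phi(h)$, where
\[
\Phi(f) \;\defeq\; A_{z_{t+s}, t+s}\cdots A_{z_{t+1}, t+1} \;\in\; \R^{d\times d}
\]
depends only on $f$ (and the fixed timestep $t$). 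By \Cref{def:time-varying-isan}, the next-token logits conditioned on $h\circ f$ are then $B_{t+s+1}\,\Phi(f)\,\phi(h)$.

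The final step is to handle the mean-centering in \Cref{def:mean-center-logits}. Since mean-centering is a fixed linear map on $\R^{\Sigma}$ (subtracting off the average of the coordinates), we can define the mean-centered observation matrix $\widetilde B_{t+s+1} \defeq (I - \tfrac{1}{|\Sigma|}\vone\vone^\top) B_{t+s+1}$. Then for every $z \in \Sigma$,
\[
\calL_M(\Sigma^t, \Sigma^{\leq T-t})_{(h,(f,z))} \;=\; \logit_M[z \mid h\circ f] \;=\; \bigl(\widetilde B_{t+s+1}\,\Phi(f)\,\phi(h)\bigr)_z \;=\; \psi(f,z)^\top \phi(h),
\]
where $\psi(f,z) \defeq \bigl(\widetilde B_{t+s+1}\,\Phi(f)\bigr)^\top e_z \in \R^d$ depends only on $(f,z)$. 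Stacking the $\phi(h)$ as rows of $U \in \R^{\Sigma^t \times d}$ and the $\psi(f,z)$ as rows of $V \in \R^{(\Sigma^{\leq T-t}\times\Sigma) \times d}$ yields the desired factorization, so $\Rank(\calL_M(\Sigma^t,\Sigma^{\leq T-t})) \le d$.

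There is essentially no obstacle here beyond bookkeeping: the construction follows directly from unrolling the ISAN dynamics and isolating the $d$-dimensional bottleneck at position $t$. The only point that requires a moment of care is that mean-centering is a linear operation in the pre-softmax logits, so it preserves the bilinear form and does not inflate the rank. Note that this proof is in fact the constructive converse of \Cref{lem:low-rank-implies-isan}: where that lemma built a time-varying ISAN from a low-rank factorization, here we build a low-rank factorization from a time-varying ISAN using the hidden state as the $d$-dimensional feature map.
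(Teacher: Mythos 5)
Your proof is correct and follows essentially the same route as the paper's: both factor $\calL_M(\Sigma^t,\Sigma^{\leq T-t})$ through the $d$-dimensional hidden state at position $t$, writing each entry as an inner product of a history-dependent vector $x_h = \phi(h)$ with a $(f,z)$-dependent vector (your $\psi(f,z)^\top = e_z^\top \widetilde B_{t+s+1}\Phi(f)$ is exactly the row of the paper's matrix $A_f$). Your explicit treatment of mean-centering as the fixed linear map $I - \tfrac{1}{|\Sigma|}\vone\vone^\top$ is just a more concrete rendering of the point the paper makes in its remark on the importance of mean-centered logits.
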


\begin{proof}
For any sequence $h \in \Sigma^t$, let $x_{h}$ be the hidden state of the ISAN at timestep $t$, after outputting the sequence of tokens $h$.  Now consider any future $f \in \Sigma^{\leq T - t}$.  Let the tokens of $f$ be $z_{t+1}, \dots , z_{t+k}$. Then the vector of mean-centered logits for the next token conditioned on $h \circ f$, given by $\{ L_{M}(z |h \circ f) \}_{z \in \Sigma}$ (recall \Cref{def:mean-center-logits}), can be obtained by taking the vector 
\[
BA_{z_{t+k}, t+k} \cdots A_{z_{t+1}, t+1}x_h
\]
and then subtracting the mean from all of the entries. In other words, these logits are a linear function of $x_h$ and there is a matrix $A_f \in \R^{\Sigma \times d}$ such that $A_f x_h$ gives exactly the mean-centered logits $\{ L_{M}(z |h \circ f) \}_{z \in \Sigma}$.  This implies, by \Cref{def:his-fut-logit-matrix}, that we can write 
\[
\calL_{M}(\Sigma^t, \Sigma^{\leq T - t}) = \begin{bmatrix} x_{h_1}^\top \\ x_{h_2}^\top \\ \vdots  \end{bmatrix} \begin{bmatrix} A_{f_1}^\top  & A_{f_2}^\top & \dots \end{bmatrix}
\]
where the rows of the first matrix are indexed by all possible histories $h \in \Sigma^t$ and the second matrix is a block matrix with blocks indexed by futures $f \in \Sigma^{\leq T - t}$.  The above factorization implies that $\calL_{M}(\Sigma^t, \Sigma^{\leq T - t})$ has rank at most $d$, as desired.
\end{proof}

\begin{remark}[Importance of Mean-centered Logits]\label{rem:mean-centering}
Note that the mean-centering of the logits is important for \Cref{fact:isan-implies-low-rank} to be true.  Otherwise, if we used, say, normalized logits, then these logits would no longer be a purely linear function of the hidden state $x_h$ due to subtracting the normalizing constant which is not a linear function of the entries (whereas the mean is).
\end{remark}

\begin{proof}[Proof of \Cref{thm:low-rank-equiv}]
\Cref{thm:low-rank-equiv} now follows immediately from combining \Cref{lem:low-rank-implies-isan} and \Cref{fact:isan-implies-low-rank}.
\end{proof}

\subsection{Time-varying ISAN Representation Power}\label{app:rep-power}

In this section, we analyze the representation power of the time-varying ISAN, showing that it can represent a variety of simple architectures and languages. First, we show that it can represent linear state space layers, the core building block in modern state space models \citep{gu2021efficiently, gu2023mamba}.  We begin by formally defining the class of state space models that we consider.
\begin{definition}[Selective State Space layer]
A selective state space layer maps a sequence of inputs $u_1, \dots , u_T \in \R^{p}$ to a sequence of outputs $y_1, \dots , y_T \in \R^q$ and has a hidden state with dimension $d$. 
The initial state is $x_0 \in \R^d$ and the state space layer operates according to the following recurrence:
\[
\begin{split}
x_{t} &= A(u_t)x_{t-1} + B(u_t) u_t \\
y_t     &= C(u_t)x_{t-1} + D(u_t) u_t 
\end{split}
\]
where the matrices $A(u_t) \in \R^{d \times d}, B(u_t) \in \R^{d \times p}, C(u_t) \in \R^{q \times d}, D(u_t) \in \R^{q \times p}$ may depend arbitrarily on the input $u_t$ at timestep $t$.  
\end{definition}
\begin{remark}[On Variants and Stacking of SSM Layers]
This is a general form that encompasses most definitions of a selective state space layer \---- implementations that are used in practice often enforce more structure on the matrices, especially $A$.  Earlier, simpler variants of the SSM used time-invariant layers where $A,B,C,D$ are all fixed, independent of $T$. 

State space layers are often stacked together \---- although with the above form, a stack of $\ell$ state space layers can be represented as a single layer with dimension $\ell d$ as long as the transitions matrices $A(u_t), B(u_t), C(u_t), D(u_t)$ depend only on the external input (but not on the recursively obtained intermediate inputs to layers in the middle, see \cite{hespanha2018linear}).
\end{remark}

Now it is not difficult to see that an ISAN can represent a language model consisting of a selective state space layer with softmax readout and no additional nonlinearities (where the tokens are embedded and then fed in as the inputs $u_t$) as defined below.
\begin{definition}[Linear-in-state SSM]
A linear-in-state SSM defines a distribution over sequences in $\Sigma^T$  as follows.  There is a selective state space layer with input, output and hidden state dimensions $p,q,d$. The output tokens are obtained by sampling $z_t \sim \text{softmax}(Uy_t)$ where $U \in \R^{\Sigma \times q}$ is the readout matrix and the next input is obtained as $u_{t+1} = V \text{e}(z_t)$ where $\text{e}(z_t) \in \R^{\Sigma}$ denotes the one-hot encoding of $z_t$ and $V \in \R^{p \times \Sigma}$ is the embedding matrix.  The initial input $u_1$ is some fixed vector.  
\end{definition}

We now prove the following fact:

\begin{fact}[Representing Linear-in-state SSMs]\label{fact:ssm-informal}
A Linear-in-state SSM with input, output, and hidden dimension $p,q,d$ can be expressed as an ISAN with hidden dimension $d + 2q + 1$ \footnote{This result can also be applied to a stack of such layers, as long as we only allow the transitions to depend on the external input, but not the intermediate values.}.
\end{fact}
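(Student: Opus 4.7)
The strategy is to design the ISAN hidden state so that it simultaneously encodes the SSM state $x_t$, the readout vector $y_t$, and a constant coordinate. The key observation is that the SSM's matrices $A(u_t), B(u_t), C(u_t), D(u_t)$ depend only on $u_t$, which for $t \geq 2$ is the fixed function $V e(z_{t-1})$ of the previously sampled token; since ISAN transitions are themselves indexed by the current token, we can fold the input-switched dynamics into the ISAN's per-token transitions. The affine shifts $B(u)u$ and $D(u)u$ will be linearized by appending a $1$ coordinate to the hidden state.

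The construction proceeds as follows. First, I would define the ISAN hidden state after $t-1$ transitions to be $h_{t-1} = (x_t, y_t, 1) \in \R^{d+q+1}$, where $x_t$ and $y_t$ are the SSM quantities obtained using $u_t = V e(z_{t-1})$ (with $u_1$ the given fixed input). Next, I would define the ISAN readout matrix, which I denote $\widetilde B \in \R^{|\Sigma|\times(d+q+1)}$, by $\widetilde B = [\,0 \mid U \mid 0\,]$, so that $\widetilde B h_{t-1} = U y_t$ and the ISAN's sampling rule $\softmax(\widetilde B h_{t-1})$ exactly matches the SSM rule $\softmax(U y_t)$. For each $z \in \Sigma$, letting $u = V e(z)$, I would define
\[
\widetilde A_z \;=\; \begin{pmatrix} A(u) & 0 & B(u)\,u \\ C(u) & 0 & D(u)\,u \\ 0 & 0 & 1 \end{pmatrix},
\]
and verify by block multiplication that $\widetilde A_z (x,y,1)^\top = (A(u)x + B(u)u,\; C(u)x + D(u)u,\; 1)^\top$, so that applying $\widetilde A_{z_t}$ to $h_{t-1}=(x_t,y_t,1)$ produces $h_t = (x_{t+1}, y_{t+1}, 1)$ with $u_{t+1} = V e(z_t)$.

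Finally, I would initialize $h_0 = (x_1, y_1, 1)$, which is computable from $x_0$ and the fixed $u_1$, and then prove by induction on $t$ that the invariant $h_{t-1} = (x_t, y_t, 1)$ holds, so the ISAN samples $z_t$ from $\softmax(U y_t)$ at every step. This gives an ISAN that exactly realizes the SSM's joint distribution over $\Sigma^T$, with hidden dimension $d + q + 1 \leq d + 2q + 1$ (the looseness relative to the stated bound can absorb, e.g., storing a second copy of $y$ or separating out $U y_t$ if one prefers to restrict the allowable structure of $\widetilde B$).

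The main obstacle is keeping the timing straight: $u_t$ is determined by $z_{t-1}$, not $z_t$, so the ISAN state at step $t-1$ must already encode the \emph{post-update} SSM state $x_t$ and output $y_t$ rather than $x_{t-1}$. If one naively stores $x_{t-1}$ in $h_{t-1}$, then $\widetilde A_{z_t}$ would need access to $u_t$ (coming from $z_{t-1}$) to update $x$, and this information is not available to a token-indexed ISAN transition. The chosen time-shift resolves this cleanly by letting $\widetilde A_{z_t}$ use only $u_{t+1} = V e(z_t)$, consistent with the ISAN's requirement that transitions depend solely on the current token.
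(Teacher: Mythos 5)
Your construction is correct and is essentially the same as the paper's: both store the (time-shifted) SSM state, the readout, and a constant coordinate in the ISAN hidden state, and fold the input-switched dynamics into the token-indexed transition matrices, using the appended $1$ to linearize the affine terms $B(u)u$ and $D(u)u$. The only difference is that you store $y_t = C(u_t)x_{t-1} + D(u_t)u_t$ as a single $q$-dimensional block whereas the paper keeps $C(u_t)x_{t-1}$ and $D(u_t)u_t$ as two separate blocks, so your hidden dimension $d+q+1$ is actually slightly better than the stated $d+2q+1$.
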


\begin{proof}[Proof of \Cref{fact:ssm-informal}]
We construct an ISAN that at each timestep stores $h_{t-1} = (x_t, C(u_{t})x_{t-1}, D(u_{t})u_{t}, 1)$ as its hidden state.  Note that the dimension of this is $d + 2q + 1$.  Now by definition, the next token $z_{t}$ is sampled from $\text{softmax}(U(C(u_{t})x_{t-1} + D(u_{t})u_{t}))$ and the vector inside the softmax is clearly a linear function of the hidden state.  Next, once we fix $z_{t}$, the next input $u_{t+1}$ is fixed and we have
$x_{t+1}= A(u_{t+1}) x_t + B(u_{t+1}) u_{t+1}$, and thus the next hidden state $h_{t} = (x_{t+1}, C(u_{t+1})x_{t}, D(u_{t+1})u_{t+1}, 1)$ is a linear function of the previous one once $u_{t+1}$ is fixed.  This implies that the entire linear-in-state SSM can be expressed as an ISAN, completing the proof.  
\end{proof}

Next, we will also show that the ISAN model can express basic languages and algorithmic behaviors.   We show that it can solve the task of copying, i.e. generating sequences of the form $a \circ a$ where $a$ is a random bitstring. 
This task is a standard benchmark for testing the ability of sequence models to capture long-range dependencies and has been used to separate the power of different architectures \citep{repeat}.

First, we formalize the task of copying as being able to sample from the following distribution.

\begin{definition}[Copying]
Define the $n$-bit ``copying" distribution as the distribution over length $2n$ sequences of the form $a \circ a$ where $a \in \{0,1 \}^n$ is uniformly random. 
\end{definition}

Now we prove that ISANs can express copying.

\begin{fact}[Representing Copying]\label{fact:copying-formal}
There is a time-varying ISAN with hidden dimension $n + 1$ that generates a distribution arbitrarily close in TV distance to the $n$-bit copying distribution.
\end{fact}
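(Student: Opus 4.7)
The plan is to use $n$ coordinates of the hidden state to literally store the random bit string $a=(a_1,\ldots,a_n)$ sampled during the first $n$ steps, plus one extra coordinate that always holds the constant $1$; this constant coordinate is what will let us write values into the state with purely linear transitions. I would take $\Sigma=\{0,1\}$, initialize $x_0=e_{n+1}$, and maintain the invariant $x_t=(a_1,\ldots,a_t,0,\ldots,0,1)\in\R^{n+1}$ throughout the sampling phase.

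During the sampling phase $t\in\{1,\ldots,n\}$, I would set $B_t=0$, so that the softmax produces the uniform distribution over $\{0,1\}$ (which matches the uniform marginal of $a_t$). For the state update I would design the pair $(A_{0,t},A_{1,t})$ to act as the identity on every coordinate except the $t$-th, while writing the sampled value $z_t$ into coordinate $t$: ``write $0$'' zeroes the $t$-th row of the identity, and ``write $1$'' replaces that row with $e_{n+1}^\top$ so that it copies the constant $1$ from the last coordinate into position $t$. Both are linear, and both preserve the invariant above, so after $n$ steps the hidden state equals $(a_1,\ldots,a_n,1)$.

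During the emission phase $t=n+i$ with $i\in\{1,\ldots,n\}$, I would take $A_{z,t}=I$ so that the stored bits are preserved, and choose $B_{n+i}\in\R^{2\times(n+1)}$ so that the logit for token $1$ is $M\,x_i=M a_i$ and the logit for token $0$ is $M(x_{n+1}-x_i)=M(1-a_i)$, where $M>0$ is a large scaling parameter. Since $a_i\in\{0,1\}$, the softmax then places probability $e^M/(e^M+1)$ on the correct bit $a_i$ at step $n+i$, and a union bound over the $n$ emission steps gives a TV distance of at most $n/(e^M+1)$ between the ISAN's output distribution and the $n$-bit copying distribution. Taking $M$ arbitrarily large drives this to zero.

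The hidden dimension is $n+1$ throughout, matching the claim. The only point requiring care is the construction of the sampling-phase update matrices $(A_{0,t},A_{1,t})$: linearity forces us to commit to a single affine update for each possible sampled token, and the constant coordinate $x_{n+1}=1$ is precisely the resource that lets us implement a token-conditional ``write $z_t$ into coordinate $t$'' as a purely linear operation. Everything else is a routine softmax saturation estimate, so I do not expect any substantial technical obstacle.
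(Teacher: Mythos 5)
Your proposal is correct and follows essentially the same route as the paper's proof: store the sampled bits in $n$ coordinates of the hidden state (using a constant-$1$ coordinate to make the token-conditional write linear), emit uniformly during the first phase, and use a large-scale readout whose softmax saturates on the stored bit during the second phase. Your readout matrix differs cosmetically from the paper's (which puts a $C/2$ logit on token $0$ via the constant coordinate and a $C a_j$ logit on token $1$), but both yield the same $e^{\Theta(C)}/(1+e^{\Theta(C)})$ per-step correctness and the same arbitrarily-small TV bound.
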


\begin{proof}
Initialize the hidden state $x_0$ to $n$ zeros and the last bit is $1$.  Now for the first $n$ steps, the output matrices $B_1, \dots , B_n$ are all $0$ (so the output is uniformly random) and the transition matrices simply store the first $n$ bits generated in the first $n$ bits of the state while maintaining that the last bit is $1$.  Now for the next $n$ steps, the transition matrices are identity and the output matrices $B_{n+1}, \dots , B_{2n}$ are 
\[
B_{n+j} = \begin{bmatrix} 0 & \dots & 0  & \dots & 0 & C/2 \\ 0 & \dots & C & \dots & 0 & 0\end{bmatrix}
\]
where the last column is always the same and the column with $C$ is the $j$th column and $C$ can be chosen arbitrarily large.  Then if the $j$th bit of the state is $0$, then the output will be $0$ with $\exp(C/2)/(1 + \exp(C/2))$ probability and if the $j$th bit of the state is $1$, then the output will be $1$ with $\exp(C/2)/(1 + \exp(C/2))$ probability.  Since $C$ can be chosen arbitrarily large, this ISAN can generate a distribution that is arbitrarily close in TV distance to the copying distribution.
\end{proof}

Finally, we show that the ISAN model can generate from a noisy parity distribution.  Noisy parity is a fundamental problem that has been extensively studied in computational complexity, learning theory, and cryptography. In fact, the hardness of learning a noisy parity from samples is a standard cryptographic  assumption \citep{blum2003noise, pietrzak2012cryptography}.  First, we formally define a noisy parity distribution:

\begin{definition}[Noisy Parity]
A noisy parity distribution over $\{0,1 \}^{n+1}$ is defined by a sequence $y \in \{0,1 \}^n$ and probability $p \in (0,1/2)$ and is obtained by drawing $z \in \{0,1 \}^n$ uniformly at random and outputting $(z,b)$ with $b \in \{0,1\}$ defined as
\[
b = \begin{cases} \langle y, z \rangle \mod 2 \text{ with probability } 1 - p \\ 1 - \langle y, z \rangle \mod 2\text{ with probability } p \end{cases}
\] 
\end{definition}

Now we prove that ISANs can express any noisy parity.
\begin{fact}[Representing Noisy Parity]\label{fact:noisy-parity-informal}
For any noisy parity distribution over $\{0,1 \}^{n+1}$
, there is a time-varying ISAN with hidden dimension $2$ that exactly generates it. 
\end{fact}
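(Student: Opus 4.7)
The plan is to construct the time-varying ISAN explicitly, using the observation that the parity of a binary string can be tracked \emph{linearly} by storing it as a $\pm 1$ sign. Take $\Sigma = \{0, 1\}$, sequence length $T = n + 1$, hidden dimension $d = 2$, and initialize $x_0 = (1, 1)^\top$. The second coordinate will be maintained as $(-1)^{p_t}$, where $p_t := \sum_{i=1}^t y_i z_i \bmod 2$ is the partial parity after $t$ tokens, while the first coordinate stays fixed at $1$ and serves as a constant ``bias carrier'' for the final readout.

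For the transitions, for each $t \in [n]$ I set $A_{0, t} = I$, and $A_{1, t} = \operatorname{diag}(1, -1)$ if $y_t = 1$, or $A_{1, t} = I$ if $y_t = 0$. A short induction on $t$ shows that once $z_{1:t}$ has been sampled the state equals $x_t = (1, (-1)^{p_t})^\top$, since the sign coordinate flips exactly when $y_t = 1$ and the sampled token is $z_t = 1$, while the first coordinate is preserved by every $A_{z,t}$. The transitions $A_{\cdot, n+1}$ may be chosen arbitrarily (say, identity), since no token is sampled after step $n+1$.

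For the readout, set $B_t = 0 \in \R^{2 \times 2}$ for $t \in [n]$, so that $\operatorname{softmax}(B_t x_{t-1}) = (1/2, 1/2)$ and each $z_t$ is sampled uniformly from $\{0, 1\}$ independently of the history; this realizes the uniform marginal of $z \in \{0,1\}^n$. Letting $\ell := \log((1-p)/p)$, set
\[
B_{n+1} = \begin{pmatrix} 0 & 0 \\ 0 & -\ell \end{pmatrix}, \qquad \text{so that} \qquad B_{n+1} x_n = \bigl(0,\; -\ell \cdot (-1)^{p_n}\bigr)^\top.
\]
A direct computation shows that $\operatorname{softmax}$ of this two-vector assigns probability $1-p$ to the label $p_n$ and probability $p$ to $1 - p_n$, which exactly matches the noisy-parity conditional distribution of $b$ given $z$.

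There is no substantive technical obstacle---once one sees the $\pm 1$ encoding of parity, the rest is mechanical. The only care needed is to verify that the token-dependent multiplicative updates preserve the invariant $x_t = (1, (-1)^{p_t})^\top$, and that the single choice of $B_{n+1}$ simultaneously produces the correct softmax on both $p_n \in \{0, 1\}$; both are immediate from the construction.
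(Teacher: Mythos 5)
Your construction is correct, but it takes a genuinely different route from the paper. The paper's proof does not build the ISAN explicitly: it observes that for a noisy parity distribution $M$, each matrix $\calL_{M}(\{0,1\}^t, \{0,1\}^{\leq n+1-t})$ has at most two distinct rows (the row indexed by a history $z_{1:t}$ depends only on the partial parity $\langle y_{1:t}, z_{1:t}\rangle \bmod 2$), hence rank at most $2$, and then invokes the equivalence theorem (\Cref{thm:low-rank-equiv}) to conclude that a hidden-dimension-$2$ time-varying ISAN exists. Your proof instead exhibits the ISAN directly: the $\pm 1$ encoding of the partial parity in the second coordinate, the sign-flipping transitions $A_{1,t} = \operatorname{diag}(1,-1)$ when $y_t=1$, the zero readouts $B_t = 0$ for $t \le n$ giving the uniform marginal, and the final readout with $\ell = \log((1-p)/p)$, which one checks yields $\Pr[b = p_n] = 1-p$ in both cases $p_n \in \{0,1\}$. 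The paper's argument is shorter and more conceptual, and it illustrates how the rank characterization can be used as a tool; yours is self-contained, avoids appealing to \Cref{thm:low-rank-equiv} (whose constructive direction would, if unwound, produce essentially your ISAN), and makes the parameters of the realizing model completely explicit. Both establish the exact-generation claim; no gap in either.
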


\begin{proof}
We claim that for $M$ being a noisy parity distribution, matrices $\calL_{M}(\{0,1 \}^t, \{0,1 \}^{\leq n+1 - t})$ have rank at most $2$ for all $0 \leq t \leq n + 1$.  To see this, we simply observe that each of these matrices has at most two distinct rows \---- the only information in the first $t$ bits that matters is the parity of the inner product with $y$ restricted to these first $t$ bits.  Now we apply \Cref{thm:low-rank-equiv} and this completes the proof.
\end{proof}

The fact that ISANs can represent noisy parities, combined with the following standard cryptographic assumption, implies that it is computationally hard to learn an ISAN from samples, which we state in \cref{thm:hard_to_learn_isan}.

\begin{assumption}
\label{assume:hardness_parties}[Hardness of Learning a Noisy Parity]
There exists no polynomial time algorithm that given access to samples from a noisy parity distribution with hidden vector $y$ can recover a vector $\hat{y}$ that is non-trivially correlated with $y$ (say $|y \oplus \hat{y}| < n/4$). 
\end{assumption}

\begin{theorem}
    \label{thm:hard_to_learn_isan}
Under \Cref{assume:hardness_parties}, there is no polynomial time algorithm that given access to samples from a time-varying ISAN can learn a distribution that is close in total variation distance.
\end{theorem}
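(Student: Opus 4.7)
The plan is to reduce the search version of the noisy parity problem to the problem of learning a time-varying ISAN in total variation distance, which combined with \Cref{fact:noisy-parity-informal} immediately yields the theorem. Fix a noise rate $p \in (0, 1/2)$ (for concreteness, say $p = 1/4$) and let $M$ be the noisy parity distribution over $\{0,1\}^{n+1}$ with an unknown hidden vector $y \in \{0,1\}^n$. By \Cref{fact:noisy-parity-informal}, $M$ is exactly realized by a time-varying ISAN of hidden dimension $2$, so in particular, samples from $M$ are samples from a small ISAN. Suppose for contradiction that there is a polynomial-time algorithm $\mathcal{A}$ which, from polynomially many ISAN samples, outputs an explicit description of a distribution $\hat{M}$ (from which probabilities $\hat{M}(z,b)$ can be evaluated efficiently, as in \Cref{thm:stealing}) with $\TV(\hat{M}, M) \le \eps$ for some sufficiently small absolute constant $\eps$ to be chosen below. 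Feed the noisy-parity samples to $\mathcal{A}$ to obtain such a $\hat{M}$.

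Next, I would convert $\hat{M}$ into an approximate oracle for the parity $\langle y, \cdot\rangle \bmod 2$. Define the MAP predictor $\hat{b}(z) := \argmax_{b \in \{0,1\}} \hat{M}(z,b)$. The key calculation is that $M(z,b) = 2^{-n}\bigl((1-p)\mathbf{1}[b = \langle y,z\rangle] + p\,\mathbf{1}[b \neq \langle y,z\rangle]\bigr)$, so at every $z$ the posterior gap between the correct and incorrect value of $b$ under $M$ is exactly $(1-2p)\cdot 2^{-n}$. Consequently, at any $z$ where the MAP predictor is wrong we must have $\norm{\hat{M}(z,\cdot) - M(z,\cdot)}_1 \ge (1-2p)\cdot 2^{-n}$. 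Summing over such $z$'s and using $\sum_z \norm{\hat{M}(z,\cdot) - M(z,\cdot)}_1 \le 2\eps$ yields $\Pr_{z\sim\mathrm{Unif}}[\hat{b}(z) \neq \langle y,z\rangle] \le 2\eps/(1-2p) =: \delta$. By choosing $\eps$ smaller than a suitable absolute constant (e.g., $\eps < (1-2p)/32$), the error fraction $\delta$ is driven below $1/16$.

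The final step decodes $y$ from the approximate oracle $\hat{b}$ by a standard randomized majority argument. For each coordinate $i \in [n]$, sample $O(\log n)$ i.i.d.\ uniform $z^{(j)} \in \{0,1\}^n$ and set $\hat{y}_i := \mathrm{majority}_j \bigl( \hat{b}(z^{(j)}) \oplus \hat{b}(z^{(j)} \oplus e_i) \bigr)$. When both $\hat{b}$-evaluations agree with the true parity (which occurs with probability $\ge 1 - 2\delta > 3/4$ by a union bound over $z$ and $z \oplus e_i$), the XOR equals $\langle y, e_i\rangle = y_i$; a Chernoff bound over $O(\log n)$ trials then recovers $y_i$ correctly with probability $\ge 1 - 1/n^2$, and a union bound over $i$ gives $\hat{y} = y$ (in particular $|\hat{y}\oplus y|=0 < n/4$) with probability $\ge 1 - 1/n$. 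The whole procedure runs in $\poly(n)$ time using $\poly(n)$ noisy-parity samples, contradicting \Cref{assume:hardness_parties}.

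The main obstacle is the second paragraph: translating the joint TV bound on $(z,b)$ into pointwise correctness of the MAP predictor on an overwhelming fraction of $z$'s. Naively one might worry this requires pointwise closeness of probabilities, which would demand $\eps \ll 2^{-n}$; the argument avoids this by exploiting that the posterior gap under $M$ is a uniform $(1-2p)\cdot 2^{-n}$, which cancels exactly against the $2^{-n}$ per-$z$ mass factor and yields a bound on $\delta$ depending only on $\eps$ and $p$. The remaining ingredients, namely the ISAN realization of noisy parity (\Cref{fact:noisy-parity-informal}) and the majority decoding of a slightly noisy linear function, are standard and quoted.
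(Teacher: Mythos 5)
Your proposal is correct and follows exactly the route the paper intends (and leaves implicit): reduce from noisy parity via \Cref{fact:noisy-parity-informal}, then turn a TV-accurate hypothesis into an approximate parity oracle and decode $y$ by self-correction. The paper states the theorem without writing out this reduction, and your key step---showing that the uniform posterior gap $(1-2p)\cdot 2^{-n}$ cancels the per-$z$ mass so that MAP errors on more than a $2\eps/(1-2p)$ fraction of $z$'s would violate the TV bound---is the right way to fill that gap.
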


\subsection{Learning an ISAN from Logit Queries}\label{app:learning}

In this subsection, we present the details of our algorithm for learning a time-varying ISAN from logit queries.  First, we formalize the query model.

\begin{definition}[Logit Query]
The learner can specify any prefix $h \in \Sigma^*$ and obtain the mean-centered logits $L_{M}[z | h]$ for $z \in \Sigma$.   
\end{definition}

Observe that logit queries can simulate samples from the language model. We do this by repeatedly getting the next token logits and then taking softmax to sample the next token and then repeat.  
\begin{fact}
Given logit query access to $M$, given any history $h \in \Sigma^*$, we can sample a future $f$ of any length from the distribution $\Pr_{M}[f|h]$.
\end{fact}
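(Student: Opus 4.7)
The plan is to exhibit an autoregressive sampling procedure that uses one logit query per token of the future. Given the target history $h$ and a desired future length $k$, I will maintain a running prefix, initialized to $h$. At step $t = 1, 2, \dots, k$, I issue a logit query on the current prefix $h \circ z_{1:t-1}$, obtaining the mean-centered logit vector $(L_M[z \mid h \circ z_{1:t-1}])_{z \in \Sigma}$. I then sample $z_t$ from $\softmax$ of this vector, append $z_t$ to the prefix, and iterate. After $k$ steps, output $f = (z_1, \ldots, z_k)$.

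The key observation that makes this work is that the softmax is invariant under adding a constant to every coordinate: for any vector $v \in \BR^\Sigma$ and any $c \in \BR$, $\softmax(v) = \softmax(v + c \cdot \mathbf{1})$. Applying this with $c = \frac{1}{|\Sigma|}\sum_{z'} \log \Pr_M[z' \mid h \circ z_{1:t-1}]$ and using \Cref{def:mean-center-logits}, we obtain
\begin{align*}
\softmax\bigl(L_M[\cdot \mid h \circ z_{1:t-1}]\bigr) \;=\; \softmax\bigl(\log \Pr_M[\cdot \mid h \circ z_{1:t-1}]\bigr) \;=\; \Pr_M[\cdot \mid h \circ z_{1:t-1}].
\end{align*}
Hence each sampling step draws $z_t$ from exactly the correct one-step conditional distribution.

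Correctness of the joint distribution then follows from the chain rule: by construction the law of the output satisfies
\begin{align*}
\Pr[f = (z_1, \ldots, z_k)] \;=\; \prod_{t=1}^{k} \Pr_M[z_t \mid h \circ z_{1:t-1}] \;=\; \Pr_M[f \mid h],
\end{align*}
which is the claimed identity. The procedure issues exactly $k$ logit queries and performs $O(k |\Sigma|)$ additional work to evaluate softmax and sample, so it is efficient for any polynomial length $k$. There is no real obstacle here; the only subtlety worth flagging is precisely the mean-centering step above, which ensures that the access to mean-centered (rather than raw) logits is sufficient.
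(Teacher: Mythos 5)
Your proposal is correct and matches the paper's (very brief) argument: sample autoregressively, querying the logits at each step and applying softmax, with correctness following from the chain rule. Your explicit note that softmax is shift-invariant — so the mean-centered logits recover the true next-token distribution — is the right way to handle the one subtlety the paper leaves implicit.
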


\begin{definition}[Slicing Notation]
For any $t \leq T$, we will use the notation $M[:t]$ to denote the distribution of length $t$ prefixes for sequences drawn from $M$.
\end{definition}

Now we present and analyze the algorithm for learning from logit queries.  At a high level, the algorithm tries to construct ``spanning" sets of histories and futures $\calH_t, \calF_t$ for $t = 0,1, \dots , T-1$.
Ideally, we would be able to ensure that the matrix $\calL_{M}(\calH_t, \calF_t)$ has rank equal to $\calL_{M}(\Sigma^t , \Sigma^{\leq T - t} )$ which would mean that $\calH_t, \calF_t$ in some sense cover all of the dimensions.  
Unfortunately, this guarantee is not possible because the rank of $\calL_{M}(\Sigma^t , \Sigma^{\leq T - t} )$ could arise from some very rare sequences that will be impossible for us to find algorithmically. 
We can instead guarantee a weaker notion where we cover all dimensions that are not too rare.  
The algorithm, described by the subroutine in Algorithm~\ref{alg:complete-span}, iteratively constructs the sets $\calH_t, \calF_t$ by adding new sequences that increase the rank of $\calL_{M}(\calH_t, \calF_t)$ until no such sequences can be found.
The key to the analysis is defining the right notion of coverage so as to ensure that the algorithm terminates in polynomial time (see \Cref{lem:span-completeness}).
We then complete the proof by showing that with this guarantee, we can construct an ISAN that is $\eps$ close to the true distribution $M$ in TV distance. %

\begin{algorithm}
  \caption{Learning from Logit Queries for (time-varying) ISAN}
  \label{alg:distillation-exact}
  \begin{algorithmic}[1]
    \State \textbf{Input:} logit query access to time-varying ISAN $M$
    \State Initialize $\calH_t =   \{ y^t \}$ for an arbitrary $y \in \Sigma$ for $t = 0,1, \dots , T-1$ (where $y^0 = \text{Null}$) 
    \State Initialize $\calF_{t} = \{\text{Null} \}$ for $t = 0,1, \dots , T-1$
    \State Set $N = 4T/\eps \log(dT/\eps)$
    \State Set $\{ \calH_t \}_t, \{ \calF_t\}_t \leftarrow \textsf{CompleteSpan}_{\model}(\{ \calH_t \}_t, \{ \calF_t\}_t, N)$

    \State Set initial state to be $\wh{x_0} \in \R^d$ as $x_0 = (1,0, \dots , 0)$
    \For{$t \in [T - 1], z \in \Sigma$}
    \State Solve for $\wh{A_{z,t}} \in \R^{|\calH_{t}| \times |\calH_{t-1}|}$ such that
    \[
    \calL_{M}(\calH_{t-1} \circ z,   \calF_t)  = \wh{A_{z,t}}^\top \calL_{M}(\calH_t, \calF_t) \,.
    \]
    \EndFor
    \For{$t \in [T]$}
    \State Set $\wh{B_t} = \calL_{M}(\calH_{t-1}, \{ \text{Null} \})^\top$
    \EndFor
    \State Pad matrices $\wh{A_{z,t}}, \wh{B_t}$ to appropriate sizes ($d \times d, |\Sigma| \times d$ respectively) by adding rows and columns of zeros
    \State \textbf{Output:} ISAN with parameters $\wh{x_0}, \wh{A_{z,t}}, \wh{B_t}$
    
  \end{algorithmic}
\end{algorithm}

\begin{algorithm}
\caption{Complete Span}
\label{alg:complete-span}
\begin{algorithmic}[1]
\State \textbf{Input:} logit query access to time-varying ISAN $M$
\State \textbf{Input:} Sets $\calH_t, \calF_t$ for $t = 0,1, \dots , T-1$
\State \textbf{Input:} Parameter $N$
\Function{$\textsf{CompleteSpan}_{M}(\{ \calH_t \}_t, \{ \calF_t \}_t , N)$}{}
\State Initialize $\textsf{Incomplete} = \textsf{True}$
    \While{\textsf{Incomplete}}
    \State Set $\textsf{Incomplete} = \textsf{False}$
    \For{$t \in [T-1]$}
    \State Construct $\calS_t$ by taking  $N$ i.i.d. samples from $M[:t]$
    \If{$\Rank(\calL_{M}(\calH_t, \calF_t)) < \Rank(\calL_{M}(\calH_t \cup \calH_{t-1} \circ \Sigma \cup \calS_t, \calF_t \cup \Sigma \circ \calF_{t+1}))$}
    \State Set $\textsf{Incomplete} = \textsf{True}$
    \State Find elements $h \in \calH_{t-1} \circ \Sigma \cup \calS_t$ and $f \in \Sigma \circ \calF_{t+1}$ such that
    \[
    \Rank(\calL_{M}(\calH_t \cup \{h \}, \calF_t \cup \{ f \})) = \Rank(\calL_{M}(\calH_t, \calF_t)) + 1
    \]
    \State Set $\calH_t \leftarrow \calH_t \cup \{h \}$ and $\calF_t \leftarrow \calF_t \cup \{ f \}$
    \EndIf
    \EndFor
    \EndWhile
    \State \textbf{Return:} $\{\calH_t \}_{t}, \{\calF_t \}_{t}$
\EndFunction
\end{algorithmic}
\end{algorithm}

\begin{lemma}\label{lem:span-completeness}
 \Cref{alg:complete-span} terminates within $dT$ iterations of the while loop and with $1 - dT^2(1-\eps/(2T))^N$ probability, we have the following property: for any $t \in [T-1]$, if we draw $h \sim M[:t]$, then with $1- \eps/(2T)$ probability over the randomness of $h$,
 \begin{equation}\label{eq:rank-equality}
\Rank(\calL_{M}(\calH_t, \calF_t)) = \Rank(\calL_{M}(\calH_t \cup \calH_{t-1} \circ \Sigma \cup \{h \}, \calF_t \cup \Sigma \circ \calF_{t+1}))
 \end{equation}
 where $\calH_{t-1} \circ \Sigma$ denotes the set of all possible sequences obtained by concatenating some element of $\calH_{t-1}$ with some token in $\Sigma$.
\end{lemma}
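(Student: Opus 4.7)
The proof naturally splits into two parts: (i) termination within $dT$ iterations of the while loop, and (ii) the probabilistic coverage guarantee. For (i), the strategy is to observe that each iteration that does not terminate must add at least one history $h$ and one future $f$ to some $\calH_t, \calF_t$, and by the way these are chosen, each such addition strictly increases $\Rank(\calL_M(\calH_t, \calF_t))$ by exactly $1$. On the other hand, \Cref{fact:isan-implies-low-rank} guarantees that $\Rank(\calL_M(\Sigma^t, \Sigma^{\leq T-t})) \leq d$, and since $\calH_t \subseteq \Sigma^t$ and $\calF_t \subseteq \Sigma^{\leq T-t}$ throughout, we have $\Rank(\calL_M(\calH_t, \calF_t)) \leq d$ for every $t$. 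Summing over $t \in [T-1]$, the total rank is bounded by $dT$, which caps the total number of additions, and hence the number of non-terminating iterations, by $dT$.

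For (ii), the strategy is to define, for each iteration $i$ of the while loop and each $t \in [T-1]$, a bad set
\[
B_t^{(i)} = \{ h \in \Sigma^t : \Rank(\calL_M(\calH_t^{(i)} \cup \calH_{t-1}^{(i)} \circ \Sigma \cup \{h\}, \calF_t^{(i)} \cup \Sigma \circ \calF_{t+1}^{(i)})) > \Rank(\calL_M(\calH_t^{(i)}, \calF_t^{(i)})) \},
\]
where $\calH_t^{(i)}, \calF_t^{(i)}$ denote the sets at the start of iteration $i$. By construction, a history $h$ violates the rank equality of the lemma (with the final sets $\calH_t, \calF_t$) iff it lies in $B_t^{(i^\star)}$ for the terminating iteration $i^\star$. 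The algorithm terminates at iteration $i^\star$ precisely because, for every $t$, none of the $N$ fresh samples in $\calS_t$ landed in $B_t^{(i^\star)}$ (otherwise one would have been added and \textsf{Incomplete} set to \textsf{True}).

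The key probabilistic step is then a standard sampling argument: for any fixed iteration $i$ and any fixed $t$, if $\Pr_{h \sim M[:t]}[h \in B_t^{(i)}] > \eps/(2T)$, then the probability that all $N$ i.i.d.\ samples in $\calS_t$ miss $B_t^{(i)}$ is at most $(1 - \eps/(2T))^N$. Taking a union bound over the at most $dT$ iterations of the while loop (by part (i)) and the $T-1$ choices of $t$, the probability that this ``missed a heavy bad set'' event ever occurs is at most $dT \cdot T \cdot (1-\eps/(2T))^N = dT^2(1-\eps/(2T))^N$. On the complement event, for the terminating iteration $i^\star$ we must have $\Pr_{h \sim M[:t]}[B_t^{(i^\star)}] \leq \eps/(2T)$ for every $t$ (since the algorithm did terminate), which is exactly the coverage claim in \Cref{eq:rank-equality}.

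\textbf{Main obstacle.} The only delicate point is lining up the definition of the bad set $B_t^{(i)}$ with the precise rank-comparison performed inside the while loop, so that ``$\calS_t$ avoided $B_t^{(i)}$'' coincides with ``the algorithm did not add anything for this $t$.'' Once this bookkeeping matches the algorithm, and once we note that $\calH_t, \calF_t$ only grow across iterations (so the rank sum is monotone and hence bounded by $dT$ globally, not per iteration), both parts of the lemma follow from the two routine pieces above.
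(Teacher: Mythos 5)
Your proposal is correct and follows essentially the same route as the paper's proof: termination is bounded by noting that each non-terminating iteration strictly increases $\Rank(\calL_M(\calH_t,\calF_t))$ for some $t$, which is capped at $d$ per timestep by \Cref{fact:isan-implies-low-rank}, and the coverage guarantee follows from the same union bound over the at most $dT$ iterations and $T$ values of $t$, each contributing a $(1-\eps/(2T))^N$ failure probability. Your bookkeeping with the bad sets $B_t^{(i)}$ is a slightly more explicit packaging of the paper's argument, but the substance is identical.
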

\begin{proof}
Note that by induction, we always have $|\calH_t| - \Rank(\calL_{M}(\calH_t, \calF_t)) \in \{0,1 \} $ and $|\calF_t| - \Rank(\calL_{M}(\calH_t, \calF_t)) \in \{0,1 \} $ (the off-by-one is just because the initial matrix could have rank $0$). Note that since $M$ is generated by a time-varying ISAN, by Fact~\ref{fact:isan-implies-low-rank}, the rank is always at most $d$.  Also in each iteration of the while loop before termination, the sizes of $\calH_t, \calF_t$ increase by $1$ for at least one $t$.  Thus, the algorithm can execute at most $dT$ iterations of the while loop before termination.

Next, to prove the second part of the lemma, whenever the desired property doesn't hold for some $t \in [T-1]$, then with probability at least $1 - (1-\eps/(2T))^N$, the check in the while loop will fail and the flag $\textsf{Incomplete}$ will be set to $\textsf{True}$.  Thus, union bounding over every time we check the rank condition, the overall failure probability is at most $dT^2(1-\eps/(2T))^N$.  
\end{proof}

Now we can complete the proof of \Cref{thm:stealing}.
\begin{proof}[Proof of \Cref{thm:stealing}]
First, note that the ISAN we construct in Algorithm~\ref{alg:distillation-exact} is well defined because whenever Algorithm~\ref{alg:complete-span} terminates, we must have
\[
\Rank(\calL_{M}(\calH_t, \calF_t)) = \Rank(\calL_{M}(\calH_t \cup \calH_{t-1} \circ \Sigma, \calF_t \cup \Sigma \circ \calF_{t+1}))
\]
and thus the matrix $\wh{A_{z,t}}$ must exist because the rows of $\calL_{M}(\calH_t, \calF_t)$ must span the rows of $\calL_{M}(\calH_{t-1} \circ z, \calF_t)$.  Also note that all of the operations in  Algorithm~\ref{alg:distillation-exact} can be implemented in $\poly(d, |\Sigma|, T, 1/\eps)$ time and queries.

Now assuming that the  property in \Cref{lem:span-completeness} holds for the output of the $\textsf{CompleteSpan}$ step, we claim that we have $\TV(M, M') \leq 0.5\eps$.  For each $t$, let the good set $\calG_t \subset \Sigma^t$ be the subset of histories $h$ for which \eqref{eq:rank-equality} holds.

Now sample token by token according to the ISAN we learn with parameters $\wh{x_0}, \wh{A_{z,t}}, \wh{B_t}$.  At timestep $t  = 0$, by construction, we have
\[
\wh{x_0}^\top \calL_{M}(\calH_0, \calF_0) = \calL_{M}(\{\text{Null} \}, \calF_0) 
\]
and our current string is $\text{Null}$.  Now assume that at timestep $t$, we have sampled the $t$ tokens $z_{1:t}$ and the current hidden state $\wh{x_t}$ satisfies 
\[
\wh{x_t}^\top \calL_{M}(\calH_t, \calF_t) = \calL_{M}(z_{1:t}, \calF_t) \,. 
\]
Then since $\text{Null} \in \calF_t$, we get that $\text{softmax}(\wh{B_t}\wh{x_t})$ exactly samples the next token with the correct probabilities.  Let the next token be $z_{t+1}$.  If $z_{1:t} \in \calG_t$, then the above equality also implies 
\[
\wh{x_t}^\top \calL_{M}(\calH_t, z_{t+1} \circ \calF_{t+1}) = \calL_{M}(z_{1:t}, z_{t+1} \circ \calF_{t+1})
\]
which is the same as saying 
\[
\begin{split}
\calL_{M}(z_{1:t+1}, \calF_{t+1}) &= \wh{x_t}^\top \calL_{M}(\calH_t \circ z_{t+1},  \calF_{t+1}) \\ &= \wh{x_t}^\top A_{z_{t+1},t+1}^\top \calL_{M}(\calH_{t+1}, \calF_{t+1}) \\ &= \wh{x_{t+1}}^\top  \calL_{M}(\calH_{t+1}, \calF_{t+1}) \,.
\end{split}
\]
In other words, by induction, as long as all subsequences remain in the good sets $\calG_1, \dots , \calG_T$, then every token is sampled according to the correct conditional distribution, matching $M$.  By \Cref{lem:span-completeness} and the definition of the good sets, the probability we ever sample outside of a good set is at most $0.5\eps$.  The way we set $N$ ensures the conclusion of \Cref{lem:span-completeness} holds with probability at least $1 - 0.5\eps$ and thus the overall expected TV distance between $M'$ and $M$ is at most $\eps$, as desired.
\end{proof}

\subsection{Generalization Bounds}
\label{sec:lingen-generalization}

In this section, we show a generalization bound for when the $\textsf{LinGen}$ algorithm can guarantee high accuracy.  We assume that we have fixed the language model $M$, we have a target history $h_0$, and we will approximate $h_0$ as a linear combination of ``basis" histories $\calH = \{h_1, \dots, h_n \}$.  We then generate $k$ tokens according to $\textsf{LinGen}$ (Algorithm~\ref{alg:lin-gen}).

\Cref{lem:generalization} allows us to relate the linear regression error when we draw futures from an arbitrary distribution to the KL divergence between the $\textsf{LinGen}$ distribution and the ground truth distribution for continuations of $h_0$, as long as we have the matrix ordering inequality in \eqref{eq:matrix-coverage}.  The main point is that \eqref{eq:matrix-coverage} and \eqref{eq:reg-error} (the regression error) can be tested empirically from samples because of matrix concentration (see \Cref{fact:concentration}).  \Cref{lem:generalization} then shows that if these quantities are sufficiently small, then we can bound the KL divergence between the true distribution and the one obtained by generating according to  $\textsf{LinGen}$.

While our theoretical bounds are not tight enough to directly apply, we still believe this is a worthwhile conceptual point that generalization can be bounded in terms of some testable quantitative ``subspace overlap"  that is related to the subspace overlap for different distributions of futures $\calF, \calF'$ discussed in \Cref{sec:shared-experiments}.  We also discuss below how we expect tighter bounds to hold in practice due to better concentration than the ``worst-case" theoretical analysis.

\begin{lemma}\label{lem:generalization}
Define the distributions  $\calP_1$ and $\calP_2$ as follows:
\begin{itemize}
    \item $\calP_1$ is obtained by uniformly drawing $t \in \{0,1, \dots , k-1 \}$ and then sampling uniformly at random from $\Sigma^t$
    \item $\calP_2$ is obtained by uniformly drawing $t \in \{0,1, \dots , k-1 \}$ and then sampling a continuation to $h_0$ of length $t$, according to $M$
\end{itemize}
Let $\calH_0 = \calH \cup \{h_0 \}$.  Assume that for some parameters $\alpha, \gamma$,
\begin{equation}\label{eq:matrix-coverage}
\begin{split}
& \E_{f \sim \calP_2}\left[ \calL_{M}( \calH_0, \{f \}) \calL_{M}(\calH_0, \{f \})^\top \right] \\ & \preceq \alpha \E_{f \sim \calP_1} \left[ \calL_{M}(\calH_0, \{f \} ) \calL_{M}(\calH_0, \{f \})^\top \right] + \gamma I_{n+1} \,.
\end{split}
\end{equation}
Then for any vector $v$ such that 
\begin{equation}\label{eq:reg-error}
\E_{f \sim \calP_1 }\left[ \norm{ \calL_{M}(\{h_0\}, \{f \} ) - v^\top \calL_{M}( \calH, \{f \}) }^2\right] \leq \Delta \,,
\end{equation}
if we define
\begin{itemize}
    \item $\calP_{\textsf{LinGen}}$ is the distribution obtained by sampling from $\textsf{LinGen}(v, \calH, k)$ %
    \item $\calP_{\text{truth}}$ is the distribution of length $k$ continuations of $h_0$ according to $M$
\end{itemize}
then
\[
\KL(\calP_{\text{truth}} \| \calP_{\textsf{LinGen}}) \leq 2k \sqrt{\alpha \Delta + \gamma (1 + \norm{v}^2)}\,.
\]
\end{lemma}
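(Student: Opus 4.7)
The plan is to apply the chain rule of KL divergence across the $k$ generated tokens, bound each per-step KL by a squared $\ell_2$ logit distance (via Fact~\ref{fact:softmax-frobenius}), and then pass the resulting time-averaged quadratic form through the coverage assumption \eqref{eq:matrix-coverage} to express everything in terms of the regression residual $\Delta$ and the norm $\|v\|^2$.

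\textbf{Chain rule and per-step bound.} Writing $\calP_{\text{truth}}(z_{1:k}) = \prod_{t=1}^{k} M(z_t \mid h_0 \circ z_{1:t-1})$ and analogously for $\calP_{\text{LinGen}}$, the standard chain rule for KL gives
\begin{equation*}
\KL(\calP_{\text{truth}} \| \calP_{\text{LinGen}}) = \sum_{t=0}^{k-1} \E_{z_{1:t} \sim \calP_{\text{truth}}}\bigl[\KL\bigl(M(\cdot \mid h_0 \circ z_{1:t}) \,\|\, \softmax(v^\top \calL_M(\calH, \{z_{1:t}\}))\bigr)\bigr].
\end{equation*}
Since softmax is shift-invariant, $M(\cdot \mid h_0 \circ z_{1:t}) = \softmax(\calL_M(\{h_0\},\{z_{1:t}\}))$, so Fact~\ref{fact:softmax-frobenius} bounds each per-step KL by $\tfrac{1}{2}\|\calL_M(\{h_0\},\{z_{1:t}\}) - v^\top \calL_M(\calH, \{z_{1:t}\})\|_2^2$. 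The key algebraic device is to lift $v$ to $u = (1, -v^\top) \in \BR^{n+1}$, with its first coordinate associated to $h_0$ in $\calH_0 = \{h_0\} \cup \calH$; then the per-step error equals exactly the quadratic form $u^\top \calL_M(\calH_0,\{z_{1:t}\}) \calL_M(\calH_0,\{z_{1:t}\})^\top u$.

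\textbf{Change of measure via coverage, then regression.} The joint distribution of $(t, z_{1:t})$ obtained by drawing $t$ uniformly from $\{0,\dots,k-1\}$ and $z_{1:t}$ from the $\calP_{\text{truth}}$-marginal on length-$t$ prefixes is exactly $\calP_2$, so summing over $t$ gives
\begin{equation*}
\KL(\calP_{\text{truth}}\|\calP_{\text{LinGen}}) \leq \tfrac{k}{2}\, u^\top \E_{f\sim\calP_2}\!\left[\calL_M(\calH_0,\{f\})\calL_M(\calH_0,\{f\})^\top\right] u.
\end{equation*}
Plugging \eqref{eq:matrix-coverage} into this and noting $u^\top u = 1 + \|v\|^2$ upper-bounds the right-hand side by $\tfrac{k}{2}\bigl(\alpha\, \E_{\calP_1}[\|u^\top \calL_M(\calH_0,\{f\})\|^2] + \gamma(1+\|v\|^2)\bigr)$. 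By construction of $u$, the first expectation is precisely the regression residual appearing in \eqref{eq:reg-error} and is therefore $\leq \Delta$. This already delivers a linear-form bound $\tfrac{k}{2}(\alpha\Delta + \gamma(1+\|v\|^2))$ on $\KL$; the stated $2k\sqrt{\alpha\Delta + \gamma(1+\|v\|^2)}$ form is then obtained by instead summing per-step $\ell_2$ distances (using Pinsker and Fact~\ref{fact:softmax-frobenius}) and applying Cauchy--Schwarz across the $k$ time steps before squaring, which converts a $\sum_t \sqrt{\E_t[\cdot]}$ into $\sqrt{k \cdot \sum_t \E_t[\cdot]}$.

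\textbf{Main obstacle.} The subtle step is the change of measure: the per-step KL lives naturally against $\calP_2$ (the distribution of actual prefixes sampled from the true model starting at $h_0$, against which we have no direct analytic handle), while the regression residual $\Delta$ is only controlled under the uniform reference distribution $\calP_1$. Assumption~\eqref{eq:matrix-coverage} is precisely the one-directional operator-norm inequality that enables this transfer, and the lift to $\calH_0$ via $u = (1,-v^\top)$ is what ensures that the per-step logit error and the regression objective become the \emph{same} quadratic form in the \emph{same} second-moment matrix. The one technical care-point is to verify that this lifted matrix (with $\calH_0$, not just $\calH$) appears on both sides of the coverage inequality, which is exactly how the assumption is stated.
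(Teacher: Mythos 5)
Your skeleton is the same as the paper's: chain rule over the $k$ tokens, the lift $u=(1,-v)$ so that the per-step logit error becomes the quadratic form $u^\top \calL_M(\calH_0,\{f\})\calL_M(\calH_0,\{f\})^\top u$, the identification of the prefix distribution with $\calP_2$, and the transfer from $\calP_2$ to $\calP_1$ via \eqref{eq:matrix-coverage} followed by \eqref{eq:reg-error}. That part is right, and you correctly flag the change of measure as the crux. The gap is in the per-step KL bound and hence in the final form. Your main route uses \Cref{fact:softmax-frobenius}, i.e.\ $\KL \leq \tfrac12\|\Delta L\|_2^2$ per step, which yields $\KL(\calP_{\text{truth}}\|\calP_{\textsf{LinGen}}) \leq \tfrac{k}{2}\bigl(\alpha\Delta+\gamma(1+\|v\|^2)\bigr)$. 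This is a valid inequality but it does \emph{not} imply the lemma: $\tfrac{k}{2}x \leq 2k\sqrt{x}$ only when $x\le 16$, so for large residuals your bound is weaker than the claimed one and the two are incomparable in general.

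Your proposed bridge to the stated $2k\sqrt{\cdot}$ form does not work as described: Pinsker's inequality upper-bounds TV by $\sqrt{\KL/2}$, which is the wrong direction for upper-bounding the per-step KL by an \emph{un-squared} $\ell_2$ logit distance, and the chain rule requires summing per-step KLs, not per-step TVs. What the paper does instead is bound each per-step KL \emph{linearly} in the logit error via
\begin{equation*}
\KL\bigl(\Pr_M[\cdot\mid h_0\circ f]\,\big\|\,\Pr_{\textsf{LinGen}}[\cdot\mid f]\bigr) \;\leq\; \max_{z}\,\log\frac{\Pr_M[z\mid h_0\circ f]}{\Pr_{\textsf{LinGen}}[z\mid f]} \;\leq\; 2\,\bigl\|\calL_M(\{h_0\},\{f\}) - v^\top\calL_M(\calH,\{f\})\bigr\|_2,
\end{equation*}
using that the log-sum-exp normalizer is $1$-Lipschitz in $\ell_\infty$. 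With this linear bound, Jensen/Cauchy--Schwarz gives $\E_{f\sim\calP_2}[\|\Delta L\|_2] \leq \sqrt{\E_{f\sim\calP_2}[\|\Delta L\|_2^2]}$, and then \eqref{eq:matrix-coverage} and \eqref{eq:reg-error} produce exactly $2k\sqrt{\alpha\Delta+\gamma(1+\|v\|^2)}$. So you need to replace your per-step estimate (or supply the $\max_z$ log-ratio argument) to recover the stated bound; everything else in your argument then goes through.
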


\begin{proof}
Let $u$ be the $n+1$-dimensional vector $u \defeq (1,v)$. Taking \eqref{eq:matrix-coverage} and taking the inner product of both sides with $uu^\top$ and then applying \eqref{eq:reg-error}, we have
\begin{equation}\label{eq:reg-shifted-bound}
\E_{f \sim \calP_2}\left[\norm{\calL_{M}( \{h_0\}, \{f \}) - v^\top \calL_{M}(\calH, \{f \})}^2\right] \leq \alpha \Delta + \gamma (1 + \norm{v}^2) \,.
\end{equation}
Now assume that $\textsf{LinGen}$ has sampled a sequence $f = z_1z_2 \dots z_t$ so far.  Then by the definition of $\textsf{LinGen}$  we can bound  
\begin{equation}\label{eq:token-kl-bound}
\begin{split}
\KL\left( \Pr_{M}[\cdot | h_0 \circ f]  \Bigg\| \Pr_{\textsf{LinGen}(v,\calH, k)}[\cdot | f] \right) & \leq  \max_{z}  \log  \Pr_{M}[z | h_0 \circ f]  - \log \Pr_{\textsf{LinGen}(v,\calH, k)}[z | f]  \\ &\leq 2 \max_{z} \left\lvert L_{M}[z | h_0 \circ f]  - L_{\textsf{LinGen}(v,\calH, k)}[z | f] \right\rvert \\ &\leq 2\norm{\calL_{M}(\{h_0\}, \{f \}) - v^\top \calL_{M}(\calH, \{f \})} 
\end{split}
\end{equation}
where the arguments on the LHS are distributions over the token space $\Sigma$ given by the next-token probabilities in the two different sampling procedures.  Now using the KL chain rule, we get 
\[
\begin{split}
\KL(\calP_{\text{truth}} \| \calP_{\textsf{LinGen}}) & = k\E_{f \sim \calP_2} \left[ \KL\left( \Pr_{M}[\cdot | h_0 \circ f]  \Bigg\| \Pr_{\textsf{LinGen}(v,\calH, k)}[\cdot | f] \right) \right] \\ & \leq  2k \sqrt{\alpha \Delta + \gamma (1 + \norm{v}^2)}
\end{split}
\]
where the last step follows by combining \eqref{eq:reg-shifted-bound}, \eqref{eq:token-kl-bound} and Cauchy-Schwarz.  This completes the proof.
\end{proof}

\begin{remark}[Improving the Bound by a Factor of $|\Sigma|$]
The above proof can be ``lossy", costing an extra factor of $\sqrt{|\Sigma|}$ when translating from KL to $L^2$ distance in \eqref{eq:token-kl-bound}.  This factor can be saved, provided that the errors in the logits are roughly uniformly distributed over the token space \---- this is what we observe in practice.
\end{remark}

We can also flip the arguments of the KL divergence to match the quantity we measured empirically in \Cref{sec:exp-generation}, provided that the logits of the true language model are bounded.

\begin{corollary}
Assume that $|L_{M}[ z | h]| \leq C$ for any history $h \in \Sigma^*$ and token $z \in \Sigma$.  Then under the same conditions as \Cref{lem:generalization}, we also have
\[
\KL(\calP_{\textsf{LinGen}} \| \calP_{\text{truth}}) \leq 2(1 + k( \log | \Sigma| + 2C))\sqrt{k} \left(\alpha \Delta + \gamma (1 + \norm{v}^2)\right)^{0.25}  \,.
\]
\end{corollary}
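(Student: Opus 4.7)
The plan is a two-step reduction: first convert the forward-KL bound from \Cref{lem:generalization} into a total variation bound via Pinsker, then convert the total variation bound back into the reversed KL direction using the boundedness assumption on $L_M$.

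First I would apply Pinsker's inequality to the conclusion of \Cref{lem:generalization} to get
\[
\TV(\calP_{\text{truth}}, \calP_{\textsf{LinGen}}) \leq \sqrt{\tfrac{1}{2} \KL(\calP_{\text{truth}} \| \calP_{\textsf{LinGen}})} \leq \sqrt{k}\, \bigl(\alpha \Delta + \gamma(1+\|v\|^2)\bigr)^{1/4}.
\]
This turns the forward bound $O(k\cdot(\cdot)^{1/2})$ into a TV bound $O(\sqrt{k}\cdot(\cdot)^{1/4})$, which matches the shape of the target.

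Next I would observe that the assumption $|L_M[z|h]| \leq C$ implies $\Pr_M[z|h] \in [e^{-2C}/|\Sigma|, e^{2C}/|\Sigma|]$, and hence for any length-$k$ trajectory $x$, $|\log \calP_{\text{truth}}(x)| \leq k(\log|\Sigma|+2C)$. In particular $-\log \calP_{\text{truth}}$ is a uniformly bounded function on $\Sigma^k$. Now I would write
\[
\KL(\calP_{\textsf{LinGen}} \| \calP_{\text{truth}}) = \E_{\calP_{\textsf{LinGen}}}[-\log \calP_{\text{truth}}] - H(\calP_{\textsf{LinGen}}),
\]
and, using $\E_Q[-\log P] = H(P) + \E_{Q-P}[-\log P]$, decompose this as
\[
\KL(\calP_{\textsf{LinGen}} \| \calP_{\text{truth}}) = \bigl(H(\calP_{\text{truth}}) - H(\calP_{\textsf{LinGen}})\bigr) + \sum_x (\calP_{\textsf{LinGen}}(x) - \calP_{\text{truth}}(x))(-\log \calP_{\text{truth}}(x)).
\]
The second term is bounded by $2\|\log \calP_{\text{truth}}\|_\infty \cdot \TV \leq 2k(\log|\Sigma|+2C)\,\TV$ via the standard coupling inequality $|\E_P[f]-\E_Q[f]| \leq 2\|f\|_\infty\TV(P,Q)$. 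For the entropy gap $H(\calP_{\text{truth}})-H(\calP_{\textsf{LinGen}})$ I would invoke Fannes's continuity-of-entropy inequality, $|H(P)-H(Q)| \leq \TV \log(|\Sigma|^k-1) + h(\TV)$, contributing a further $k\log|\Sigma|\cdot\TV + O(1)$. Summing these pieces yields a prefactor of order $1 + k(\log|\Sigma|+2C)$ multiplying $\TV$, and substituting the Pinsker bound from the first step produces the stated $\sqrt{k}\bigl(\alpha\Delta+\gamma(1+\|v\|^2)\bigr)^{1/4}$ factor.

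The main obstacle is bookkeeping the constants tightly enough to collapse into the exact prefactor $2(1 + k(\log|\Sigma|+2C))$ appearing in the corollary \---- in particular, absorbing the additive binary-entropy correction $h(\TV) \leq \log 2$ coming from entropy continuity into the ``$1+$'' term. This can be handled by case-splitting on $\TV$: in the regime $\TV \geq 1$ the corollary is trivial, since $\KL(\calP_{\textsf{LinGen}}\|\calP_{\text{truth}}) \leq \E_{\calP_{\textsf{LinGen}}}[-\log \calP_{\text{truth}}] \leq k(\log|\Sigma|+2C)$ holds directly from the logit bound and $-H(\calP_{\textsf{LinGen}})\leq 0$; in the small-$\TV$ regime the $h(\TV)$ correction is dominated by the linear-in-$\TV$ terms, and the remaining arithmetic is routine.
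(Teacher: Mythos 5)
Your first step \---- Pinsker applied to the conclusion of \Cref{lem:generalization}, giving $\TV(\calP_{\text{truth}},\calP_{\textsf{LinGen}}) \leq \sqrt{k}\,(\alpha\Delta+\gamma(1+\norm{v}^2))^{1/4}$ \---- is correct and plays the same role Pinsker plays in the paper's argument. The gap is in the second step. Splitting $\KL(\calP_{\textsf{LinGen}}\|\calP_{\text{truth}})$ into an entropy gap plus a cross term and invoking Fannes does not close to the stated inequality: the cross term already costs $2k(\log|\Sigma|+2C)\cdot\TV$, and Fannes adds a further $k\log|\Sigma|\cdot\TV + h(\TV)$, so your prefactor on $\TV$ is at least $3k\log|\Sigma|+4kC$, strictly larger than the claimed $2+2k\log|\Sigma|+4kC$ whenever $k\log|\Sigma|>2$, i.e.\ essentially always. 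Worse, the binary-entropy correction cannot be absorbed as you propose: $h(t)\approx t\log(1/t)$ as $t\to 0$, so it is \emph{not} dominated by the linear-in-$\TV$ terms in the small-$\TV$ regime, and your other case, ``$\TV\geq 1$,'' is vacuous because total variation never exceeds $1$ (the genuinely trivial regime is when $\sqrt{k}(\alpha\Delta+\gamma(1+\norm{v}^2))^{1/4}$ is of constant order, a split on the right-hand side, not on $\TV$). As written, your route yields only a weaker bound, with a larger constant and an extra logarithmic factor coming from $h(\TV)\leq \TV(1+\log(1/\TV))$, not the corollary as stated.

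The paper sidesteps all of this with Fact~\ref{fact:flipping-kl}: restricting the sum defining $\KL(Q\|P)$ to indices with $q_i>p_i$ and using convexity of $x\log x$ gives $q_i(\log q_i-\log p_i)\leq(q_i-p_i)(1+\log q_i-\log p_i)\leq(1+C')(q_i-p_i)$ with $C'=k(\log|\Sigma|+2C)$, hence $\KL(Q\|P)\leq(1+C')\sum_i|q_i-p_i|\leq(1+C')\sqrt{2\KL(P\|Q)}$. This single estimate handles your entropy gap and cross term together, produces no $h(\cdot)$ correction, and combined with \Cref{lem:generalization} yields exactly the stated constant $2(1+k(\log|\Sigma|+2C))\sqrt{k}$. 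Replacing your Fannes-plus-cross-term bookkeeping with this convexity estimate repairs the proof; the rest of your setup (the bound $-\log\Pr_M[z\mid h]\leq\log|\Sigma|+2C$ and its propagation to length-$k$ sequences) is fine.
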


\begin{proof}
Note that the assumption implies
\[
-\log \Pr_{M} [ z | h] \leq \log | \Sigma| + 2C
\]
for any $h \in \Sigma^*, z \in \Sigma$.  Now we can simply combine \Cref{fact:flipping-kl} and \Cref{lem:generalization} to get
\[
\begin{split}
\KL(\calP_{\textsf{LinGen}} \| \calP_{\text{truth}})  &\leq (1 + k( \log | \Sigma| + 2C)) \sqrt{2\KL( \calP_{\text{truth}} \| \calP_{\textsf{LinGen}} )} \\ &\leq 2(1 + k( \log | \Sigma| + 2C))\sqrt{k} \left(\alpha \Delta + \gamma (1 + \norm{v}^2)\right)^{0.25} 
\end{split}
\]
as desired.
\end{proof}

Finally, we also state concentration bounds for being able to test \eqref{eq:matrix-coverage} and \eqref{eq:reg-error} from samples.
\begin{fact}\label{fact:concentration}
Let $\calH \subset \Sigma^*$ with $|\calH| = n$ and let $\calP$ be any distribution over sequences in $\Sigma^*$.  Assume that the language model $M$ satisfies the bound $|L_{M}[\cdot | h]| \leq C$ for any history $h \in \Sigma^*$.  Then with $1 - \delta$ probability, if we let $\calF$ be a subset of $S \geq |\Sigma|^2 C^4 n^2 (1/\eps)^2 \log(1/\delta)$ samples drawn from $\calP$, then
\[
\norm{\frac{1}{|\calF|} \calL_{M}(\calH, \calF) \calL_{M}(\calH, \calF) ^\top - \E_{f \sim \calP}\left[\calL_{M}(\calH, \{f \}) \calL_{M}(\calH, \{f \}) ^\top \right]} \leq \eps  \\
\]
and consequently, for any $v \in \R^{n}$,
\[
\left\lvert \frac{1}{|\calF|} \norm{v^\top \calL_{M}(\calH, \calF)}^2 - \E_{f \sim \calP}\left[\norm{v^\top\calL_{M}(\calH, \{ f \})}^2\right] \right\rvert \leq \eps \norm{v}^2 \,.
\]
\end{fact}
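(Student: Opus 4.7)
The key observation is that $\frac{1}{|\calF|}\calL_{M}(\calH, \calF)\calL_{M}(\calH, \calF)^\top$ is exactly the empirical mean of $|\calF|$ i.i.d.~random PSD matrices
\[
X_f \defeq \calL_{M}(\calH, \{f\})\calL_{M}(\calH, \{f\})^\top \in \R^{n \times n}, \qquad f \sim \calP,
\]
whose population mean is the matrix on the right-hand side of the first inequality. The first statement is therefore a direct matrix concentration claim, and I will prove it by applying the matrix Bernstein inequality.

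First I would establish the deterministic bound on each summand. Each matrix $\calL_{M}(\calH, \{f\})$ has dimensions $n \times |\Sigma|$ with entries bounded by $C$ (by the assumption $|L_M[\cdot | h]| \leq C$ combined with the definition of the extended logit matrix, possibly up to a factor of $2$ from mean-centering). Hence
\[
\norm{X_f}_{\mathrm{op}} \leq \norm{\calL_{M}(\calH,\{f\})}_F^2 \leq C^2 n |\Sigma|,
\]
which immediately gives $R \defeq \sup_f \norm{X_f - \E X_f}_{\mathrm{op}} \leq 2C^2 n |\Sigma|$. The variance proxy is bounded by
\[
\Bigl\lVert\E_{f \sim \calP}[X_f^2]\Bigr\rVert_{\mathrm{op}} \leq R \cdot \norm{\E X_f}_{\mathrm{op}} \leq R^2,
\]
so we may take $\sigma^2 \leq 4 C^4 n^2 |\Sigma|^2$.

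Now I would invoke matrix Bernstein: for an $n$-dimensional symmetric i.i.d.~average, the probability that the operator-norm deviation exceeds $\eps$ is at most $2n \exp\bigl(-\tfrac{|\calF|\eps^2/2}{\sigma^2 + R\eps/3}\bigr)$. Plugging in $R,\sigma^2 \lesssim C^2 n |\Sigma|$-scale bounds and solving for $|\calF|$ gives the required sample size $S \gtrsim C^4 n^2 |\Sigma|^2 \eps^{-2} \log(n/\delta)$, matching the statement (the paper's $\log(1/\delta)$ absorbs the $\log n$ factor into the polynomial in $n$, or one can use matrix Hoeffding if a cleaner form is desired). This establishes the first inequality.

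For the second (scalar) inequality, it is a consequence of the first via the variational definition of the operator norm: for any $v \in \R^n$,
\[
\frac{1}{|\calF|}\norm{v^\top \calL_{M}(\calH, \calF)}^2 - \E_{f \sim \calP}\bigl[\norm{v^\top \calL_{M}(\calH, \{f\})}^2\bigr] = v^\top\bigl(\widehat{\Sigma} - \Sigma\bigr)v,
\]
where $\widehat{\Sigma}, \Sigma$ denote the empirical and population second-moment matrices. Bounding the RHS by $\norm{v}^2 \cdot \norm{\widehat{\Sigma}-\Sigma}_{\mathrm{op}} \leq \eps \norm{v}^2$ on the event from the first part finishes the proof. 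The main (minor) obstacle is simply the bookkeeping to reconcile the $\log n$ factor in matrix Bernstein with the paper's $\log(1/\delta)$; the cleanest fix is to slightly inflate the polynomial in $n,|\Sigma|, C$, or to use matrix Hoeffding (which yields the same qualitative dependence on $R$ but only $\log(n/\delta)$ either way) and absorb $\log n$ into the polynomial prefactor.
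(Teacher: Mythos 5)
Your proof follows the same route as the paper's: write the empirical second-moment matrix as a sum of i.i.d.\ terms $\calL_{M}(\calH,\{f\})\calL_{M}(\calH,\{f\})^\top$, bound each summand's operator norm via $\norm{\calL_{M}(\calH,\{f\})} \leq C\sqrt{n|\Sigma|}$, apply matrix Bernstein, and obtain the second inequality by taking the quadratic form with $v$. The proposal is correct (and your remark about the residual $\log n$ factor from matrix Bernstein is a fair, minor point that the paper's terse proof glosses over).
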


\begin{proof}
For the first inequality, note that
\[
\calL_{M}(\calH, \calF) \calL_{M}(\calH, \calF) ^\top = \sum_{f \in \calF} \calL_{M}(\calH, \{f \}) \calL_{M}(\calH, \{f \}) ^\top \,.
\]
Also note $\norm{\calL_{M}(\calH, \{f \})} \leq C \sqrt{n|\Sigma|}$ by assumption, so Matrix Bernstein immediately gives the first inequality.  The second inequality follows directly from the first by taking the inner product of the LHS with $vv^\top$.
\end{proof}

\begin{remark}
As before, we expect much faster concentration in practice e.g. we used a worst case bound on the individual summands when we applied matrix Bernstein, but a much tighter bound, that saves the dependence on $|\Sigma|$, seems to hold empirically because the columns of $\calL_{M}(\calH, \{f \}) $ are generally not too correlated with each other.
\end{remark}

\begin{fact}\label{fact:flipping-kl}
Let $P,Q$ be two discrete probability distributions on $s$ elements given by $P = \{p_1, \dots , p_s \}$ and $Q = \{q_1, \dots , q_s \}$.  Assume that $-\log p_i \geq C$ for all $i \in [s]$.  Then
\[
\KL(Q \| P) \leq (1 + C)\sqrt{2\KL(P \| Q)} \,.
\]
\end{fact}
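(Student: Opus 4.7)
The plan is a two-stage reduction: first upper bound $\KL(Q\|P)$ by $(1+C)\,\mathrm{TV}(P,Q)$ using the hypothesis on $P$, and then pass back to $\KL(P\|Q)$ via Pinsker's inequality.

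To execute the first stage, I start from $\KL(Q\|P) = \sum_i q_i \log(q_i/p_i)$ and drop the (nonpositive) contributions from indices with $q_i \le p_i$, giving
\[
\KL(Q\|P) \;\le\; \sum_{i:\,q_i > p_i} q_i \log(q_i/p_i).
\]
I then split each remaining summand as $q_i \log(q_i/p_i) = (q_i - p_i)\log(q_i/p_i) + p_i\log(q_i/p_i)$. The $p_i\log(q_i/p_i)$ piece is bounded by $q_i - p_i$ via $\log x \le x - 1$. The $(q_i - p_i)\log(q_i/p_i)$ piece is the one that couples to the hypothesis: since $q_i \le 1$, we have $\log(q_i/p_i) \le -\log p_i$, and the hypothesis on the $p_i$ then controls $-\log p_i$ by the constant $C$, so this piece is at most $C(q_i - p_i)$. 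Summing over $i$ with $q_i > p_i$ and recognizing $\sum_{i:\,q_i > p_i}(q_i - p_i) = \mathrm{TV}(P,Q)$ yields
\[
\KL(Q\|P) \;\le\; (1+C)\,\mathrm{TV}(P,Q).
\]

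In the second stage, I apply Pinsker's inequality $\mathrm{TV}(P,Q) \le \sqrt{\tfrac{1}{2}\KL(P\|Q)}$, which is itself at most $\sqrt{2\KL(P\|Q)}$ with room to spare; combining this with the first-stage bound produces the claimed inequality. The main delicate point of the argument is the estimate $\log(q_i/p_i) \le -\log p_i$ followed by the use of the hypothesis on the $p_i$ to introduce the constant $C$ in the positive-contribution terms; the rest of the proof is just algebraic rearrangement and a standard application of Pinsker's inequality.
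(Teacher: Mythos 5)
Your proof is correct and follows essentially the same route as the paper's: both drop the indices with $q_i \le p_i$, bound each remaining term by $(q_i-p_i)(\log q_i + 1 - \log p_i) \le (1+C)(q_i-p_i)$ (your split plus $\log x \le x-1$ gives exactly the estimate the paper obtains from convexity of $x\log x$), and then conclude via Pinsker's inequality applied to $\KL(P\|Q)$. Note that, just as in the paper's own proof, you use the hypothesis in the form $-\log p_i \le C$; the ``$\geq$'' in the stated fact is evidently a typo.
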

\begin{proof}
We can write
\[
\begin{split}
\KL(Q \| P) &= \sum_{i \in [s]} q_i (\log q_i - \log p_i) \\ &\leq \sum_{i \in [s], q_i > p_i} q_i (\log q_i - \log p_i) \\ &\leq  \sum_{i \in [s], q_i > p_i} (q_i - p_i) (\log q_i + 1 - \log p_i)  \\ & \leq (1 + C) \sum_{i \in [s]} |q_i - p_i|
\\ & \leq (1 + C)\sqrt{2\KL(P \| Q)} \,.
\end{split}
\]
Note that for the second inequality we used the convexity of $x \log x$ and the final step follows from Pinsker's inequality.
\end{proof}

\iclr{
  \section{Future Directions}\label{app:conclusion}

Our work motivates numerous directions for future work:
}

\iclr{\section{Statement on Use of LLMs}

We used LLMs to help us write and refine certain fragments of code e.g. we would provide a description of a function we wanted and ask an LLM to help us implement it.  We also used LLMs to help us find related work and references and to help polish the writing after we had a draft. }

\end{document}